\definecolor{Darkblue}{rgb}{0,0,0.4}
\definecolor{Brown}{cmyk}{0,0.61,1.,0.60}
\definecolor{Purple}{cmyk}{0.45,0.86,0,0}
\newtheorem{prob}{Problem}
\newtheorem{theo}{Theorem}[section]
\newtheorem{lemma}[theo]{Lemma}
\newtheorem{prop}[theo]{Proposition}
\newtheorem{cor}[theo]{Corollary}
\newtheorem{defi}[theo]{Definition}
\newtheorem{rem}[theo]{Remark}
\newenvironment{proofof}[1]{\begin{proof}[Proof of #1]}{\end{proof}}
\newcommand{\topic}[1]{\vspace{0.2cm}\noindent{\bf {#1}:}}
\newcommand{\R}{\mathbb{R}}
\newcommand{\PR}{\mathbb{R}^+}
\newcommand{\Ex}{\mathbb{E}}
\newcommand{\betw}{\ |\ }
\newcommand{\distr}{\mathcal{D}}
\newcommand{\event}{\mathcal{E}}
\newcommand{\amean}[1]{\mu_{#1}}
\newcommand{\hamean}[1]{\hat{\mu}_{#1}}
\newcommand{\Bin}{\mathsf{Bin}}
\newcommand{\NegBer}{\mathsf{NegBin}}
\newcommand{\matroid}{\mathcal{M}}
\newcommand{\matband}{\mathcal{S}}
\newcommand{\indepfam}{\mathcal{I}}
\newcommand{\rank}{\mathrm{rank}}
\newcommand{\newcostfunc}[2]{\mu_{#1,#2}}
\newcommand{\UNIFORMSAMPLING}{\textsf{UniformSample}\xspace}
\newcommand{\NAIVE}{\textsf{Na\"ive-I}\xspace}
\newcommand{\NAIVETWO}{\textsf{Na\"ive-II}\xspace}
\newcommand{\bestkarm}{\textsc{Best-$k$-Arm}\xspace}
\newcommand{\bestarm}{\textsc{Best-$1$-Arm}\xspace}
\newcommand{\EPSMEANOPT}{average-$\epsilon$-optimal}
\newcommand{\Gap}[1]{\Delta_{[#1]}}
\newcommand{\OPT}{\mathsf{OPT}}
\newcommand{\OPTVAL}{\mathrm{val}}
\newcommand{\ELIMINATION}{\textsf{Elimination}}
\newcommand{\explorek}{\textsc{Explore}-$k$\xspace}
\newcommand{\matroidbandit}{\textsc{Best-Basis}\xspace}
\newcommand{\matroidbanditexact}{\textsc{Exact-Basis}\xspace}
\newcommand{\matroidbanditstrong}{\textsc{PAC-Basis}\xspace}
\newcommand{\matroidbanditAvg}{\textsc{PAC-Basis-Avg}\xspace}
\newcommand{\RECURPRUN}{\textsf{PAC-SamplePrune}\xspace}
\newcommand{\RECELIMI}{\textsf{AvgPAC-RecurElim}\xspace}
\newcommand{\EXPGAPMATROID}{\textsf{Exact-ExpGap}\xspace}
\newcommand{\eat}[1]{}
\newcommand{\pvalue}{0.01}
\newcommand{\jian}[1]{{\red Jian: #1}}
\newcommand{\lijie}[1]{{\magenta Lijie: #1}}
\newcommand{\initOneLiners}{%
    \setlength{\itemsep}{0pt}
    \setlength{\parsep }{0pt}
    \setlength{\topsep }{0pt}
}
\newenvironment{OneLiners}[1][\ensuremath{\bullet}]
    {\begin{list}
        {#1}
        {\initOneLiners}}
    {\end{list}}
\title{Pure Exploration of Multi-armed Bandit Under Matroid Constraints\thanks{Accepted for presentation at Conference on
		Learning Theory (COLT) 2016}}
\author{Lijie Chen\thanks{ Institute for Interdisciplinary Information Sciences (IIIS), Tsinghua University, Beijing, China.
	Research supported in part by the National Basic
	Research Program of China grants 2015CB358700,
	2011CBA00300, 2011CBA00301, and the National NSFC
	grants 61033001, 61361136003.
		}
	 \and Anupam Gupta\thanks{Department of Computer Science,
	 		Carnegie Mellon University,
	 		Pittsburgh, USA. Research partly supported by NSF awards CCF-1016799 and CCF-1319811.}	 	 
	 \and Jian Li\footnotemark[2]	 	
	 	}
\begin{document}
		\maketitle 	

\begin{abstract}
	We study the pure exploration problem subject to a matroid constraint	(\matroidbandit) in a stochastic multi-armed bandit game.
	In a \matroidbandit\ instance, we are given $n$ stochastic arms with unknown reward distributions, as well as a matroid $\matroid$ over the arms.
	Let the weight of an arm be the mean of its reward distribution.
	Our goal is to identify a basis of $\matroid$ with the maximum 
	total weight, using as few samples as possible.

	The problem is a significant generalization of the best arm 
	identification problem and the top-$k$ arm identification problem, which have attracted significant attentions in 
	recent years. 
	We study both the exact and PAC versions of \matroidbandit, and provide algorithms with nearly-optimal sample complexities for these versions.
	Our results generalize and/or improve on several previous 
	results for the top-$k$ arm identification problem and the 
	combinatorial pure exploration problem when the combinatorial
	constraint is a matroid.
\end{abstract}


	\section{Introduction}
	\label{sec:intro}
	
	The stochastic multi-armed bandit is a classical model for characterizing the exploration-exploitation tradeoff in many decision-making problems in stochastic environments.
	The popular objectives include maximizing the cumulative sum of rewards, or minimizing the cumulative regret
	(see e.g.,~\cite{cesa2006prediction,bubeck2012regret}). 
	However, in many application domains,
	the exploration phase and the evaluation phase are separated.
	The decision-maker can perform a \emph{pure-exploration phase}
	to identify an optimal (or nearly optimal) solution,
	and then keep exploiting this solution.
	Such problems arise in application domains
	such as medical trials \cite{robbins1985some,audibert2010best},
	communication network \cite{audibert2010best},
	crowdsourcing \cite{zhou2014optimal,cao2015top}.
	In particular, 
	the problem of identifying the single best arm
	in a stochastic bandit game has been
	has received considerable attention in 
	recent years~\cite{audibert2010best,even2006action,mannor2004sample,jamieson2014lil,karnin2013almost,chen2015optimal}.
	The generalization to identifying the top-$k$ arms
	has also been
	studied extensively~\cite{gabillon2012best,kalyanakrishnan2012pac,kaufmann2013information,kaufmann2014complexity,zhou2014optimal,cao2015top}.
	Since these problems are closely related to the problem 
	we study in the paper, we formally define it as follows.
	
	\begin{prob}
		(\bestkarm)
		There are $n$ unknown distributions $\distr_1,\distr_2,\dotsc,\distr_n$, all supported on $[0,1]$. 
		Let the mean of $\distr_i$ be $\mu_i$. 
		At each step we choose a distribution and get an i.i.d.\ sample from the distribution. 
		Our goal is to find the $k$ distributions with the largest means (exactly or approximately), 
		with probability at least $1-\delta$, using as few samples as possible.
	\end{prob}
        The distributions above are also called {\em arms} in the multi-armed bandit literature. 	
	We denote the $k^{th}$ largest mean by $\mu_{[k]}$.
	In addition, we assume $\mu_{[k]}$ and $\mu_{[k+1]}$ are different (so the optimal top-$k$ answer is unique).

	In certain applications such as online ad allocations,
	there is a natural combinatorial constraint over the set of arms,
	and	we can only choose a subset of arms subject to the given
	constraint (\bestkarm\ simply involves a cardinality constraint).
	Motivated by such applications, 
	Chen et al.~\cite{chen2014combinatorial}
	introduced the {\em combinatrial pure exploration} problem.
	They considered the general setting with arbitrary combinatorial constraint, and propose several algorithms. 
	In this paper, we consider the same problem under a {\em matroid} constraint,
	one	of the most popular combinatorial constraint. 
	The matroid constraint was also discuss in length
	in \cite{chen2014combinatorial}.	
	
	The notion of matroid (see Section~\ref{sec:prel}
	for the definition) is an abstraction of many combinatorial
	structures, including the sets of linearly independent vectors
	in a given set of vectors, the sets of spanning forests in an 
	undirected graph and many others. 
	We note that \bestkarm\ is a special case
	of a matroid constraint, since all subsets of size of at most $k$ form a {\em uniform matroid}.  	 
	Now, we formally define the {\em matroid 
	pure exploration bandit problem} as follows.
	
	\begin{defi}\label{defi:matroid-bandit-instance}
		(\matroidbandit)
		In a \matroidbandit\ instance $\matband=(S,\matroid)$,
		we are given a set $S$ of $n$ arms.
		Each arm $a \in S$ is associated with 
		an unknown reward distribution $\distr_a$, supported on $[0,1]$, with mean $\mu_a$ (which is unknown as well). 
		Without loss of generality, we assume all arms have distinct means.
		
		We are also given a matroid $\matroid=(S,\indepfam)$ with ground set identified with the set $S$ of arms.
		The weight function 
		$\mu: S \to \PR$ simply sets the weight of $a$ to be the
                mean of $\distr_a$; i.e., 
		$\mu(a)=\amean{a}$ for all $a\in S$. The weights are
                initially unknown, and are only learned by sampling arms.
		Our goal is to find a basis (a.k.a.\ a maximal independent set) 
		of the matroid with 
		the maximum total weight/cost (exactly or approximately), 
		with probability at least $1-\delta$, 
		using as few samples as possible.
	\end{defi}
	
	Besides including \bestkarm as a special case, the
        \matroidbandit\ problem also captures the following natural
        problems, motivated by various applications.
        \begin{enumerate}
        \item Suppose we have $m$ disjoint groups $G_1,\ldots, G_m$ of
          arms, and we would like to pick the best $k_i$ arms from group
          $G_i$ (where $k_i$s are given integers). This is exactly the
          best-basis problem for a \emph{partition matroid}.  Note that
          PAC version of the problem cannot be modeled as a disjoint collection
          of best-k-problems.

          The special case where $k_i=1$ has been studied in
          \cite{gabillon2011multi,bubeck2012multiple} (under the fixed
          budget setting). They are motivated by a clinical problem with
          $m$ subpopulations, where one would like to decide the best
          $k_i$ treatments from the options available for subjects from
          each subpopulation.
			
        \item Beside the above constraints for the groups, we may have
          an additional global constraint on the total number of arms we
          can choose. This is a special case of a \emph{laminar matroid}.
			
        \item An application mentioned in \cite{chen2014combinatorial}
          is the following. Consider a network where the delay of the
          links are stochastic.  A network routing system wants to build
          a minimum spanning tree to connect all nodes, where the weight
          of each edges are expected delay of that link.  A spanning
          tree is a basis in a \emph{graphical matroid}.
			
        \item Consider a set of workers and a set of tasks. Each worker
          is able to do only a subset of tasks (which defines a
          worker-task bipartite graph). Each worker must be assigned to
          one task (so we need to build a matching between the workers
          and the tasks) and the reward of a task is stochastic. We
          would like to identify the set of tasks that can be completed
          by the set of workers and have maximum total reward. This
          combinatorial structure (over the subsets of tasks) is a
          \emph{transversal matroid}. This problem (or variants) may find
          applications in crowdsourcing or online advertisement.
        \end{enumerate}

	There are two natural formulations of the \matroidbandit\ problem: in one, we need to identify the unique
	optimal basis with a certain confidence, and in some others we can settle for an approximate optimal basis (the PAC setting).
	We now formally define these problems, and present our results.
	


	\subsection{Identifying the Exact Optimal basis}
        \label{sec:exact-basis}
	
	\begin{defi}(\matroidbanditexact) Given a \matroidbandit\
          instance $\matband=(S,\matroid)$ and a confidence level
          $\delta > 0$, the goal is to output the optimal basis of
          $\matroid$ (one that maximizes $\sum_{a\in I}\mu_a$)
          with probability at least $1-\delta$, using as few samples as possible.
	\end{defi}
	Without loss of generality,
	assume that matroid $\matroid$ has no isolated elements
	(i.e., elements that are included in every basis) and no loops
        (i.e., elements that belong to no basis),
	since we can always include or ignore them without affecting the
        solution.
	We use $\OPT(\matroid)$ to denote the optimal basis 
	(as well as the optimal total weight) for matroid $\matroid$.
	For a subset of elements $F\subseteq S$, 
	let $\matroid_F$ denote the restriction of $\matroid$
	to $F$, 
	and 
	$\matroid_{/F}$ denote the contraction of $\matroid$
	by $F$ (see Definition~\ref{defi:matroid-rescon}).
	Note that $\OPT(\matroid_{/\{e\}}) + \mu(e)$ is the optimal cost
	among all bases including $e$.

	Naturally, the sample complexity of an algorithm for
        \matroidbanditexact\ depends on the parameters of the problem
        instance. In particular, we need to define the following gap
        parameter.

        \begin{defi}[Gap]
	\label{def:gap}
	Given a matroid $\matroid=(S,\indepfam)$ with cost function
        $\mu: S \to \PR$, such that all costs are distinct, define the
        \emph{gap} of an element $e \in S$ to be 
	$$
	\Delta_{e}^{\matroid,\mu} := \begin{cases}
	\OPT(\matroid) - \OPT(\matroid_{S \setminus \{e\}})
	\quad & e \in \OPT(\matroid)  \\
	\OPT(\matroid) - ( \OPT(\matroid_{/\{e\}}) + \mu(e) )
	\quad &  e \not\in \OPT(\matroid)\\
	\end{cases}
	$$
\end{defi}

Intuitively, for an element $e\in \OPT(\matroid)$, 
its gap is the loss if we do not select $e$,
whereas for an element $e\notin \OPT(\matroid)$, 
its gap is the loss if we are forced to select $e$.
Since we assume that elements have distinct weights, 
 $\Delta_{e} > 0$ for all arms $e$. 
We note that Definition~\ref{def:gap} 
is the same as the gap definition in \cite{chen2014combinatorial} 
and
generalizes
the gaps defined for the \bestkarm\ problem used in  \cite{kalyanakrishnan2012pac}
(in \bestkarm, the gap of an arm $e$ to be
	$
	\Delta_{e}=\mu_e -\mu_{[k+1]}
	$ if $e$ is a top-$k$ arm,
	and	
	$
	\Delta_{e}=\mu_{[k]} -\mu_{e}
	$
	otherwise).

Chen et al.~\cite{chen2014combinatorial} obtained an algorithm
with sample complexity 
$$
\left(\sum_{e \in S} \Delta_e^{-2}(\ln\delta^{-1}+\ln n + \ln\sum\nolimits_{e \in S} \Delta_e^{-2} )\right),
$$
when specialized to \matroidbanditexact. \footnote{Their algorithm works
	for arbitrary combinatorial constraint.
	The sample complexity depends on a {\em width} parameter of 
	the constraint, which is roughly the number of elements
	needed to be exchanged from one feasible solution to another.
	The width can be as large as $n$.
	For a matroid, the width is 2.
	}
We improve upon their result by proving the following theorem.

\begin{theo}[Main Result for Exact Identification]
  \label{theo:pure-exploration}
	There is an algorithm for \matroidbanditexact, that returns the optimal basis for $\matband$, with probability at least $1-\delta$, and uses
	at most 
	$$
	O\left(\sum_{e \in S} \Delta_{e}^{-2}(\ln \delta^{-1}+\ln k 
	+ \ln\ln \Delta_e^{-1}) \right)
	$$
	samples. Here, $k = \rank(\matroid)$ is the size of a basis of $\matroid$.
\end{theo}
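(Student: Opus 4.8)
The plan is to design a round-based \emph{exponential-gap elimination} algorithm---the \EXPGAPMATROID\ procedure---that geometrically shrinks a target accuracy $\epsilon_r = 2^{-r}$ over rounds $r = 1, 2, \dots$. Throughout, the algorithm maintains a set of \emph{undecided} arms together with a partial solution (arms already committed to, or excluded from, the final basis). In round $r$ it spends $O(\epsilon_r^{-2}\ln(k/\delta_r))$ fresh samples on each undecided arm, where $\delta_r = \Theta(\delta/r^2)$ is chosen so that $\sum_{r\ge 1}\delta_r \le \delta$. It then uses the current empirical means to \emph{decide} every undecided arm whose gap is at least $2\epsilon_r$ (constants suppressed): an arm $e$ is accepted into the basis once the empirical analogue of $\OPT(\matroid) - \OPT(\matroid_{S\setminus\{e\}})$ exceeds the threshold, and is symmetrically rejected once the empirical analogue of $\OPT(\matroid) - (\OPT(\matroid_{/\{e\}}) + \mu_e)$ does. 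Both quantities are evaluated by running the greedy matroid algorithm on the estimated weights, so each round is computationally cheap; decided arms are removed, and their contraction/restriction is folded into the partial solution.

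First I would establish correctness by conditioning on the ``good event'' that in every round the empirical means of all undecided arms are accurate to within $\epsilon_r$. A short exchange argument, using the fundamental-circuit and cocircuit structure of the matroid, shows under this event that (i) no arm is ever decided incorrectly---since each of the two empirical gap quantities differs from its true value by $O(\epsilon_r)$---and (ii) every arm with true gap $\Delta_e \ge 2\epsilon_r$ is decided by the end of round $r$. Summing the per-round failure probabilities $\delta_r$ over all rounds bounds the probability that the good event fails by $\delta$, so the algorithm terminates having decided every arm and outputs $\OPT(\matroid)$ with probability at least $1-\delta$.

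Next I would bound the sample complexity. The key observation from (ii) is that each arm $e$ is decided by round $r_e := \lceil \ln_2 \Delta_e^{-1}\rceil + O(1)$, after which it is never sampled again. Because the per-round sample counts $\epsilon_r^{-2} = 4^r$ grow geometrically, the total samples spent on $e$ are dominated by its final round, $\sum_{r \le r_e} O(\epsilon_r^{-2}\ln(k/\delta_r)) = O(\Delta_e^{-2}(\ln k + \ln\delta^{-1} + \ln r_e))$; since $\ln r_e = O(\ln\ln \Delta_e^{-1})$ this matches the claimed per-arm cost $O(\Delta_e^{-2}(\ln\delta^{-1} + \ln k + \ln\ln \Delta_e^{-1}))$. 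Summing over all $e \in S$ yields the theorem.

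I expect the main obstacle to be securing the $\ln k$ dependence in place of the naive $\ln n$. A direct union bound requiring that all undecided arms be simultaneously accurate in round $r$ would force a per-arm confidence of $\delta_r/n$ and thus a $\ln n$ term. To avoid this, I would not demand that every arm be individually accurate; instead I would build the decision step from a \RECURPRUN-style median-elimination subroutine that certifies an $\epsilon_r$-optimal basis while recursively contracting the matroid one rank level at a time. Median elimination returns a near-optimal element with no $\ln(\text{size})$ factor, and the only additional failure events are those of the $O(k)$ recursive stages needed to assemble a basis of $k = \rank(\matroid)$ elements; a union bound over just these stages contributes $\ln k$ rather than $\ln n$. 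Proving that correctly deciding an arm requires only these $O(k)$-many structured comparisons to succeed---rather than the accuracy of all $n$ estimates---is the crux, and it is precisely the matroid exchange properties that make this reduction from $n$ to $k$ possible.
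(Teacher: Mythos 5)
Your high-level framework---exponential-gap rounds with accuracy $\varepsilon_r \approx 2^{-r}$, per-round confidence $\delta_r \approx \delta/r^2$, and a PAC subroutine (\RECURPRUN) producing a reference basis---is the same as the paper's \EXPGAPMATROID. But the step you yourself flag as ``the crux'' is precisely where the proposal has a genuine gap, and your proposed fix does not close it. A PAC basis $I$ obtained with only $O(k)$ failure events does not by itself let you decide individual arms: to accept or reject an arm $e$ you must compare \emph{its own} empirical mean against $I$, so correctly deciding \emph{every} large-gap arm in a round still requires all of those arms' estimates to be simultaneously accurate---a union bound over up to $n$ arms, i.e.\ a $\ln n$ factor. ``Recursively contracting one rank level at a time with median elimination'' does not avoid this: the $O(k)$ recursive stages certify the reference basis $I$, not the per-arm comparisons made against it. Consequently your claim (ii)---that under a single good event every arm with gap $\Delta_e \ge 2\varepsilon_r$ is decided by the end of round $r$---cannot be established with per-arm sample counts of $O(\varepsilon_r^{-2}\ln(k/\delta_r))$.

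The paper resolves this with an asymmetry your proposal lacks. Let $n_{\mathrm{opt}}$ and $n_{\mathrm{bad}}$ denote the numbers of remaining arms of $\OPT(\matroid)$ and of its complement. Arms are \emph{accepted} only in rounds where $n_{\mathrm{bad}} < n_{\mathrm{opt}}$, so the current ground set has size less than $2n_{\mathrm{opt}} \le 2k$ and a full union bound costs only $\ln k$; under the good event every remaining optimal arm with gap at least $4\varepsilon_r$ is then selected deterministically. Arms are \emph{rejected} in rounds where $n_{\mathrm{opt}} \le n_{\mathrm{bad}}$, and there the good event demands accuracy only for the $O(k)$ arms in $I$ and in the current optimum; a large-gap bad arm is eliminated only with probability $1 - O(\delta_r)$ \emph{individually}, and Markov's inequality yields that the number of surviving bad arms at each gap scale shrinks by a factor of $8$ per round. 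The sample-complexity bound then needs this geometric decay $8^{-(r-s)}$ to beat the $4^{r}$ growth of per-round costs---a tail your accounting (``each arm's cost is dominated by its final round $r_e$'') silently assumes away, since under the paper's events bad arms have no deterministic final round. Two further ingredients you would also need: monotonicity of gaps under restriction to supersets of $\OPT$ and contraction of subsets of $\OPT$ (the paper's lemma asserting $\Delta_u^{\matroid'} \ge \Delta_u^{\matroid}$), so that folding decided arms into the partial solution never shrinks the gaps of the remaining instance; and the charging argument that in each round the samples spent on the minority side ($\OPT$ arms in rejection rounds, bad arms in acceptance rounds) are dominated by those spent on the majority side.
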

Observe that the dependence is now on the rank of the matroid $k$,
rather than the number of elements $n$ which may be much larger than $k$. Moreover, the dependence on
$\Delta_e$ is doubly logarithmic. 


For the special case of the $k$-uniform matroid, the problem becomes the 
 \bestkarm\ problem, for which the current state-of-the-art is
$O(\sum_{i=1}^{n} \Gap{i}^{-2}(\ln\delta^{-1}+\ln\sum\nolimits_{i=1}^{n} \Gap{i}^{-2} )),$
obtained by \cite{kalyanakrishnan2012pac}.
Theorem~\ref{theo:pure-exploration} improves 
upon this result for the typical case when $\ln\sum\nolimits_{i=1}^{n} \Gap{i}^{-2}$ 
 is larger than $\ln k$.
Theorem~\ref{theo:pure-exploration} also matches
the recent upper bound of
$O(\sum\nolimits_{i=2}^{n} \Gap{i}^{-2} (\ln\ln\Gap{i}^{-1}+\ln\delta^{-1}))$
for \bestarm,
due to Karnin et al.~\cite{karnin2013almost}
and	Jamieson et al.~\cite{jamieson2014lil}.

Chen et al.~\cite{chen2014combinatorial} proved an $\Omega(\sum_{e \in S} \Delta_{e}^{-2} \ln\delta^{-1})$ lower bound for the problem.
Moreover, Kalyanakrishnan et al.~\cite{kalyanakrishnan2012pac} 
showed an
$\Omega(n\varepsilon^{-2}(\ln\delta^{-1}+\ln k))$ lower bound for a PAC version (the \explorek\ metric, see Section~\ref{sec:pac}) 
of \bestkarm. 
Indeed, in their lower bound instances, all arms have gap $\Delta_{e} = \varepsilon$.
If we apply our exact algorithm on those instances,
the sample complexity is 
$O(n\varepsilon^{-2}(\ln\delta^{-1}+\ln k+\ln\ln \varepsilon^{-1}))$.
Hence, the first two terms of our upper bound are probably 
necessary in light of the above lower bounds.

\subsection{The PAC setting}
\label{sec:pac}

Next we discuss our results for the PAC setting.
Several notions of approximation were used for the
special case of \bestkarm, when we return a set $I$ of $k$ arms.
Kalyanakrishnan et al.~\cite{kalyanakrishnan2012pac} required that
the mean of every arm in $I$ be at 
least $\mu_{[k]}-\varepsilon$ (The \explorek\ metric).
Zhou et al.~\cite{zhou2014optimal} 
required that the average mean 
$\frac{1}{k}\sum_{e\in I}\mu_{e}$
of $I$ be at least $\frac{1}{k}\sum_{i=1}^{k}\mu_{[i]}-\varepsilon$; we
call such a solution an
{\em average-$\varepsilon$-optimal} solution.
Finally, Cao et al.~\cite{cao2015top} proposed a stronger 
metric that required the mean of the $i^{th}$ arm in $I$
be at least $\mu_{[i]}-\varepsilon$, for all $i\in [k]$.
This notion, which we call {\em elementwise-$\varepsilon$-optimality}
extends to general matroids: we need that 
 $i^{th}$ largest arm in our solution is 
at least 
the $i^{th}$ largest mean in the optimal solution minus~$\varepsilon$.

In this paper we introduce the stronger notion of an {\em $\varepsilon$-optimal} solution.

\begin{defi}\label{defi:eps-opt}
	(\matroidbanditstrong and $\varepsilon$-optimality)
	We are given a matroid $\matroid=(S,\indepfam)$
	with cost function $\mu : S \to \PR$. 
	We say a basis $I$ is \emph{$\varepsilon$-optimal} (with respect to $\mu$), if $I$ is an optimal solution 
	for the modified cost function $\newcostfunc{I}{\varepsilon}$, defined as follows:
	\[
	\newcostfunc{I}{\varepsilon}(e) = \begin{cases}
	\mu(e) + \varepsilon\quad & \quad \text{for }e \in I\\
	\mu(e)  \quad & \quad \text{for }e \not\in I. 
	\end{cases}
	\]
	In other words, if we add $\varepsilon$ to each element in $I$, $I$
        would become an optimal solution.
\end{defi}



The proof of the following proposition can be found
in the appendix.

\begin{prop}
\label{prop:epsoptimality}
For a \matroidbandit\ instance,
an $\varepsilon$-optimal solution is also elementwise-$\varepsilon$-optimal.
The converse is not necessarily true.
\end{prop}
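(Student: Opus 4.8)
The plan is to prove the forward implication through a classical structural fact about matroids, and to refute the converse with an explicit partition-matroid instance.

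For the forward direction, I would rely on the following \emph{sorted-dominance} property: if $B^\star$ is a maximum-weight basis of a matroid under some weight function $w$, and $B$ is any other basis, then writing the weights of $B^\star$ and of $B$ in non-increasing order as $b^\star_1 \ge \dots \ge b^\star_k$ and $b_1 \ge \dots \ge b_k$ (where $k=\rank(\matroid)$), one has $b^\star_i \ge b_i$ for every $i$. I would prove this from the correctness of the greedy algorithm: for each threshold $t$, the set $\{e : w(e) \ge t\}$ has rank at least $|B \cap \{e : w(e)\ge t\}|$ since $B$ is independent, and greedy, processing elements in decreasing weight order, collects a maximal independent subset of $\{e : w(e)\ge t\}$ before ever touching lighter elements; hence $B^\star$ contains at least that many elements of weight $\ge t$. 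Taking $t=b_i$ (so $B$ has at least $i$ such elements) forces $b^\star_i \ge b_i$.

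Now let $I$ be $\varepsilon$-optimal, i.e.\ a maximum-weight basis under $\newcostfunc{I}{\varepsilon}$, and let $\OPT$ be the optimal basis under $\mu$. Applying sorted-dominance with $w=\newcostfunc{I}{\varepsilon}$ to the pair $(I,\OPT)$ shows that the $i$-th largest $\newcostfunc{I}{\varepsilon}$-value of $I$ is at least the $i$-th largest $\newcostfunc{I}{\varepsilon}$-value of $\OPT$. Since $\newcostfunc{I}{\varepsilon}$ adds exactly $\varepsilon$ to every element of $I$, the sorted $\newcostfunc{I}{\varepsilon}$-weights of $I$ are its sorted $\mu$-weights shifted up by $\varepsilon$; and since $\newcostfunc{I}{\varepsilon}(e)\ge \mu(e)$ for all $e$, monotonicity of order statistics gives that the $i$-th largest $\newcostfunc{I}{\varepsilon}$-value of $\OPT$ is at least its $i$-th largest $\mu$-value. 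Chaining the two observations yields (the $i$-th largest $\mu$-value of $I$) $+\,\varepsilon \ge$ (the $i$-th largest $\mu$-value of $\OPT$), which is exactly elementwise-$\varepsilon$-optimality.

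For the converse I would exhibit a rank-$2$ partition matroid with two groups, each requiring one chosen element: group $1=\{a_1,a_2\}$ with $\mu(a_1)=3,\ \mu(a_2)=0$, and group $2=\{b_1,b_2\}$ with $\mu(b_1)=2,\ \mu(b_2)=1$, so $\OPT=\{a_1,b_1\}$ has sorted weights $(3,2)$. Taking $I=\{a_2,b_1\}$ (sorted weights $(2,0)$) and $\varepsilon=2.5$, the elementwise tests $2\ge 3-\varepsilon$ and $0\ge 2-\varepsilon$ both hold, so $I$ is elementwise-$\varepsilon$-optimal; yet under $\newcostfunc{I}{\varepsilon}$ group $1$ prefers $a_1$ (value $3$) to $a_2$ (value $0+\varepsilon=2.5$), so $I$ is not optimal for $\newcostfunc{I}{\varepsilon}$ and hence not $\varepsilon$-optimal. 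The main obstacle is the correct application of sorted-dominance: the two weight functions reorder $\OPT$ differently, since elements of $\OPT\cap I$ are boosted while those of $\OPT\setminus I$ are not, so one cannot directly compare sorted $\mu$-weights of $I$ and $\OPT$. The monotonicity-of-order-statistics step is precisely what bridges this gap, and it is the part most prone to an ordering slip.
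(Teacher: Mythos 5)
Your proposal is correct and rests on essentially the same mechanism as the paper's proof: the sorted-dominance lemma you establish is exactly the threshold--rank property of maximum-weight bases (Lemma~\ref{lm:char-opt}(iv)) that the paper's contradiction argument also invokes for the modified weights $\newcostfunc{I}{\varepsilon}$, and your partition-matroid instance plays the same role as the paper's four-arm \bestkarm\ example for refuting the converse. One nit: your counterexample's weights $3,2,1,0$ are not means of $[0,1]$-supported distributions as Definition~\ref{defi:matroid-bandit-instance} requires (and $0$ is not strictly positive), but rescaling to, say, $0.3,\,0.04,\,0.2,\,0.1$ with $\varepsilon=0.25$ fixes this and your argument goes through verbatim --- indeed, your rescaled example is sound as stated, whereas the paper's own example refutes the converse only if its stated $\varepsilon=0.3$ is read as $0.03$.
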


\eat{
\begin{proofof}{Proposition~\ref{prop:epsoptimality}} 
	Let $I$ be an $\varepsilon$-optimal solution. We show it is also elementwise-$\varepsilon$-optimal.
	Let $o_i$ be the arm with the $i^{th}$ largest mean in $\OPT$
	and $a_i$ be the arm with the $i^{th}$ largest mean in $I$.
	Suppose for contradiction that $\mu(a_i)< \mu(o_i)-\varepsilon$
	for some $i\in [k]$ where $k=\rank(\matband)$. 
	Now, consider the sorted list of the arms according to
	the modified cost function
	$\newcostfunc{I}{\varepsilon}$.
	The arm $a_i$ is ranked after $o_i$ and all $o_j$ with $j<i$.
	Let $P$ be the set of all arms with mean no less than $o_i$ with respect to $\newcostfunc{I}{\varepsilon}$.
	Clearly, $\rank(P)\geq i$.
	So the greedy algorithm should select at least $i$ 
	elements in $P$, while $I$
	only has at most $i-1$ elements in $P$, contradicting 
	the optimality of $I$ with respect to $\newcostfunc{I}{\varepsilon}$. 		
	
	For the second part,
	take a \bestkarm ($k=2$) instance with four arms: 
	$\mu(a_1)=0.91, \mu(a_2)=0.9, \mu(a_3)=0.89, \mu(a_4)=0.875$.
	The set $\{a_3, a_4\}$ is elementwise-$0.3$-optimal, 
	but not $0.3$-optimal. 
\end{proofof}
}

\begin{theo}[Main Result for PAC Setting]
  \label{theo:eps-opt-algo}
	There is an algorithm for
	\matroidbanditstrong\ which
	returns an $\varepsilon$-optimal solution for $\matband=(S,\matroid)$,
	with probability at least $1-\delta$, and uses at most
	$$
	O(n\varepsilon^{-2} \cdot (\ln k + \ln \delta^{-1})) 
	$$	
	samples, where $k = \rank(\matroid)$. 
\end{theo}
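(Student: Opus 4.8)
The plan is to first turn the analytic goal of $\varepsilon$-optimality into a purely combinatorial, exchange-based condition on the empirical estimates, and then to design a phased ``sample-and-prune'' algorithm (in the spirit of median elimination) whose cost telescopes to the claimed bound. For the first step, let $\hat{I}$ be a maximum-weight basis of $\matroid$ with respect to the empirical means $\hat\mu$ produced by the sampling. By the standard optimality criterion for the matroid greedy algorithm, $\hat I$ is optimal for the modified weights $\newcostfunc{\hat I}{\varepsilon}$ of Definition~\ref{defi:eps-opt} --- i.e.\ $\hat I$ is $\varepsilon$-optimal --- precisely when every fundamental-circuit exchange pair $(e,f)$, with $e\notin\hat I$, $f\in\hat I$, and $\hat I-f+e$ a basis, satisfies $\mu(e)-\mu(f)\le\varepsilon$. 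Since the greedy optimality of $\hat I$ under $\hat\mu$ already gives $\hat\mu(f)\ge\hat\mu(e)$ for each such pair, it suffices to control the estimation error on the arms participating in these exchanges. In particular, if every empirical mean were within $\varepsilon/2$ of its true value then $\hat I$ would be $\varepsilon$-optimal; but achieving this via one round of uniform sampling and a union bound over all $n$ arms incurs a $\ln n$ factor, and the whole difficulty is to replace $\ln n$ by $\ln k$.

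The algorithm (the paper's \RECURPRUN) runs in phases $\ell=1,2,\dots$ with a geometrically shrinking confidence radius $r_\ell$ and a scheduled per-phase failure budget $\delta_\ell$ summing to $\delta$. Maintaining a residual matroid together with a set of ``active'' arms, each phase samples every active arm $O(r_\ell^{-2}\ln\delta_\ell^{-1})$ times, forms confidence intervals of half-width $r_\ell$, and then prunes: an arm that is confidently contained in the optimal basis of the residual matroid is \emph{contracted} into the solution, and an arm that is confidently in no near-optimal basis (confidently spanned out, via the exchange criterion above) is \emph{deleted}. The algorithm then recurses on $\matroid_{/(\text{contracted})}$ restricted to the surviving arms, and halts once $r_\ell$ drops below $\varepsilon/2$, at which point all surviving arms are mutually $\varepsilon$-indifferent and, by the exchange characterization of the first step, \emph{any} completion of the contracted set to a basis is $\varepsilon$-optimal.

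For the sample complexity I would bound $\sum_\ell (\#\text{active}_\ell)\cdot r_\ell^{-2}\ln\delta_\ell^{-1}$ and show it telescopes to $O(n\varepsilon^{-2}(\ln k+\ln\delta^{-1}))$. The key point --- and the source of $\ln k$ rather than $\ln n$ --- is that while the active set is large compared with the rank (say $\#\text{active}\gg k$), batch pruning is safe even with only a \emph{constant} per-arm confidence: discarding an arm that actually belongs to some $\varepsilon$-optimal basis would require $\Omega(\#\text{active})$ simultaneous mis-orderings among the active arms, an event of probability $e^{-\Omega(\#\text{active})}$ by a Chernoff bound over the bulk. Hence the phases that shrink the active set from $n$ down to $O(k)$ contribute only $O(n\varepsilon^{-2})$ in total (a geometric series), carrying no logarithmic confidence factor; genuine confidence $\delta/k$ is needed only on the final $O(k)$ ``frontier'' arms near the rank-$k$ boundary, contributing $O(k\,\varepsilon^{-2}(\ln k+\ln\delta^{-1}))\le O(n\,\varepsilon^{-2}(\ln k+\ln\delta^{-1}))$. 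Splitting the accuracy budget $\varepsilon$ geometrically across phases (so the cumulative slack stays below $\varepsilon$) also avoids the $\ln\ln\varepsilon^{-1}$ term that the exact algorithm of Theorem~\ref{theo:pure-exploration} would incur.

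The main obstacle is the pruning step for a general matroid. In the cardinality (\bestkarm) case the progress rule is transparent: keeping the empirically top half preserves every arm within $\varepsilon$ of the top-$k$ boundary and halves the active set, and bulk-concentration safety is immediate. For a matroid this is false --- a low-weight arm may be forced into every optimal basis (e.g.\ the best arm of a small group in a partition matroid), so a rank-based rule could wrongly delete it. I must therefore drive the pruning by the fundamental-circuit exchanges rather than by a global empirical rank, defining contraction/deletion so that (i) no arm lying in some $\varepsilon$-optimal basis is ever deleted and no arm lying in no near-optimal basis is ever contracted, while simultaneously (ii) a constant fraction of the active arms is resolved per phase whenever $\#\text{active}\gg k$, with the safety of this batch step still governed by bulk Chernoff concentration so that it needs only weak confidence. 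Proving that the matroid exchange structure supports such a rule --- i.e.\ that the unresolved frontier has size $O(k)$ and that pruning outside it is bulk-safe --- is the crux that converts the $\ln n$ of naive uniform sampling into the optimal $\ln k$.
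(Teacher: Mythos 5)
There is a genuine gap, and you have named it yourself: your final paragraph concedes that proving the matroid exchange structure supports a bulk pruning rule ``is the crux,'' and nothing in the proposal supplies that proof. The paper explicitly observes (Section~1.2.2) that threshold/confidence-interval elimination of the kind you describe does \emph{not} extend to matroids, precisely because one cannot eliminate even a constant fraction of arms while guaranteeing the surviving set still contains a near-optimal basis. Its resolution is an idea absent from your proposal: the Karger--Klein--Tarjan sampling-and-pruning technique. \RECURPRUN\ draws a random subset $F\subseteq S$ by keeping each arm with constant probability $p=0.01$, \emph{recursively} computes an $(\varepsilon/3)$-optimal basis $I$ of $\matroid_F$, and then prunes every arm $e\notin I$ that is blocked by the empirically-large arms of $I$. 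Progress comes from Karger's sampling lemma (Lemma~\ref{lm:rand-samp}): the number of $F$-good arms is stochastically dominated by $\NegBer(k;p)$, so when $|S|\gg k$ almost every arm is $F$-bad; each $F$-bad arm is blocked by $I^{\ge \mu(e)-\varepsilon/3}$ (Lemma~\ref{lm:F-eliminate}) and is therefore pruned unless its own estimate fails, which happens with probability at most $\delta p/8k$, so Markov's inequality yields $|S'|\le 2p|S|$ (Lemma~\ref{lm:good-S-1}). This random reference basis, not any fixed-threshold or exchange-frontier rule, is what delivers the constant-fraction elimination you identify as the missing step.

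Your substitute safety mechanism is moreover incorrect as stated. The claim that wrongly deleting an arm belonging to some $\varepsilon$-optimal basis ``would require $\Omega(\#\text{active})$ simultaneous mis-orderings,'' hence has probability $e^{-\Omega(\#\text{active})}$, fails in a matroid: by your own partition-matroid example, an arm forced into every near-optimal basis can be wrongly deleted because of mis-estimates of $O(1)$ arms --- itself plus the few arms in its fundamental circuit --- an event of constant probability under constant per-arm confidence. The paper's accounting is structured differently: value preservation is \emph{not} obtained by bulk concentration at all. It requires $\lambda=\varepsilon/12$-accurate estimates for every arm of $\OPT\cup I$, a union bound over only $2k$ arms (Lemma~\ref{lm:good-F}), and this is the sole source of the $\ln k$ factor; arms outside $\OPT\cup I$ may be arbitrarily mis-estimated, since deleting them never hurts the optimal value, and Markov's inequality is used only to bound the \emph{number} of surviving $F$-bad arms, never for correctness. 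Finally, the accumulated error across recursion levels is controlled by the transitivity of approximate subsets (Lemma~\ref{lm:chain-approx}), with both recursive calls run at accuracy $\varepsilon/3$ so that $\varepsilon/3+\varepsilon/3+4\lambda=\varepsilon$, and the geometric shrinkage $|F|,|S'|\le 2p|S|$ makes the single top-level call to \UNIFORMSAMPLING, costing $O(|S|\varepsilon^{-2}(\ln k+\ln\delta^{-1}))$, dominate the total. Your first paragraph (the exchange characterization and the na\"ive $\ln n$ baseline) matches the paper's Lemmas~\ref{lm:char-opt}, \ref{lm:eps-opt-char} and \ref{lm:NAIVE}, but the core of the argument is not recoverable from the proposal.
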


This theorem generalizes and strengthens  
the results in \cite{kalyanakrishnan2012pac,cao2015top},
in which the same sample complexity was obtained 
for \bestkarm\ under \explorek\ and 
elementwise-$\varepsilon$-optimality metrics, respectively.
In fact, this sample complexity is optimal, since an $\Omega(n\varepsilon^{-2}(\ln k
+ \ln\delta^{-1}))$ lower bound is known for \explorek\ for the special
case of \bestkarm, due to~\cite{kalyanakrishnan2012pac}.

\subsubsection{Average-$\varepsilon$-optimality}

We also consider the weaker notion of average-$\varepsilon$-optimality,
which may suffice for certain applications. For this definition, we give
another algorithm 
with a lower sample complexity.

\begin{defi}\label{defi:matroid-mean-eps-opt}
	(\matroidbanditAvg)
	Given a matroid $\matroid=(S,\indepfam)$
	with cost function $\mu : S \to \PR$. 
	Suppose $k = \rank(\matroid)$.
	We say a basis $I$ is an \EPSMEANOPT\ solution  (w.r.t.\ $\mu$), if:
	$\frac{1}{k}\sum_{e\in I} \mu(e) \ge 
	\frac{1}{k}\OPT(\matroid) - \varepsilon.$     
\end{defi}


\begin{theo}\label{theo:eps-mean-algo}
	There is an algorithm for \matroidbanditAvg, 
	which can return an \EPSMEANOPT\ solution for $\matband$,
	with probability at least $1-\delta$, and its sample complexity is at most
	\[
	O\Big( \big(n\cdot(1 + \ln\delta^{-1}/k) + (\ln\delta^{-1}+k)(\ln k\ln\ln k + \ln\delta^{-1}\ln\ln\delta^{-1} )\big)\varepsilon^{-2} \Big).
	\]
	In particular, when
	$k\ln\delta^{-1} \le O(n^{0.99})$ and $\delta \ge \Omega(\exp(-n^{0.49}))$,
	the sample complexity is 
	$$
		O( n\varepsilon^{-2}(1 + \ln\delta^{-1}/k)).
	$$
\end{theo}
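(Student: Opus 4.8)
The plan is to exploit the fact that \EPSMEANOPT\ is a much weaker guarantee than elementwise- or $\varepsilon$-optimality: we only need the \emph{average} weight of our basis to be within $\varepsilon$ of the optimum, so the total weight need only be within $k\varepsilon$. The starting point is a structural reduction from this averaged guarantee to an \emph{aggregate estimation error budget}. Concretely, suppose we have estimates $\hat\mu$ and let $I$ be a maximum-weight basis with respect to $\hat\mu$ (computed by the matroid greedy algorithm), and let $O = \OPT(\matroid)$. Using the standard matroid exchange property, there is a bijection $\sigma : O \to I$ that fixes $O \cap I$ and satisfies $\hat\mu(\sigma(o)) \ge \hat\mu(o)$ for every $o$. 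Telescoping the identity $\mu(o) - \mu(\sigma(o)) = (\mu(o) - \hat\mu(o)) + (\hat\mu(o) - \hat\mu(\sigma(o))) + (\hat\mu(\sigma(o)) - \mu(\sigma(o)))$ and discarding the (nonpositive) middle term yields
$$ \OPT(\matroid) - \mu(I) \;\le\; \sum_{a \in O \triangle I} |\hat\mu(a) - \mu(a)|. $$
Thus $I$ is \EPSMEANOPT\ as soon as the \emph{sum} of estimation errors over the $O(k)$ arms in the symmetric difference is at most $k\varepsilon$. The crucial gain over the $\varepsilon$-optimal setting is that we never need each individual error to be as small as $\varepsilon$; we only need them small \emph{on average}.

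This immediately motivates the first-order sample bound: if we draw $\Theta(\varepsilon^{-2})$ samples from every arm, each error $|\hat\mu(a)-\mu(a)|$ is $O(\varepsilon)$ in expectation, so the sum over $O(k)$ arms is $O(k\varepsilon)$ in expectation, accounting for the leading $O(n\varepsilon^{-2})$ term. To upgrade this to a high-probability statement I would use a recursive-elimination scheme (in the spirit of Zhou et al.'s algorithm for \bestkarm, but driven by matroid contraction and restriction). In each round we sample every surviving arm at the current coarse precision using only \emph{constant} per-round confidence, then \emph{contract} (commit to the basis) the elements whose estimates are safely above the current threshold and \emph{delete} the elements safely below it, working on $\matroid_{/F}$ and $\matroid_F$ as in Definition~\ref{defi:matroid-rescon}. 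A wrong decision in a round where the precision is $\varepsilon_t$ can damage the final average by at most $O(\varepsilon_t)$ per affected arm, and since these precisions decrease geometrically while few arms are misclassified, the accumulated damage telescopes to $O(k\varepsilon)$. Sharing one confidence budget across the $k$ committed elements — rather than paying $\ln\delta^{-1}$ for each of the $n$ arms separately — is exactly what produces the amortized $(1 + \ln\delta^{-1}/k)$ factor rather than $\ln(n/\delta)$.

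The recursion is run until the surviving ground set has shrunk to $O(k + \ln\delta^{-1})$ elements, at which point I would finish with the exact-identification machinery of Theorem~\ref{theo:pure-exploration} applied with target gaps of order $\varepsilon$. On a contested set of $O(k + \ln\delta^{-1})$ arms this precise subroutine, whose per-arm cost carries the doubly-logarithmic dependence from the exponential-gap/median-elimination structure, contributes the second summand $(\ln\delta^{-1}+k)(\ln k \ln\ln k + \ln\delta^{-1}\ln\ln\delta^{-1})\varepsilon^{-2}$. A union bound over the $O(\log)$ rounds plus the final phase delivers the overall $1-\delta$ guarantee, and the simplified complexity in the stated regime follows by checking that the second summand is dominated by $n\varepsilon^{-2}(1+\ln\delta^{-1}/k)$ precisely when $k\ln\delta^{-1} \le O(n^{0.99})$ and $\delta \ge \Omega(\exp(-n^{0.49}))$.

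I expect the main obstacle to be the high-probability control of the aggregate error $\sum_{a \in O \triangle I}|\hat\mu(a)-\mu(a)|$, because the symmetric-difference set $O \triangle I$ is itself a random object determined by the noisy estimates, so one cannot simply apply a concentration inequality to a fixed collection of arms. Resolving this requires either a round-by-round charging argument that bounds the number of misclassified arms at each precision level independently of which arms they are, or a martingale/repetition (median-of-bases) amplification that boosts the constant per-round confidence to $1-\delta$ without reintroducing a per-arm $\ln n$ factor. Correctly marrying this amortized confidence analysis with the matroid contraction/deletion dynamics — so that the greedy optimality relation above continues to hold on the successively contracted matroids — is the technically delicate core of the proof.
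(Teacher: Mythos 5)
Your structural reduction is sound: by Brualdi's exchange theorem there is a bijection $\sigma : O \setminus I \to I \setminus O$ with $(I \setminus \{\sigma(o)\}) \cup \{o\}$ a basis, so $\hat\mu$-optimality of $I$ gives $\hat\mu(\sigma(o)) \ge \hat\mu(o)$, and telescoping indeed yields $\OPT(\matroid) - \mu(I) \le \sum_{a \in O \triangle I} |\hat\mu(a)-\mu(a)|$. However, the proposal leaves unresolved exactly the two points where the paper's proof actually lives. First, your per-round elimination step --- ``contract the elements whose estimates are safely above the current threshold and delete the elements safely below it'' --- is precisely the scheme the paper rules out in Section 1.2.2: for a general matroid there is no fixed-threshold rule that removes a constant fraction of arms while guaranteeing the optimal value of the remaining set does not drop, because whether an arm is dispensable depends on the combinatorial structure and not on its position in a sorted order. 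The paper's \ELIMINATION\ procedure instead uses sampling-and-pruning: draw a random subset $F$ with $p = \Theta\big((k+\ln\delta^{-1})/N\big)$, recursively compute an $(\varepsilon/5)$-optimal basis $I$ of $F$ via \RECURPRUN, and delete the arms blocked by $I$ at shifted empirical thresholds; Lemma~\ref{lm:rand-samp} (the Karger--Klein--Tarjan sampling lemma) is what guarantees that all but $O(pN)$ arms are $F$-bad and hence eliminated with constant probability each, giving $|S'| \le 0.1|S|$. Nothing in your proposal plays the role of this lemma, and without it the claim that ``few arms are misclassified'' per round has no justification.

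Second, you correctly identify that $O \triangle I$ is a random, data-dependent set, so one cannot apply concentration to a fixed collection of arms --- but you then defer the resolution (``either a round-by-round charging argument \dots or a martingale/repetition amplification''), i.e., you name the core difficulty rather than solve it. The paper's resolution is concrete: condition on the recursively computed basis $I$, which makes the per-arm quantities $\Delta_u = \min\{w \mid I_{\mu}^{\ge \amean{u}-w} \text{ blocks } u\}$ and $\eta_u = \max(0,\Delta_u - \gamma)$ deterministic for every $u \in O$; wrongly eliminating $u$ then forces an estimation error of at least $\eta_u$, an event of probability at most $\exp(-2\eta_u^2 Q_0)$, and the Chernoff-type inequality of Zhou et al.\ (Proposition~\ref{prop:zhouA4}) applied to these independent variables bounds the total damage by $\beta k$ with probability $1-\delta/6$. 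Finally, your finishing step does not work as stated: the exact algorithm of Theorem~\ref{theo:pure-exploration} has no accuracy parameter --- its sample complexity is governed by the true gaps $\Delta_e$ of the surviving instance, which can be arbitrarily smaller than $\varepsilon$, so it cannot be invoked ``with target gaps of order $\varepsilon$''; even a truncated variant would introduce a $\ln\ln\varepsilon^{-1}$ dependence absent from the claimed bound. The paper instead finishes with \NAIVETWO, whose union bound over the $\binom{|S_t|}{k}$ bases (per-arm cost proportional to $\ln\frac{|S_t|}{k}+\ln\delta^{-1}/k$) is what actually produces the second summand $(\ln\delta^{-1}+k)(\ln k\ln\ln k+\ln\delta^{-1}\ln\ln\delta^{-1})\varepsilon^{-2}$ after the case analysis on $n \le k^3$ versus $n \ge k^3$.
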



\cite{zhou2014optimal} obtained  
matching upper and lower bounds of
	$
	\Omega( n\varepsilon^{-2}(1 + \ln\delta^{-1}/k))
	$
for \bestkarm\ under the average metric.
Our result matches their result 
when $\delta$ is not extremely small and 
$k$ is not very close to $n$.
Obtaining tight upper and lower bounds for 
all range of parameters is left as an interesting open question.

\subsubsection{Prior and Our Techniques}

Several prior algorithms for the PAC versions of \bestarm\ and
\bestkarm\ (e.g., \cite{karnin2013almost,zhou2014optimal,even2002pac}) were elimination-based,
roughly using the following framework: In the $r^{th}$ round, we sample
each remaining arm $Q_r$ times, \footnote{Typically, $Q_r$ increases
  exponentially with $r$.} and eliminate all arms whose empirical means
fall below a certain threshold. This threshold can be either a
percentile, as in~\cite{even2002pac,zhou2014optimal} or an
$\varepsilon$-optimal arm obtained by some PAC algorithm, such as
in~\cite{karnin2013almost}.  After eliminating some arms, we proceed to
the next round.  Small variations to this procedure are possible, e.g.,
if the number of remaining arms is not much larger than $k$, we can
directly use the na\"ive uniform sampling algorithm.  A main difference
in prior works is in their analysis, due to the different PAC-optimality
metrics.  However, we cannot easily extend this framework to either
\matroidbanditstrong\ or \matroidbanditAvg, since it is not clear how to
eliminate even a small constant fraction of arms while ensuring that the
optimal value for the remaining set does not drop. Indeed, due to the
combinatorial structure of the matroid, we cannot perform elimination
based solely on fixed thresholds.
 
We resolve the issue by applying a sampling-and-pruning technique
developed by Karger, and used by Karger, Klein, and Tarjan in their
expected linear-time randomized algorithm for minimum spanning tree. 
Here is the high-level idea, in the context of
the \matroidbanditstrong problem.
We pick a random subset $F$ by including each arm independently with
some small constant probability $p$, and recursively find an
$\varepsilon/3$-optimal basis $I$ for the subset $F$.  The key idea is
that this basis $I$ can be used to eliminate a significant proportion of
arms, while ensuring that the remaining set still contains a desirable
solution.  Hence, after eliminating those arms, we can recurse on the
remaining arms.  Unlike the previous algorithms which eliminate arms
based on a single threshold, we perform the elimination based on the
solution $I$ of a random subset.  We feel this extension of the sampling
and pruning technique to bandit problems will find other applications.

Another popular approach for pure exploration problems is based on upper
or lower confidence bounds (UCB or LUCB) (see e.g., \cite{kalyanakrishnan2012pac,chen2014combinatorial}). While being very flexible and easy to apply, the analysis of all
such bounds inevitably requires a union bound of all rounds (which is at
least $n$), thus incurring a $\log n$ factor, which is worse than the
optimal $\log k$ factor that we obtain.

\subsection{Other Related Work}



The problem of identifying the single best arm,
a very special case of our problem,
has been studied extensively.
For the PAC version of the problem,
\footnote{
	Since the solution only contains one arm,
	all different notions of PAC optimality mentioned in
	Section~\ref{sec:pac} are equivalent.  
}
Even-Dar et al.~\cite{even2002pac} provided an algorithm with  sample 
complexity
$O(n\varepsilon^{-2} \cdot \ln \delta^{-1})$,
which is also optimal.
For the exact version, 
Mannor and Tsitsiklis~\cite{mannor2004sample} proved
a lower bound
of $\Omega(\sum\nolimits_{i=2}^{n} \Gap{i}^{-2} \ln\delta^{-1})$.
\cite{farrell1964asymptotic} showed a
lower bound of
$\Omega(\Gap{2}^{-2}\ln\ln \Gap{2}^{-1})$
even if there are only two arms.
Karnin et al.~\cite{karnin2013almost}
obtained an upper bound of 
$O(\sum\nolimits_{i=2}^{n} \Gap{i}^{-2} (\ln\ln\Gap{i}^{-1}+\ln\delta^{-1}))$,
matching Farrell's lower bound for two arms.
Jamieson et al.~\cite{jamieson2014lil} obtained the same result using a UCB-like algorithm.
Very Recently, 
Chen and Li~\cite{chen2015optimal} 
provided a new lower bound of 
$\Omega(\sum_{i=2}^{n} \Gap{i}^{-2} \ln\ln n)$
and an improved upper bound
of $
O\Big(
\Gap{2}^{-2}\ln\ln \Gap{2}^{-1}+
\sum_{i=2}^{n} \Gap{i}^{-2} \ln\delta^{-1}+\sum_{i=2}^{n} \Gap{i}^{-2}\ln\ln \min(n,\Gap{i}^{-1}) 
\Big).
$

In all aforementioned results, we require that the (PAC or exact) algorithm returns a correct answer with probability 
at least $1-\delta$. 
This is called the {\em fixed confidence} setting
in the literature.
Another popular setting is the {\em fixed budget} setting, in which
the total number of samples is subject to a given budget constraint,
and we would like to minimize the failure probability
(see e.g., \cite{bubeck2013multiple,gabillon2012best,karnin2013almost,chen2014combinatorial}).
Some prior work
(\cite{audibert2010best,bubeck2013multiple,audibert2013regret})
also considered the objective of making the {\em expected simple regret} at most $\varepsilon$ 
(i.e., $\frac{1}{k}(\sum_{i=1}^k \mu_{[i]} -\mathbb{E}[\sum_{a\in T}\mu_a])\leq \varepsilon$),
which is a somewhat weaker objective.


There is a large body of work on minimizing the cumulative regret
in online multi-armed bandit games with various combinatorial constraints
in different feedback settings
(see e.g., \cite{cesa2006prediction,bubeck2012regret,cesa2012combinatorial,audibert2013regret,chen2013combinatorial} and the references therein).
In an online bandit game, there are $T$ rounds.
In the $t^{th}$ round, we can play a combinatorial subset $S_i$ of arms.
The goal is to minimize 
$T\sum_{a\in \OPT} \mu_a - \sum_{t=1}^T \sum_{a\in S_t} \mu_a$. 
We note that it is possible to obtain an expected simple regret of 
$\varepsilon$ for \matroidbandit, 
with at most $O(n\varepsilon^{-2})$ samples, using the semi-bandit regret bound in
\cite{audibert2013regret}.
In particular, 
they provided an online mirror descent algorithm and showed a cumulative regret of $\sqrt{knT}$ in the semi-bandit feedback setting
(i.e., we can only observe the rewards from the arms we played
),
where $k$ is the maximum cardinality of a feasible set. By setting $T=nk^{-1}\varepsilon^{-2}$, we get a cumulative 
regret of $n/\varepsilon$. 
If we uniformly randomly pick a solution from $\{S_t\}_{t\in [T]}$, 
we can see  that
$\mathbb{E}_{t\in [T]}\frac{1}{k}(\sum_{a\in \OPT} \mu_a - \sum_{a\in S_t} \mu_a)\leq \varepsilon$. One drawback of their algorithm is that it needs to solve a convex program over the matroid polytope, which can be computationally expensive, while our algorithm is purely combinatorial and very easy to implement.

In recent and concurrent work, Gabillon et
al.~\cite{gabillon2016improved} proposed a new complexity notion for the
general combinatorial pure exploration problem, and developed new
algorithms in both fixed budget and the fixed confidence setting. They
showed that in some cases, the sample complexity of their algorithms is
better than that of \cite{chen2014combinatorial}. While the current
implementations of their algorithm have an exponential running time,
even for general matroid constraints, it is an interesting problem to
get more efficient algorithms, and to combine their notion of complexity
with our techniques.



\section{Preliminaries}
\label{sec:prel}

    



\subsection{Useful Facts about Matroids}


While there are many equivalent definitions for matroids, we  find this
one most convenient.

\begin{defi}[Matroid]\label{defi:matroid}
	A matroid $\matroid(S,\indepfam)$ consists of a finite set $S$
        (called the \emph{ground set}), and a non-empty family
        $\indepfam$ of subsets of $S$ (with sets in $\indepfam$ being called \emph{independent sets}), satisfying the following:
	\begin{OneLiners}
        \item[i.] Any subset of an independent set is an independent set.
        \item[ii.] Given two sets $I,J \in \indepfam$, if $|I| > |J|$, there exists  element 
          $e \in I \setminus J$ such that 
          $J \cup \{e\} \in \indepfam$.
        \end{OneLiners}
      \end{defi}

For convenience, we often write $I \in \matroid$ 
instead of $I\in \indepfam$
to denote that $I$ is an independent set of $\matroid$. 
An independent set is \emph{maximal} if it is not a proper subset of
another independent set; a maximal independent set is called a \emph{basis}. 


\begin{defi}[Rank]
	\label{defi:rank}
	Given matroid $\matroid(S,\indepfam)$ and set $A \subseteq S$,
        the \emph{rank} of $A$, denoted by $\rank_{\matroid}(A)$, is the
        cardinality of a maximal independent subset contained in $A$.
\end{defi}

When $\matroid$ is clear from context, we merely write $\rank(A)$. All
bases of a matroid have the same cardinality. We use $\rank(\matroid)$,
instead of $\rank(S)$, to denote the cardinality of every basis of $\matroid$.

\eat{
The spanning forests in a graph, and independent sets in a set of vectors are two well-known examples of matroid. Intuitively, matroid captures the good combinatorial structure such that the greedy algorithm can work.
}

We often need to work with the set of independent sets
restricted to a subset of elements. 
Sometimes we can determine to include some elements as
a partial solution, 
we need to work the the rest of the matroid,
conditioning on the partial solution. 
We need the definitions of 
matroid restrictions and matroid contractions
to formalize the above situations.

\begin{defi}[Matroid restrictions and contractions] \label{defi:matroid-rescon} 
	Let $\matroid(S,\indepfam)$ be a matroid. 
	For $A\subseteq S$, we define the restriction of $\matroid$ to $A$ as follows: $\matroid_{A}$ is also a matroid with  ground set $A$; an independent set of $\matroid$ which is a subset of $A$ is an independent of $\matroid_A$.
	
	The contraction of $A$ is defined as follows: $\matroid_{/A}$ is the matroid with ground set 
	$S'=\{ e \in S \betw \rank(\{e\} \cup A) > \rank(A)  \}$, and the independent set family 
	$\indepfam' = \{ I \subseteq S'
	\mid \rank(I \cup A) = |I| + \rank(A) \}$.
\end{defi}

Both $\matroid_A$ and $\matroid_{/A}$ are indeed matroids. 
Sometimes, we may also write $\matroid|A$ and $\matroid/A$ 
to avoid successive subscripts.
In our paper, we only need to contract an independent set 
$A\in \indepfam$.
In this case, $\rank(A)=|A|$,
and the definition simplifies to the following:
a set $I$ (disjoint from $A$) 
is independent in $\matroid_{/A}$,
if $I\cup A$ is independent in $\matroid$.

\begin{defi}[Isolated Elements and Loops]
\label{defi:iso-loop}
	For a matroid $\matroid=(S,\indepfam)$ and element $e \in S$, 
	we say $e$ is an {\em isolated element}, 
	if it is contained in all bases of $\matroid$
	(or equivalently, $\rank(S) > \rank(S \setminus \{e\})$). We say 
	$e$ is a {\em loop}
	if it belongs to no basis of $\matroid$.
\end{defi}


Clearly, since the mean of each arm is nonnegative,
we can directly select all isolated elements and contract 
out these elements. Also, we can simply ignore those loops.
From now on, we can assume without loss of generality that 
there is no isolated element or loop in $\matroid$. 

\begin{defi}[Block]
	Let $\matroid(S,\indepfam)$ be a matroid. 
	Given a subset $A \subset S$ and an element $e$ such that $e \not\in A$, we say $A$ blocks $e$, 
	if $\rank_\matroid(A\cup\{e\})=\rank_\matroid(A)$. 
\end{defi}

Intuitively, $e$ is blocked by $A$
if adding $e$ is not useful in increasing the cardinality of 
the maximal independent set in $A$.
Note that, if $A \subseteq B$, $e \not\in B$ and $A$ blocks $e$, then clearly $B$ also blocks $e$, 
due to the submodularity of $\rank$:
$\rank(B\cap \{e\})-\rank(B)\leq 
\rank(A\cap \{e\})-\rank(A)$. 

We have the following lemma characterizing 
when a subset $A$ blocks an element $e$.

\begin{lemma}\label{lm:blocks-1}
	If $A$ blocks $e$, every basis $I$ of $\matroid_{A}$ blocks $e$. 
\end{lemma}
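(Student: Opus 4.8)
The plan is to reduce everything to the monotonicity of the rank function, so I would begin by pinning down what a basis $I$ of $\matroid_A$ gives me. By the definitions of basis and rank, $I$ is a maximal independent subset of $A$; in particular $I$ is independent in $\matroid$ and $|I| = \rank_\matroid(A)$. Since an independent set has rank equal to its cardinality, this yields the identity $\rank_\matroid(I) = |I| = \rank_\matroid(A)$, which is the one place where I genuinely use that $I$ is a \emph{basis} of the restriction rather than just some independent subset of $A$.

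Next I would exploit containment. Because $I \subseteq A$, I have $I \cup \{e\} \subseteq A \cup \{e\}$, and monotonicity of $\rank_\matroid$ gives $\rank_\matroid(I \cup \{e\}) \le \rank_\matroid(A \cup \{e\})$. The hypothesis that $A$ blocks $e$ is exactly $\rank_\matroid(A \cup \{e\}) = \rank_\matroid(A)$. Chaining these with the identity from the first step, I get $\rank_\matroid(I \cup \{e\}) \le \rank_\matroid(A) = \rank_\matroid(I)$. Finally, monotonicity in the trivial direction $I \subseteq I \cup \{e\}$ gives $\rank_\matroid(I) \le \rank_\matroid(I \cup \{e\})$, and the two inequalities sandwich the quantities into the equality $\rank_\matroid(I \cup \{e\}) = \rank_\matroid(I)$, which is precisely the assertion that $I$ blocks $e$.

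There is no real obstacle here: the whole argument is a two-line monotonicity sandwich. Monotonicity of $\rank_\matroid$ can be taken as standard or read off the submodularity inequality invoked in the text immediately preceding the lemma (setting the added element to $e$ and using nonnegativity of the marginal). The only step that requires a moment of care is the identity $\rank_\matroid(I) = \rank_\matroid(A)$: a non-maximal independent subset of $A$ could have strictly smaller rank, and the conclusion would fail for it, so the maximality built into ``basis of $\matroid_A$'' is essential and should be stated explicitly.
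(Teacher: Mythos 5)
Your proof is correct and is essentially identical to the paper's: both establish $\rank_\matroid(I\cup\{e\})\le\rank_\matroid(A\cup\{e\})=\rank_\matroid(A)=\rank_\matroid(I)$ via monotonicity of the rank function and the fact that a basis $I$ of $\matroid_A$ satisfies $\rank_\matroid(I)=\rank_\matroid(A)$. Your version merely spells out the monotonicity steps and the role of maximality that the paper leaves implicit.
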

\begin{proof}
	Since $A$ blocks $e$, $\rank_\matroid(A\cup\{e\})=\rank_\matroid(A)$.
	Consider a basis $I$ of $\matroid_A$.
	$\rank_\matroid(I\cup\{e\})\leq 
	\rank_\matroid(A\cup\{e\})=\rank_\matroid(A)=\rank_\matroid(I).$
	Hence, $I$ blocks $e$ as well.
	\eat{
	Let $J \in \matroid_A$ such that $J \cup \{e\} \not\in \matroid$ (i.e., $J$ blocks $e$). 
	Let $J'\in \matroid_A$ be a basis containing $J$.
	Clearly, $J' \cup \{e\} \not\in \matroid$, 
	by the first property of Definition~\ref{defi:matroid}.
	
	Let $I$ be an arbitrary basis in $\matroid_A$.
	We know that $|I| = |J'|$. 
	Suppose by contradiction that 
	$I \cup \{e\} \in \matroid$. 
	By the second property of Definition~\ref{defi:matroid},
	we know that there exists an element 
	$x \in I \cup \{e\} \setminus J'$, 
	such that $J' \cup \{x\} \in \matroid$,
	Since $J'$ is maximal in $\matroid_A$, 
	we can see that $x \not\in A$. 
	So it must be the case that $x=e$.
	But $J' \cup \{e\} \in \matroid$ (as $J'$ blocks $e$), rendering a contradiction. 
	So $I$ also blocks $e$.
	}
\end{proof}

Then we define what is an optimal solution for a matroid with respect to a cost function $\mu$.

\begin{defi}\label{defi:matroid-opt}
	Given a matroid $\matroid(S,\indepfam)$, and 
	an injective cost/weight function $\mu : S \to \PR$, let
	$\mu(I) := \sum\nolimits_{e \in I} \mu(e)$ denote the total
        weight of elements in the independent set $I \in M$.
	We say $I$ is an optimal basis (with respect to $\mu$) 
	if $\mu(I)$ has the maximum value among all independent sets
	in $\indepfam$. 
	We define $\OPT_{\mu}(\matroid) = \max_{I\in \indepfam}\mu(I)$.  
	With slight abuse of notation, we may 
	also use $\OPT_{\mu}(\matroid)$ to denote the optimal 
	basis. 
	When $\mu$ is clear from the context, 
	we simply write $\OPT(\matroid)$.
\end{defi}

From now on, we assume the cost of each element is distinct.
It is well known that the optimal basis $\OPT(\matroid)$
is unique (under the distinctness assumption) 
and can be obtained by a simple greedy algorithm:
We first sort the elements in the decreasing order 
of their cost. Then, we attempt to add the elements greedily 
one by one in this order, to the current solution, which is initially empty.

We are given matroid $\matroid=(S,\indepfam)$ with cost function $\mu: S \to \PR$. For a subset $A \subseteq S$, we define 
$$
A_{\mu}^{\ge a} := \{ e \in A \betw \mu(e) \ge a \}.
$$ 
We define $A_{\mu}^{> a},A_{\mu}^{\le a},A_{\mu}^{< a}$ similarly.
Sometimes we omit the subscript $\mu$ if it is clear from the context.
Finally, the following characterizations of optimal solutions for $\matroid$ all follow
from the greedy procedure.

\begin{lemma}\label{lm:char-opt}
	For a matroid $\matroid(S,\indepfam)$, cost function $\mu : S \to \PR$
	and basis $I\in \indepfam$,
	the following statements are equivalent:
		\begin{OneLiners}
		\item[i.] $I$ is an optimal basis for $\matroid$ with 
		respect to $\mu$.
		\item[ii.]
		For any  $e\in I$, $S^{> \mu(e)}$ does not block $e$.
		\item[iii.]
		For any $e \in S \setminus I$, $I^{\ge \mu(e)}$ blocks $e$.
		\item[iv.]
		For any $r \in \R$, $I^{\ge r}$ is a basis in $\matroid_{S^{\ge r}}$.
              \end{OneLiners}
	
\end{lemma}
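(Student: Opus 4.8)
The plan is to prove the four statements equivalent by establishing a single cycle of implications (i)$\Rightarrow$(iv)$\Rightarrow$(iii)$\Rightarrow$(ii)$\Rightarrow$(i), using the matroid exchange axiom (property ii of Definition~\ref{defi:matroid}), the blocking characterization of Lemma~\ref{lm:blocks-1}, and the greedy procedure. Throughout I would use the ``dictionary'' that $A$ blocks $e$ if and only if some (equivalently, every) basis of $\matroid_A$ blocks $e$: the forward direction is exactly Lemma~\ref{lm:blocks-1}, and the reverse is immediate since a basis of $\matroid_A$ spans $A$, so $e$ lies in the span of that basis iff it lies in the span of $A$. I would also repeatedly pass between the blocking relation and the fundamental circuit $C(I,f)$ of an element $f\notin I$, i.e.\ the unique circuit contained in $I\cup\{f\}$, whose existence follows because $I$ is a basis.

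For (i)$\Rightarrow$(iv) I would argue by contradiction. Since $I^{\ge r}\subseteq I$ is independent, if it fails to be a basis of $\matroid_{S^{\ge r}}$ there is an element $f\in S^{\ge r}\setminus I$ with $I^{\ge r}\cup\{f\}$ independent, i.e.\ $I^{\ge r}$ does not block $f$. As $I$ is a basis it does block $f$, so the fundamental circuit $C(I,f)$ must contain some $e\in I$ with $\mu(e)<r\le\mu(f)$; exchanging $e$ for $f$ produces a basis of strictly larger weight, contradicting optimality. The step (iv)$\Rightarrow$(iii) is then immediate: for $f\notin I$ set $r=\mu(f)$, so $f\in S^{\ge r}$ and the basis $I^{\ge r}=I^{\ge\mu(f)}$ of $\matroid_{S^{\ge r}}$ spans $f$, hence $I^{\ge \mu(f)}$ blocks $f$.

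For (iii)$\Rightarrow$(ii) I would again go by contradiction: if some $e\in I$ were blocked by $S^{>\mu(e)}$, then a short rank computation shows $S^{>\mu(e)}$ can augment the independent set $I\setminus\{e\}$, producing $f\in S^{>\mu(e)}\setminus I$ with $(I\setminus\{e\})\cup\{f\}$ independent; consequently $e$ lies in the fundamental circuit $C(I,f)$. But statement (iii) applied to $f$ forces every element of $C(I,f)$ other than $f$ to have weight $\ge\mu(f)>\mu(e)$, which excludes $e$ — a contradiction. Finally, (ii)$\Rightarrow$(i) is where the greedy procedure enters. Using the blocking dictionary above, the set of elements already chosen by greedy when it examines $e$ is a basis of $\matroid_{S^{>\mu(e)}}$, so greedy admits $e$ iff $S^{>\mu(e)}$ does not block $e$; thus the greedy (optimal) basis is exactly $I^\ast=\{e : S^{>\mu(e)}\text{ does not block }e\}$. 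Statement (ii) says precisely $I\subseteq I^\ast$, and since both are bases they have equal cardinality, forcing $I=I^\ast$ and hence optimality.

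The main obstacle I anticipate is getting the two augmentation/exchange arguments exactly right: verifying that the augmenting element $f$ genuinely lands in the intended weight class ($S^{\ge r}$ for (i)$\Rightarrow$(iv), and $S^{>\mu(e)}$ for (iii)$\Rightarrow$(ii)), and that it truly enters the fundamental circuit through the element $e$ being swapped out. The other delicate point is the bookkeeping in (ii)$\Rightarrow$(i) showing that greedy's acceptance test for $e$ depends only on $S^{>\mu(e)}$ and not on the order of earlier choices, which is exactly what Lemma~\ref{lm:blocks-1} together with its easy converse provides. Once this blocking/rank dictionary is in place, the remaining links are essentially definitional.
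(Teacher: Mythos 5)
Your proposed cycle (i)$\Rightarrow$(iv)$\Rightarrow$(iii)$\Rightarrow$(ii)$\Rightarrow$(i) is correct, and each step can be filled in as you sketch it. Note, however, that the paper gives no proof of Lemma~\ref{lm:char-opt} at all: it only asserts that all four characterizations ``follow from the greedy procedure.'' Your route is therefore genuinely different from the intended one: you invoke greedy optimality only for (ii)$\Rightarrow$(i), and you handle the other three implications by direct exchange arguments (augmentation, fundamental circuits, and the blocking dictionary from Lemma~\ref{lm:blocks-1} together with its easy converse). An all-greedy proof would instead observe that the prefix of the greedy run after processing $S^{>r}$ is always a basis of $\matroid_{S^{> r}}$, and read off (ii), (iii), (iv) from that invariant; your approach costs you two standard matroid facts not stated in the paper (uniqueness of the fundamental circuit, and closure/blocking monotonicity, the latter of which the paper does record via submodularity of $\rank$), but it buys a self-contained argument in which greedy is quarantined to a single implication. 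Two points worth writing out explicitly in a full version: (a) in (i)$\Rightarrow$(iv) the augmenting element $f$ automatically lies in $S^{\ge r}\setminus I$, since any $f\in I$ with $\mu(f)\ge r$ would already belong to $I^{\ge r}$; and (b) in (iii)$\Rightarrow$(ii) you can avoid fundamental-circuit uniqueness entirely: since $\mu(f)>\mu(e)$ gives $I^{\ge \mu(f)}\subseteq I\setminus\{e\}$, statement (iii) plus monotonicity of blocking says $I\setminus\{e\}$ blocks $f$, which directly contradicts the independence of $(I\setminus\{e\})\cup\{f\}$ that your augmentation step produced.
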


\subsection{Uniform Sampling}

    The following na\"{\i}ve uniform sampling procedure
    will be used frequently.
    
    
\RestyleAlgo{boxed,ruled}
\SetAlgoVlined

    \begin{algorithm}[H]
    	
\LinesNumbered
\setcounter{AlgoLine}{0}
    	\caption{\UNIFORMSAMPLING($S,\varepsilon,\delta$)}
    	\label{algo:UNIFORM-SAMPLE-PROCEDURE}

    	\KwData{Arm set $S$, error bound $\varepsilon$, confidence level $\delta$.}
    	\KwResult{For each arm $a$, output the empirical mean
          $\hamean{a}$.}
        \smallskip
    	For each arm $a \in S$, sample it $\varepsilon^{-2}\ln(2\cdot
        \delta^{-1})/2$ times. Let $\hamean{a}$ be the empirical mean.

    \end{algorithm}
    
    The following lemma for Algorithm~\ref{algo:UNIFORM-SAMPLE-PROCEDURE},
    is an immediate consequence of Proposition~\ref{prop:chernoff}.1.
    
    \begin{lemma}\label{lm:UNIFORM-SAMPLE-PROCEDURE}
    	For each arm $a \in S$, we have that
    	$
    	\Pr\left[|\amean{a}-\hamean{a}| \ge \varepsilon\right] \le \delta.
    	$
    \end{lemma}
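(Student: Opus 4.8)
The plan is to read the result off directly from the two-sided concentration bound in Proposition~\ref{prop:chernoff}.1 (Hoeffding's inequality), which I expect to state that for $m$ i.i.d.\ samples $X_1,\dots,X_m$ of a random variable supported on $[0,1]$ with mean $\mu$, the empirical mean $\bar{X} = \frac{1}{m}\sum_{i=1}^m X_i$ satisfies $\Pr[|\bar{X} - \mu| \ge t] \le 2\exp(-2mt^2)$ for every $t > 0$.

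First I would fix an arm $a$ and recall that Algorithm~\ref{algo:UNIFORM-SAMPLE-PROCEDURE} draws exactly $m := \varepsilon^{-2}\ln(2\delta^{-1})/2$ i.i.d.\ samples from $\distr_a$, whose support lies in $[0,1]$ and whose mean is $\amean{a}$. The quantity $\hamean{a}$ is precisely the empirical mean of these $m$ draws, so the hypotheses of Proposition~\ref{prop:chernoff}.1 are satisfied with $\mu = \amean{a}$ and this value of $m$.

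Next I would apply the bound with deviation threshold $t = \varepsilon$ and substitute the chosen sample count. The exponent becomes $-2m\varepsilon^2 = -2 \cdot \tfrac{1}{2}\varepsilon^{-2}\ln(2\delta^{-1}) \cdot \varepsilon^2 = -\ln(2\delta^{-1})$, so the tail probability is at most $2\exp(-\ln(2\delta^{-1})) = 2 \cdot (\delta/2) = \delta$, which is exactly the claimed inequality.

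There is essentially no obstacle here: the sample count $m$ was reverse-engineered so that the exponent cancels against the leading factor of $2$. The only point requiring a moment's care is matching the precise form of Proposition~\ref{prop:chernoff}.1. If it is stated only as a one-sided bound $\Pr[\bar{X} - \mu \ge t] \le \exp(-2mt^2)$, I would instead apply it to each of the two tails and take a union bound; this reintroduces the factor of $2$ and yields the same final estimate $\delta$.
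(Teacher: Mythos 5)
Your proof is correct and is exactly the argument the paper intends: the paper states the lemma as ``an immediate consequence of Proposition~\ref{prop:chernoff}.1,'' and your derivation spells out that one-liner, including the correct handling of the two one-sided tails (Proposition~\ref{prop:chernoff}.1 is indeed stated as two separate one-sided bounds, so your union-bound fallback is the version that applies). The arithmetic $2\exp(-2m\varepsilon^2) = 2\exp(-\ln(2\delta^{-1})) = \delta$ with $m = \varepsilon^{-2}\ln(2\delta^{-1})/2$ checks out.
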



\section{An Optimal PAC Algorithm for the \matroidbanditstrong Problem}
\label{sec:matroid-PAC}

In this section, we
prove Theorem~\ref{theo:eps-opt-algo}
by presenting an algorithm
for \matroidbanditstrong\ with optimal sample complexity.
The algorithm is also a useful subprocedure
for both \matroidbanditexact\ and \matroidbanditAvg.

\subsection{Notation}

We first introduce 
an analogue of Lemma~\ref{lm:char-opt}
for $\varepsilon$-optimal solutions.

\begin{lemma}\label{lm:eps-opt-char}
	For a matroid $\matroid=(S,\indepfam)$ with 
	 cost function $\mu:S \to \PR$, and a basis $I$, 
	the following statements are equivalent:
	\begin{OneLiners}
		\item[1.] $I$ is  $\varepsilon$-optimal for $\matroid$ with
                  respect to $\mu$. 
		\item[2.] For any $e \in S \setminus I$, $I^{\ge \mu(e)-\varepsilon}$ blocks $e$.
		\item[3.]
 For any $r \in \R$, let $D_r = (S\setminus I)^{\ge r+\varepsilon} \cup I^{\ge r}$.
		$I^{\ge r}$ is a basis in $\matroid_{D_r}$.
              \end{OneLiners}
            \end{lemma}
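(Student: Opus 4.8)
The plan is to deduce the lemma from the characterization of \emph{exact} optimality in Lemma~\ref{lm:char-opt}, applied to the shifted weight function. Concretely, let $\nu := \newcostfunc{I}{\varepsilon}$ be the modified cost function of Definition~\ref{defi:eps-opt}, so that $\nu(e) = \mu(e) + \varepsilon$ for $e \in I$ and $\nu(e) = \mu(e)$ for $e \notin I$. By definition, statement~(1) --- that $I$ is $\varepsilon$-optimal with respect to $\mu$ --- is \emph{exactly} the statement that $I$ is an optimal basis of $\matroid$ with respect to $\nu$. Hence it suffices to recognize statements~(2) and~(3) as conditions (iii) and (iv) of Lemma~\ref{lm:char-opt} written for $\nu$ in place of $\mu$.

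For the translation I would simply compute the relevant threshold sets (writing $I_\nu^{\ge t}$ for thresholds taken with respect to $\nu$, and leaving unsubscripted superscripts with respect to $\mu$). For $e \in S \setminus I$ we have $\nu(e) = \mu(e)$ and for $a \in I$ we have $\nu(a) = \mu(a) + \varepsilon$, so $I_\nu^{\ge \nu(e)} = \{\, a \in I : \mu(a) + \varepsilon \ge \mu(e) \,\} = I^{\ge \mu(e) - \varepsilon}$; thus condition (iii) of Lemma~\ref{lm:char-opt} for $\nu$ is precisely the blocking condition of statement~(2). Similarly $I_\nu^{\ge r} = I^{\ge r - \varepsilon}$ and $S_\nu^{\ge r} = I^{\ge r-\varepsilon} \cup (S\setminus I)^{\ge r}$; substituting $s = r - \varepsilon$ (a bijection of $\R$, so the universal quantifier over $r$ is preserved) turns condition (iv) for $\nu$ into the assertion that $I^{\ge s}$ is a basis of $\matroid_{D_s}$ with $D_s = (S \setminus I)^{\ge s + \varepsilon} \cup I^{\ge s}$, which is exactly statement~(3). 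So all three statements are equivalent, since optimality and conditions (iii), (iv) of Lemma~\ref{lm:char-opt} are equivalent.

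The only real obstacle is that Lemma~\ref{lm:char-opt} is stated for an \emph{injective} cost function, whereas $\nu$ need not be injective: shifting the elements of $I$ up by $\varepsilon$ can create a tie between some $a \in I$ with $\nu(a) = \mu(a) + \varepsilon$ and some $f \notin I$ with $\mu(f) = \mu(a) + \varepsilon$. I would handle this in one of two ways. The clean route is to check that the two implications of Lemma~\ref{lm:char-opt} I invoke survive ties: each direction produces a swap $I - f + e$, and because the swapped-in element $e$ sits on or above the threshold while the swapped-out element $f$ sits strictly below it, one always obtains $\nu(f) < \nu(e)$ --- a \emph{strict} improvement --- so the optimality argument goes through verbatim, with boundary ties handled by the $\ge$ signs built into statements~(2) and~(3). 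Alternatively, one can break ties by an infinitesimal perturbation $\nu'(e) = \nu(e) + \eta\,\indicator[e \in I]$ in favour of $I$; for $\eta$ smaller than the smallest gap between distinct $\mu$-values, $I$ is optimal for $\nu$ iff it is optimal for the injective $\nu'$, and the sets $I_{\nu'}^{\ge \nu'(e)}$ collapse to exactly $I^{\ge \mu(e) - \varepsilon}$, reproducing the $\ge$ conventions of (2) and~(3). Either way the content beyond Lemma~\ref{lm:char-opt} is just this bookkeeping of boundary ties, and I expect verifying that the strict-improvement step is unaffected by ties to be the main point needing care.
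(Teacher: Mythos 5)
Your proposal is correct and follows exactly the paper's route: the paper's entire proof of this lemma is the single line ``Apply Lemma~\ref{lm:char-opt} with the cost function $\mu_{I,\varepsilon}$,'' and your computation of the threshold sets $I_\nu^{\ge \nu(e)} = I^{\ge \mu(e)-\varepsilon}$ and $S_\nu^{\ge r} = (S\setminus I)^{\ge r} \cup I^{\ge r-\varepsilon}$ is precisely the bookkeeping that line leaves implicit. Your concern about injectivity is a genuine point the paper glosses over (Lemma~\ref{lm:char-opt} is stated for injective costs, while $\mu_{I,\varepsilon}$ can have ties), and your first, ``clean'' route handles it soundly: the equivalences (i)$\Leftrightarrow$(iii)$\Leftrightarrow$(iv) of Lemma~\ref{lm:char-opt} do survive ties, by exactly the strict-swap observation you describe.

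One caveat on your alternative perturbation route: the condition ``$\eta$ smaller than the smallest gap between distinct $\mu$-values'' is not the right one, since the ties to be broken are between values $\mu(a)+\varepsilon$ (for $a \in I$) and $\mu(f)$ (for $f \notin I$); $\eta$ must be smaller than the least positive gap among the values of $\nu = \mu_{I,\varepsilon}$ itself. With your stated condition the claimed ``iff'' can fail: in a rank-one uniform matroid with $\mu(a) = 1$, $\mu(b) = 1+\varepsilon+\tau$ for $\tau < \varepsilon$, $I = \{a\}$, and $\eta = 2\tau < \varepsilon+\tau$, the basis $I$ is optimal for the perturbed $\nu'$ but not for $\nu$, so the perturbation wrongly certifies $\varepsilon$-optimality. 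With the corrected threshold (and a single-swap exchange argument for the direction ``optimal for $\nu'$ implies optimal for $\nu$''), that route also goes through.
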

\begin{proof}
	Apply Lemma~\ref{lm:char-opt} with the cost function $\mu_{I,\varepsilon}$,
	as defined in Definition~\ref{defi:eps-opt}.
\end{proof}

\begin{defi}[$\varepsilon$-Approximation Subset]
  \label{defi:eps-approx-subset}
	 Given a matroid $\matroid=(S,\indepfam)$ and cost function
         $\mu:S \to \PR$, 
	 let $A \subseteq B$ be two subsets of $S$. We say $A$ is an
         $\varepsilon$-approximate subset of $B$
	 if there exists an independent set $I\in\matroid_A$ 
	 such that $I$ is $\varepsilon$-optimal for $\matroid_B$ 
	 with respect to the cost function $\mu$.
	 
\end{defi}

\begin{lemma}
	\label{lm:approxsubset}
	Suppose $A$ is an $\varepsilon$-approximate subset of $B$,
	and $I\in\matroid_A$ 
	is $\varepsilon$-optimal for $\matroid_B$.
	For any $e \in B \setminus A$, $I^{\ge \mu(e)-\varepsilon}$ blocks $e$ and 
	$A^{\ge \mu(e)-\varepsilon}$ blocks $e$.
\end{lemma}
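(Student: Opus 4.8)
The plan is to reduce both assertions to the characterization of $\varepsilon$-optimality in Lemma~\ref{lm:eps-opt-char}, applied not to the full matroid but to the restriction $\matroid_B$. By the definition of an $\varepsilon$-approximate subset (Definition~\ref{defi:eps-approx-subset}), the independent set $I\in\matroid_A$ is $\varepsilon$-optimal for $\matroid_B$, and since $\varepsilon$-optimality (Definition~\ref{defi:eps-opt}) is a property of bases, $I$ is in particular a basis of $\matroid_B$. I would therefore invoke part~2 of Lemma~\ref{lm:eps-opt-char} with the matroid $\matroid_B$ and cost function $\mu$: for every $e \in B \setminus I$, the set $I^{\ge \mu(e)-\varepsilon}$ blocks $e$.

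Next I would check that the hypothesis $e \in B \setminus A$ is covered. Because $I \subseteq A$, we have $B \setminus A \subseteq B \setminus I$, so any $e \in B \setminus A$ indeed lies in $B \setminus I$ and the conclusion above applies. The one point worth spelling out is that ``blocks'' is defined through the rank function of the ambient matroid $\matroid$, whereas the lemma was applied inside $\matroid_B$; but the independent sets of $\matroid_B$ are exactly the independent sets of $\matroid$ contained in $B$, so $\rank_{\matroid_B}$ and $\rank_{\matroid}$ agree on all subsets of $B$. Since $I^{\ge \mu(e)-\varepsilon}\subseteq I \subseteq B$ and $e \in B$, blocking in $\matroid_B$ is literally blocking in $\matroid$, which gives the first claim that $I^{\ge \mu(e)-\varepsilon}$ blocks $e$.

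For the second claim I would transfer the blocking property from $I^{\ge \mu(e)-\varepsilon}$ up to $A^{\ge \mu(e)-\varepsilon}$ using monotonicity of blocking. Filtering $I \subseteq A$ by the common threshold $\mu(e)-\varepsilon$ gives $I^{\ge \mu(e)-\varepsilon}\subseteq A^{\ge \mu(e)-\varepsilon}$, and $e \notin A$ (as $e \in B \setminus A$) forces $e \notin A^{\ge \mu(e)-\varepsilon}$. The remark following the definition of blocking asserts that enlarging a blocking set, while keeping $e$ outside it, preserves blocking (it is a direct consequence of the submodularity of $\rank$), so $A^{\ge \mu(e)-\varepsilon}$ blocks $e$ as well.

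There is no genuine obstacle here: the argument is a short chase once the correct lemma is identified. The only subtlety is recognizing that the restriction $\matroid_B$, rather than $\matroid$ itself, is the object on which Lemma~\ref{lm:eps-opt-char} must be applied, together with the observation that the two rank functions coincide on subsets of $B$ so that ``blocking'' is unambiguous.
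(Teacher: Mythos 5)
Your proof is correct and follows essentially the same route as the paper: the first claim via Lemma~\ref{lm:eps-opt-char}(2) applied to $\matroid_B$, and the second via monotonicity of blocking from $I^{\ge \mu(e)-\varepsilon} \subseteq A^{\ge \mu(e)-\varepsilon}$. The extra care you take (noting $B \setminus A \subseteq B \setminus I$ and that $\rank_{\matroid_B}$ agrees with $\rank_{\matroid}$ on subsets of $B$) only spells out details the paper leaves implicit.
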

\begin{proof}
	$I^{\ge \mu(e)-\varepsilon}$ blocks $e$ because
	Lemma~\ref{lm:eps-opt-char}(2).
	$I^{\ge \mu(e)-\varepsilon}$ is an independent set of $A^{\ge \mu(e)-\varepsilon}$, so $A^{\ge \mu(e)-\varepsilon}$ blocks $e$ as well.
\end{proof}

Then we show that 
``is an $\varepsilon$-approximate subset of'' is a transitive relation.

\begin{lemma}\label{lm:chain-approx}
Let $A \subseteq B \subseteq C$. Suppose $A$ is an $\varepsilon_1$-approximate subset of $B$, and $B$ is an $\varepsilon_2$-approximate subset of $C$. Then $A$ is an $(\varepsilon_1+ \varepsilon_2)$-approximate subset of $C$.
\end{lemma}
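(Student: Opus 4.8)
The plan is to produce an explicit witness for $(\varepsilon_1+\varepsilon_2)$-optimality, reusing the witness that certifies that $A$ is an $\varepsilon_1$-approximate subset of $B$. By Definition~\ref{defi:eps-approx-subset}, fix $I_A\in\matroid_A$ that is $\varepsilon_1$-optimal for $\matroid_B$, and $I_B\in\matroid_B$ that is $\varepsilon_2$-optimal for $\matroid_C$. Before doing anything else I would record a rank-matching observation, which also guarantees the witness is well-typed: since $I_A$ is a basis of $\matroid_B$ with $I_A\subseteq A\subseteq B$, we have $\rank(B)=|I_A|\le\rank(A)\le\rank(B)$, so $\rank(A)=\rank(B)$, and likewise $\rank(B)=\rank(C)$. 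Thus $|I_A|=\rank(C)$, and since $I_A$ is independent in $\matroid$ and contained in $C$, it is in fact a basis of $\matroid_C$. So $I_A$ is a legitimate candidate, and it remains to verify that $I_A$ is $(\varepsilon_1+\varepsilon_2)$-optimal for $\matroid_C$.

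To verify optimality I would invoke Lemma~\ref{lm:eps-opt-char}(2) for $\matroid_C$: it suffices to show that for every $e\in C\setminus I_A$, the set $I_A^{\ge\mu(e)-\varepsilon_1-\varepsilon_2}$ blocks $e$ (blocking inside $\matroid_C$ agrees with blocking inside $\matroid$, as all sets involved lie in $C$). I would split on whether $e\in B$. If $e\in B\setminus I_A$, then $\varepsilon_1$-optimality of $I_A$ for $\matroid_B$ via Lemma~\ref{lm:eps-opt-char}(2) gives that $I_A^{\ge\mu(e)-\varepsilon_1}$ blocks $e$; since $I_A^{\ge\mu(e)-\varepsilon_1}\subseteq I_A^{\ge\mu(e)-\varepsilon_1-\varepsilon_2}$ and $e$ is in neither, the monotonicity of blocking (the submodularity remark after the definition of \emph{Block}) upgrades this to the desired conclusion.

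The substantive case, and the step I expect to be the main obstacle, is $e\in C\setminus B$. Writing $t=\mu(e)$, the $\varepsilon_2$-optimality of $I_B$ for $\matroid_C$ gives (since $e\in C\setminus I_B$) that $Y:=I_B^{\ge t-\varepsilon_2}$ blocks $e$. The key idea is to show that the smaller-threshold set $X:=I_A^{\ge t-\varepsilon_1-\varepsilon_2}$ already \emph{spans} $Y$, i.e.\ $\rank(X\cup Y)=\rank(X)$, and then to transfer the blocking of $e$ from $Y$ to $X$. For the spanning claim I would fix $f\in Y$, so $\mu(f)\ge t-\varepsilon_2$ and $f\in B$: if $f\in I_A$ then $\mu(f)\ge t-\varepsilon_1-\varepsilon_2$ puts $f\in X$; otherwise $f\in B\setminus I_A$, and $\varepsilon_1$-optimality of $I_A$ for $\matroid_B$ shows $I_A^{\ge\mu(f)-\varepsilon_1}$ blocks $f$, a set contained in $X$ because $\mu(f)-\varepsilon_1\ge t-\varepsilon_1-\varepsilon_2$, so by monotonicity $X$ blocks $f$. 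Hence each element of $Y$ is in $X$ or blocked by $X$; adding them to $X$ one at a time and applying monotonicity at each step yields $\rank(X\cup Y)=\rank(X)$. Finally, since $Y$ blocks $e$ and $Y\subseteq X\cup Y$, monotonicity gives that $X\cup Y$ blocks $e$, whence $\rank(X)\le\rank(X\cup\{e\})\le\rank(X\cup Y\cup\{e\})=\rank(X\cup Y)=\rank(X)$, forcing $\rank(X\cup\{e\})=\rank(X)$, i.e.\ $X$ blocks $e$. The only two delicate points are thus the rank-matching step (which is forced by the definition of approximate subset, so $I_A$ really is a basis of $\matroid_C$) and this "spanning transitivity," both of which rely solely on monotonicity/submodularity of blocking, so no new matroid machinery is needed.
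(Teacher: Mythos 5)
Your proof is correct, and it follows the same skeleton as the paper's: you use the same witness (the $\varepsilon_1$-optimal basis $I_A\subseteq A$ for $\matroid_B$) and the same case split on whether $e$ lies in $B$, with the $e\in B$ case handled identically. The difference lies in the key case $e\in C\setminus B$. The paper gets that $B^{\ge \mu(e)-\varepsilon_2}$ blocks $e$ from Lemma~\ref{lm:approxsubset}, then applies Lemma~\ref{lm:eps-opt-char}(3) to exhibit $I_A^{\ge \mu(e)-(\varepsilon_1+\varepsilon_2)}$ as a basis of the set $D_r=(B\setminus I_A)^{\ge \mu(e)-\varepsilon_2}\cup I_A^{\ge \mu(e)-(\varepsilon_1+\varepsilon_2)}\supseteq B^{\ge \mu(e)-\varepsilon_2}$, and finishes with Lemma~\ref{lm:blocks-1} (every basis of a blocking set blocks). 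You instead work with the second witness $I_B$ and re-derive that machinery by hand: you prove element-by-element that $X=I_A^{\ge \mu(e)-\varepsilon_1-\varepsilon_2}$ spans $Y=I_B^{\ge \mu(e)-\varepsilon_2}$, and then transfer blocking from $Y$ to $X$ via an explicit rank computation --- which is exactly what Lemma~\ref{lm:eps-opt-char}(3) plus Lemma~\ref{lm:blocks-1} package for the paper. Your route is longer but more self-contained, needing only Lemma~\ref{lm:eps-opt-char}(2) and monotonicity of blocking, and it makes explicit two points the paper glosses over: first, that $\rank(A)=\rank(B)=\rank(C)$, so the witness genuinely is a basis of $\matroid_C$ (without which ``$(\varepsilon_1+\varepsilon_2)$-optimal for $\matroid_C$'' is not even well-posed, since Definition~\ref{defi:eps-opt} and Lemma~\ref{lm:eps-opt-char} apply to bases); and second, that the first case must cover all of $B\setminus I_A$, including $A\setminus I_A$, whereas the paper's proof only states it for $B\setminus A$. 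The paper's route buys brevity by leaning on lemmas already established; yours buys robustness and completeness at the cost of repeating, in elementary form, an argument the paper had already abstracted.
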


\begin{proof}
Let $I \in \matroid_A$ be $\varepsilon_1$-optimal for $\matroid_B$. We prove it is $(\varepsilon_1+\varepsilon_2)$-optimal for $\matroid_C$.
For any element $e \in B \setminus A$, $I^{\ge \mu(e)-\varepsilon_1}$ blocks $e$. 
So $I^{\ge \mu(e) - (\varepsilon_1 + \varepsilon_2)}$ blocks $e$ as well. 
For $e \in C \setminus B$, we have $B^{\ge \mu(e)-\varepsilon_2}$ blocks $e$,
by Lemma~\ref{lm:approxsubset}. 
Set $r=\mu(e)-(\varepsilon_1+\varepsilon_2)$.
Using Lemma~\ref{lm:eps-opt-char}(3) with
$D_r=(B\setminus I)^{\ge \mu(e)-\varepsilon_2} \cup I^{\ge \mu(e)-(\varepsilon_1+\varepsilon_2)}$, we can see
that
$I^{\ge \mu(e)-(\varepsilon_1+\varepsilon_2)}$ is a basis in $\matroid_{D_r}$. 
Clearly $B^{\ge \mu(e)-\varepsilon_2} \subseteq D_r$.
So $D_r$ blocks $e$, which implies $I^{\ge \mu(e)-(\varepsilon_1+\varepsilon_2)}$ blocks $e$.
Hence, by Lemma~\ref{lm:eps-opt-char}, $I$ is $(\varepsilon_1+\varepsilon_2)$-optimal for $\matroid_C$.
\end{proof}

\subsection{Na\"ive Uniform Sampling Algorithm}

We start with a na\"ive uniform sampling algorithm, which samples each arm enough times 
to ensure that with high probability 
the empirical means are all within $\varepsilon/2$ from 
the true means,
and then outputs the optimal solution with respect 
to the empirical means.
The algorithm is a useful procedure in our final algorithm.

\begin{algorithm}[H]
\LinesNumbered
\setcounter{AlgoLine}{0}
\caption{\NAIVE($\matband,\varepsilon,\delta$)}
\label{algo:NAIVE-PROCEDURE}

	\KwData{A \matroidbanditstrong\ instance $\matband=(S,\matroid)$, with
	$\rank(\matroid) = k$, approximation error $\varepsilon$, confidence level $\delta$.}
	\KwResult{A basis $I$ in $\matroid$.}
        \smallskip
	$\hamean{} \leftarrow \UNIFORMSAMPLING(S,\varepsilon/2,\delta/|S|)$
	
	{\bf Return} The optimal solution $I$ with respect to the
        empirical means.
\end{algorithm}


\begin{lemma}\label{lm:NAIVE}
  The \NAIVE($\matband,\varepsilon,\delta$) algorithm outputs an
  $\varepsilon$-optimal solution for $\matband$ with probability at least
  $1-\delta$. The number of samples is $O(|S|\varepsilon^{-2} \cdot
  (\ln\delta^{-1} + \ln |S|))$.
\end{lemma}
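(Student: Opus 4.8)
The plan is to treat the two assertions separately. The sample-complexity bound is a routine count: \UNIFORMSAMPLING is invoked with error parameter $\varepsilon/2$ and confidence $\delta/|S|$, so by its specification each of the $|S|$ arms is sampled $(\varepsilon/2)^{-2}\ln(2|S|/\delta)/2 = 2\varepsilon^{-2}\ln(2|S|/\delta)$ times; summing over all arms gives $O(|S|\varepsilon^{-2}(\ln\delta^{-1}+\ln|S|))$ samples, as claimed. For the correctness guarantee I would first set up the high-probability event. Applying Lemma~\ref{lm:UNIFORM-SAMPLE-PROCEDURE} with error $\varepsilon/2$ and confidence $\delta/|S|$ gives $\Pr[|\amean{a}-\hamean{a}|\ge \varepsilon/2]\le \delta/|S|$ for each arm $a$, and a union bound over the $|S|$ arms shows that the event $\event := \{\,|\amean{a}-\hamean{a}|<\varepsilon/2\ \text{for all } a\in S\,\}$ holds with probability at least $1-\delta$. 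The rest of the argument conditions on $\event$.

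The core step is to convert the exact optimality of $I$ for the empirical means $\hamean{}$ into $\varepsilon$-optimality of $I$ for the true means $\mu$, using the blocking characterizations. Since $I$ is optimal for $\hamean{}$, Lemma~\ref{lm:char-opt}(iii) gives that for every $e\in S\setminus I$ the set $I_{\hamean{}}^{\ge \hamean{e}}=\{a\in I:\hamean{a}\ge\hamean{e}\}$ blocks $e$. On $\event$, any $a\in I$ with $\hamean{a}\ge\hamean{e}$ satisfies $\mu(a)>\hamean{a}-\varepsilon/2\ge\hamean{e}-\varepsilon/2>\mu(e)-\varepsilon$, so $I_{\hamean{}}^{\ge \hamean{e}}\subseteq I_{\mu}^{\ge \mu(e)-\varepsilon}$. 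Since $e\notin I\supseteq I_{\mu}^{\ge \mu(e)-\varepsilon}$ and blocking is preserved under taking supersets (the monotonicity remark following the definition of \emph{Block}), the larger set $I_{\mu}^{\ge \mu(e)-\varepsilon}$ also blocks $e$. As this holds for every $e\in S\setminus I$, Lemma~\ref{lm:eps-opt-char}(2) certifies that $I$ is $\varepsilon$-optimal with respect to $\mu$, finishing the proof on $\event$.

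I expect the only non-routine part to be keeping the directions of the inequalities straight: one must verify the containment $I_{\hamean{}}^{\ge \hamean{e}}\subseteq I_{\mu}^{\ge \mu(e)-\varepsilon}$ (not the reverse) and then apply monotonicity of blocking in the direction ``subset blocks $\Rightarrow$ superset blocks.'' A minor technical wrinkle I would flag is that Lemmas~\ref{lm:char-opt} and~\ref{lm:eps-opt-char} are stated for injective cost functions, while the empirical means $\hamean{}$ may have ties; I would resolve this by fixing a consistent tie-breaking rule (say, by arm index) so that ``the optimal solution with respect to the empirical means'' is a well-defined basis and the greedy characterization of Lemma~\ref{lm:char-opt} applies to it.
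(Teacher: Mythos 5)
Your proposal is correct and follows essentially the same route as the paper's proof: a union bound to get $|\amean{a}-\hamean{a}|\le\varepsilon/2$ for all arms, then the blocking characterization of the empirical optimum, the containment $I_{\hat\mu}^{\ge \hamean{e}}\subseteq I_{\mu}^{\ge \amean{e}-\varepsilon}$, monotonicity of blocking, and Lemma~\ref{lm:eps-opt-char}(2). Your tie-breaking remark is a reasonable extra precaution that the paper's proof silently elides, but it does not change the argument.
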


\begin{proof}
  By Lemma~\ref{lm:UNIFORM-SAMPLE-PROCEDURE} and a simple union bound,
  we have $|\amean{e} - \hamean{e}| \le \varepsilon/2$ simultaneously for
  all arms $e \in S$ with probability $1-\delta$. Conditioning on that
  event, let $I$ be the returned basis.  For an arm $e \not\in I$, we
  have $I_{\hat\mu}^{\ge \hamean{e}}$ blocks $e$.  Note that for all
  arm $a \in I$, if $\hamean{a} \ge \hamean{e}$, we must have $\amean{a}
  \ge \amean{e}-\varepsilon$.  Hence, $I_{\hat\mu}^{\ge \hamean{e}}
  \subseteq I_{\mu}^{\ge \amean{e}-\varepsilon}$.  So $I_{\mu}^{\ge
    \amean{e}-\varepsilon}$ blocks $e$.  Then we have $I$ is
  $\varepsilon$-optimal by Lemma~\ref{lm:eps-opt-char}.  The sample
  complexity follows from the algorithm statement.
\end{proof}

\subsection{Sampling and Pruning}

Our optimal PAC algorithm 
applies the sampling and pruning 
technique, initially developed 
in the celebrated work of Karger, Klein and Tarjan~\cite{karger1995randomized}. 
They used the technique to obtain 
an expected linear-time algorithm for computing 
the minimum spanning tree.

We first describe the high level idea from
\cite{karger1995randomized}, which will be instructive for our later development.
Suppose we want to find the maximum spanning tree (MST).
We first construct a subgraph $F$
by sampling each edge with probability $p$; this subgraph may not be
connected, so we solve the maximum-weight spanning forest $I$ of $F$.
The key idea is this: we can use $I$ to prune a lot ``useless'' edges in
the original graph.  Formally, an edge $e=(u,v)$ is useless if edges
with larger cost in $I$ can connect $u$ and $v$: this is because the
cheapest edge in a cycle does not belong to the MST).  (In other words,
$e$ is useless if it is blocked by $I^{> \mu(e)}$.)  Having removed
these useless edges, we again recurse on the remaining graph, which
now has much fewer edges, to find the MST.
A crucial ingredient of the analysis in 
\cite{karger1995randomized} is to show 
that $I$ can indeed prune a lot of edges.


A proof from \cite{karger1995randomized, karger1998random} or 
\cite[pp.~299-300]{motwani2010randomized} shows that 
an optimal solution from a random subset 
can help us prune a substantial amount of elements.

\begin{lemma}\label{lm:rand-samp}(\cite[Lemma 2.1 and Remark 2.3]{karger1995randomized}) 
	Given a matroid $\matroid=(S,\indepfam)$ with an injective cost function $\mu: S \to \PR$,
	sample a subset $F$ of $S$ by selecting each element
	independently with probability $p$.
	An element $e \in S$ is called \emph{$F$-good} if $F^{>\mu(e)}$
	does not block $e$, else 
	it is \emph{$F$-bad}. 
	If the r.v.\ $X$ denotes the number of $F$-good elements in $S$,
	then 
	$X$ is stochastically dominated by $\NegBer(\rank(\matroid);p)$.
\end{lemma}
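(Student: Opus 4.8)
The plan is to run the greedy algorithm on the sampled set $F$ while revealing the random choices ``$e \in F$?'' one at a time in \emph{decreasing} order of cost, and to charge each good element to a fresh independent coin flip, so that the count of good elements reduces to a waiting time for $k$ successes.

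First I would sort $S$ as $e_1,e_2,\ldots,e_n$ with $\mu(e_1) > \cdots > \mu(e_n)$ (costs are distinct by injectivity). The crucial observation is that whether $e_i$ is $F$-good depends only on $F^{>\mu(e_i)} = F \cap \{e_1,\ldots,e_{i-1}\}$, i.e.\ only on the membership indicators $\indicator[e_1 \in F],\ldots,\indicator[e_{i-1}\in F]$, and \emph{not} on $\indicator[e_i\in F]$ itself. So if I reveal these indicators in the order $i=1,2,\ldots$, the event ``$e_i$ is good'' is determined before I flip the coin for $e_i$; conditioned on the whole history up to that point, the event $\{e_i\in F\}$ is therefore still an independent $\mathrm{Bernoulli}(p)$ trial.

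Next I would track $r_i := \rank(F^{>\mu(e_i)})$, which is non-decreasing in $i$ since $F^{>\mu(e_i)}$ grows with $i$. Because adding a single element changes rank by at most $1$, the value of $r$ increases by exactly $1$ across $e_i$ precisely when $e_i \in F$ \emph{and} $F^{>\mu(e_i)}$ does not block $e_i$ (i.e.\ when $e_i$ is good and sampled), and is unchanged otherwise. Two consequences follow: (a) the total number of good-and-sampled elements is at most $\rank(\matroid)=k$, since $r$ is capped at $k$; and (b) once $r=k$, every later set $F^{>\mu(e_j)}$ spans $\matroid$ and hence blocks $e_j$, so no subsequent element can be good. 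Thus the supply of good elements is exhausted as soon as $k$ of them happen to be sampled.

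Finally I would package this as stochastic domination. Restricting attention to the subsequence of good elements, each one is independently in $F$ with probability $p$ by the first step, and by (a)--(b) the good elements cease once $k$ of these trials succeed. Hence $X$, the number of good elements, is at most the number of $\mathrm{Bernoulli}(p)$ trials needed to collect $k$ successes, which is exactly $\NegBer(k;p)$; should the list run out before $k$ successes occur, $X$ is only smaller, so the inequality is preserved. This gives $X \preceq \NegBer(\rank(\matroid);p)$. The main obstacle is making the first step rigorous: one must set up the coupling so that each element's goodness is measurable with respect to the coins already revealed, since this is exactly what guarantees that every good element contributes a \emph{fresh, independent} $\mathrm{Bernoulli}(p)$ trial and legitimizes the reduction to a negative binomial waiting time.
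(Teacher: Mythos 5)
Your proof is correct: the deferred-decisions argument (reveal coins in decreasing cost order, note that goodness of $e_i$ is determined by earlier coins only, and observe that sampled good elements increase the rank of the sampled prefix by exactly one until it hits $\rank(\matroid)$) is precisely the standard proof of this sampling lemma. The paper itself does not prove the lemma but cites \cite[Lemma 2.1 and Remark 2.3]{karger1995randomized} (with the proof also in \cite[pp.~299--300]{motwani2010randomized}), and your argument is essentially that proof, correctly transplanted from graphic matroids to general matroids.
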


We also introduce a lemma which shows an $\varepsilon$-optimal solution $I$ in $F$ 
can be used to eliminate some sub-optimal arms.

\begin{lemma}\label{lm:F-eliminate}
	For a matroid $\matroid=(S,\indepfam)$ with cost function $\mu:S \to \PR$, 
	Let $F \subseteq S$ be a subset, 
	and $I$ be an $\alpha$-optimal basis for $\matroid_F$
	for some $\alpha>0$.
	If an element $e \in S \setminus I$ is $F$-bad, 
	$I^{\ge \mu(e)-\alpha}$ blocks $e$.
\end{lemma}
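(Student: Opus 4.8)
The plan is to deduce the conclusion from the $\alpha$-optimality characterization in Lemma~\ref{lm:eps-opt-char}, together with the monotonicity of blocking and Lemma~\ref{lm:blocks-1}. First I would unpack the hypothesis: by the definition of $F$-good/$F$-bad in Lemma~\ref{lm:rand-samp}, saying that $e$ is $F$-bad means precisely that $F^{>\mu(e)}$ blocks $e$. The conclusion instead concerns the smaller, downward-shifted set $I^{\ge \mu(e)-\alpha}$, so the heart of the argument is to transport a blocking statement from $F^{>\mu(e)}$ to this set. Since the threshold in the conclusion is $\mu(e)-\alpha$ rather than $\mu(e)$, and since $e$ may or may not lie in $F$, it is cleanest to split on whether $e \in F$.

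In the case $e \in F$, we have $e \in F \setminus I$, and since $I$ is $\alpha$-optimal for $\matroid_F$, applying Lemma~\ref{lm:eps-opt-char}(2) to $\matroid_F$ (with $S$ replaced by $F$ and $\varepsilon$ by $\alpha$) gives directly that $I^{\ge \mu(e)-\alpha}$ blocks $e$. Here blocking inside $\matroid_F$ coincides with blocking inside $\matroid$, as all sets involved lie in $F$ and $\rank_{\matroid_F}$ agrees with $\rank_\matroid$ on subsets of $F$. Note this case does not even use that $e$ is $F$-bad.

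In the case $e \notin F$, I would instead invoke the third characterization, Lemma~\ref{lm:eps-opt-char}(3), for $\matroid_F$ at the threshold $r = \mu(e)-\alpha$. Since $r+\alpha = \mu(e)$, this yields that $I^{\ge \mu(e)-\alpha}$ is a basis of $\matroid_{D_r}$, where $D_r = (F\setminus I)^{\ge \mu(e)} \cup I^{\ge \mu(e)-\alpha}$. The key bookkeeping step is the inclusion $F^{>\mu(e)} \subseteq D_r$: elements of $F\setminus I$ with mean above $\mu(e)$ lie in the first part, while elements of $I$ with mean above $\mu(e)$ certainly have mean above $\mu(e)-\alpha$ and lie in the second part. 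Because $e$ is $F$-bad, $F^{>\mu(e)}$ blocks $e$, so by monotonicity of blocking $D_r$ blocks $e$ as well; here $e \notin D_r$ since $D_r \subseteq F$ and $e \notin F$, so the blocking statement is well-defined. Finally, $I^{\ge \mu(e)-\alpha}$ is a basis of $\matroid_{D_r}$, so Lemma~\ref{lm:blocks-1} transfers the blocking from $D_r$ to $I^{\ge \mu(e)-\alpha}$, which is exactly the claim.

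The main obstacle, and the only genuinely delicate point, is the threshold bookkeeping that reconciles the $F$-bad hypothesis (phrased with threshold $\mu(e)$) with the $\alpha$-shifted conclusion: one must pick $r = \mu(e)-\alpha$ so that the $+\alpha$ slack built into Lemma~\ref{lm:eps-opt-char}(3) restores exactly the threshold $\mu(e)$ on the $F\setminus I$ part, guaranteeing $F^{>\mu(e)} \subseteq D_r$, while keeping $e$ outside $D_r$ so that blocking remains meaningful. The case split is what ensures the latter, since when $e\in F$ the element $e$ itself would otherwise land in $(F\setminus I)^{\ge\mu(e)}$; distinctness of the means keeps these sets clean throughout.
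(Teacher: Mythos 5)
Your proof is correct and takes essentially the same route as the paper: the paper likewise sets $r=\mu(e)-\alpha$, forms $D=(F\setminus I)^{\ge\mu(e)}\cup I^{\ge\mu(e)-\alpha}$, uses the $\alpha$-optimality of $I$ (via Lemma~\ref{lm:char-opt}(4) with the modified costs, which is exactly Lemma~\ref{lm:eps-opt-char}(3)) to see that $I^{\ge\mu(e)-\alpha}$ is a basis of $\matroid_D$, notes $F^{\ge\mu(e)}\subseteq D$, and transfers blocking through Lemma~\ref{lm:blocks-1}. Your case split on $e\in F$ versus $e\notin F$ is a minor refinement rather than a different argument: the paper runs the $D$-argument uniformly (implicitly tolerating $e\in D$ when $e\in F$), whereas you dispose of the $e\in F$ case directly by Lemma~\ref{lm:eps-opt-char}(2), which keeps every blocking statement well-defined.
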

\begin{proof}
	As $e$ is $F$-bad, $F^{\ge \mu(e)}$ blocks $e$. 
	Let $r=\mu(e)-\alpha$, 
	and 
	$$
	D = (F \setminus I)^{\ge r+\alpha} \cup I^{\ge r} 
	= (F \setminus I)^{\ge \mu(e)} \cup I^{\ge \mu(e)-\alpha}.
	$$ 
	(In other words, we first add $\alpha$ to 
	the cost of every element in $I$, then consider all
	element with cost at least $\mu(e)$ in $F$).
	Then by Lemma~\ref{lm:char-opt}(4) and the fact 
	$I$ is $\alpha$-optimal for $\matroid_F$, 
	$I^{\ge \mu(e)-\alpha}$ is maximal for $\matroid_D$
	(in fact, it is optimal for $\matroid_D$ w.r.t.
	the modified cost function). 
	Clearly $F^{\ge \mu(e)} \subseteq D$, so $D$ blocks $e$ as well. Hence $I^{\ge \mu(e) - \alpha}$ also blocks $e$, 
	by Lemma~\ref{lm:blocks-1}.
\end{proof}

\subsection{Our Optimal PAC Algorithm}

Now, we present our algorithm for the PAC case, which is based on the
sampling-and-pruning technique discussed above.  Let
$p=\pvalue$. 
The algorithm runs as follows:
If the number of arms $|S|$ is sufficiently small, we simply run the
na\"{i}ve uniform sampling algorithm.  Otherwise, we sample a subset $F$
of $S$ by selecting each arm with probability $p$ independently, and
recurse on the sub-instance $\matband_F = (F,\matroid_F)$ to find an
$\alpha$-optimal solution $I$, where $\alpha=\varepsilon/3$.  Next, we
uniformly sample each arm in $S$ by calling
\UNIFORMSAMPLING($S,\lambda,\delta \cdot p / 8k$), where
$\lambda=\varepsilon/12$.  Then, we use $I$ to eliminate those
sub-optimal arms in $S\setminus I$.  More precisely, a sub-optimal arm
$e$ is blocked by the arms of $I$ with empirical values larger than
$\hat{\mu}_e-\alpha-2\lambda$. Finally, we invoke the algorithm
recursively on the remaining arms to find an $\alpha$-optimal solution,
which we output as the final result.  The pseudo-code can be found in
Algorithm~\ref{algo:RECURSIVE-PRUNING}. 

Note that \UNIFORMSAMPLING\ (step 6)
is the only step in which we take samples from the arms. 
Also note that in both recursive calls
we set the approximation error to be $\alpha=\varepsilon/3$.
Effectively, this makes sure that 
an arms surviving in deeper recursive call are sampled more times.
This feature is shared by other elimination-based method,
such as \cite{even2002pac,zhou2014optimal}. 
However, the way we choose which arms should be
eliminated is quite different.


\begin{algorithm}[t]

\LinesNumbered
\RestyleAlgo{boxed,ruled}
\SetAlgoVlined
\setcounter{AlgoLine}{0}
	\caption{\RECURPRUN($\matband,\varepsilon,\delta$)}
	\label{algo:RECURSIVE-PRUNING}
	\KwData{A \matroidbanditstrong\ instance $\matband=(S,\matroid)$, with
	$\rank(\matroid) = k$, approximation error $\varepsilon$, confidence level $\delta$.}
	\KwResult{A basis $I$ in $\matroid$.}
\smallskip	
	
	\uIf{$|S| \le 2p^{-2} \cdot \max(4 \cdot \ln 8\delta^{-1},k)$}{
		{\bf Return} \NAIVE($\matband,\varepsilon,\delta$)
	}
	
	Sample a subset $F \subseteq S$ by choosing each element with probability $p$ independently.
	
	$\alpha\leftarrow \varepsilon/3$, $\lambda\leftarrow \varepsilon/12$
	
	$I \leftarrow \RECURPRUN(\matband_F=(F,\matroid_F),\alpha,\delta/8)$ \label{line:rec-call-1}
	
	$\hamean{} \gets$ \UNIFORMSAMPLING($S,\lambda,\delta
        \cdot p / 8k$) \label{line:uni-sampl} 
	
	$S' \leftarrow I \cup \{ e \in S \setminus I \mid \text{ $I_{\hat\mu}^{\ge \hamean{e}-\alpha-2\lambda}$ does not block $e$}\}$ 
	
	{\bf Return} \RECURPRUN($\matband_{S'}=(S',\matroid_{S'}),\alpha
        ,\delta/4$) 
\end{algorithm}

\subsection{Analysis of the sample complexity}

In this subsection, we analyze \RECURPRUN\ and 
prove Theorem~\ref{theo:eps-opt-algo}.

\vspace{0.2cm}
\noindent
{\bf Theorem~\ref{theo:eps-opt-algo}	}
(rephrased)
{\em	
	Given a \matroidbanditstrong\ instance
	$\matband=(S,\matroid)$,
	Algorithm $\RECURPRUN\newline(\matband,\varepsilon,\delta)$\ 
	returns an $\varepsilon$-optimal solution,
	with probability at least $1-\delta$, 
	and uses at most
	$$
	O(n\varepsilon^{-2} \cdot (\ln k + \ln \delta^{-1})) 
	$$	
	samples. Here $k = \rank(\matroid)$, and $n = |S|$.
}

Let $c_1,c_2$ be two constants to be specified later. 
We will prove
by induction on $|S|$
that with probability at least $1-\delta$, \RECURPRUN($\matband=(S,\matroid),\varepsilon,\delta$) returns an $\varepsilon$-optimal solution,
using at most $c_1\cdot(|S|\varepsilon^{-2} \cdot (\ln k + \ln \delta^{-1}
+ c_2))$ samples.  Remember that $p = \pvalue$.

We first consider the simple case
where $|S|$ is not much larger than $k$.  
When $|S| \le 2p^{-2} \cdot \max(4 \cdot \ln 8\delta^{-1},k)$, 
we have that 
$\ln |S| = O(\ln \delta^{-1} + \ln k)$.
So the number of samples of \NAIVE\ is $O(|S|\varepsilon^{-2}\cdot(\ln |S|
+ \ln \delta^{-1})) = O(|S|\varepsilon^{-2}\cdot(\ln k + \ln
\delta^{-1}))$; 
by Lemma~\ref{lm:NAIVE}, the returned basis 
is $\varepsilon$-optimal with probability at least $1-\delta$. 
Hence the theorem holds in this case.

Now consider the case where $|S| > 2p^{-2} \cdot \max(4 \cdot
\ln 8\delta^{-1},k)$, and inductively assume that the theorem is
true for all instances of size smaller than $|S|$.
We first need the following lemma,
which describes the good events that 
happen with high probability.

\begin{lemma}\label{lm:good-F}
	Let $O$ be the unique optimal solution for $\matband=(S,\matroid)$. 
	With probability at least $1-\delta/2$, 
	the following statements hold simultaneously. 
	\begin{OneLiners}
		\item[1.] $|F| \le 2p \cdot |S|$\quad ($F$ is obtained in Line 3).
		\item[2.] There are at most $p \cdot |S|$ $F$-good elements in $S$.
		\item[3.] $|\mu_e-\hat{\mu}_e| \le \lambda$,
		for all elements $e\in O \cup I$ \quad ($I$ is obtained in Line 5).
		\item[4.] $I$ is an $\alpha$-optimal solution for $F$.
        \end{OneLiners}
\end{lemma}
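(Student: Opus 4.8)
The plan is to treat the four events separately, bound the failure probability of each, and then take a union bound, arranging the constants so that the total is at most $\delta/2$. Throughout I would keep in mind that $p = \pvalue$ and that we are in the case $|S| > 2p^{-2}\cdot\max(4\ln 8\delta^{-1}, k)$; multiplying these by $p^2$ (resp.\ by $p$) gives the two workhorse inequalities $p^2|S| > 8\ln 8\delta^{-1}$ and $p^2|S| > 2k$, which I will use repeatedly to make the Chernoff estimates small.

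First I would dispatch the two events that depend only on the random subset $F$ chosen in Line~3. For statement~1, note $|F| \sim \Bin(|S|, p)$ has mean $p|S|$, so an upper-tail Chernoff bound gives $\Pr[|F| > 2p|S|] \le \exp(-p|S|/3)$; since $p|S|$ dwarfs $\ln\delta^{-1}$ this is far below $\delta/8$, and in particular $|F| \le 2p|S| < |S|$ on this event. For statement~2, I would invoke Lemma~\ref{lm:rand-samp}, which says the number $X$ of $F$-good elements is stochastically dominated by $\NegBer(k; p)$. The key translation is that having more than $p|S|$ trials before the $k$-th success is the same as having fewer than $k$ successes in the first $p|S|$ trials, so $\Pr[X > p|S|] \le \Pr[\Bin(p|S|, p) < k]$. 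Since this binomial has mean $p^2|S| > 2k$, a lower-tail Chernoff bound yields $\Pr[\Bin(p|S|,p) < k] \le \exp(-p^2|S|/8) \le \delta/8$ using the inequality above.

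Next, for statement~4 I would appeal to the induction. On the (overwhelmingly likely) event that statement~1 holds, $\matband_F$ is a strictly smaller instance, so the inductive hypothesis applied to the Line~5 call $\RECURPRUN(\matband_F,\alpha,\delta/8)$ guarantees that it returns an $\alpha$-optimal basis for $\matroid_F$ with probability at least $1-\delta/8$; hence statement~4 fails with probability at most $\delta/8$ plus the negligible probability that statement~1 itself fails. For statement~3, the crucial structural observation is that the samples drawn by \UNIFORMSAMPLING\ in Line~6 are fresh, and therefore independent of all the randomness that determines $I$ (the choice of $F$ in Line~3 and the samples consumed inside the Line~5 recursion). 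I would thus condition on the realized value of $I$: after conditioning, $O\cup I$ is a \emph{fixed} set of at most $2k$ arms, since $|O|=k$ and $|I|=\rank(\matroid_F)\le k$, and Lemma~\ref{lm:UNIFORM-SAMPLE-PROCEDURE} with confidence parameter $\delta p/8k$ gives $\Pr[|\mu_e - \hat\mu_e|\ge\lambda]\le \delta p/8k$ for each such $e$. A union bound over these $\le 2k$ arms bounds the conditional failure probability by $\delta p/4$, uniformly in $I$ and hence unconditionally.

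Finally I would collect the pieces: the four failure probabilities are at most $\exp(-p|S|/3)$, $\delta/8$, $\delta/8$, and $\delta p/4$ respectively, whose sum is comfortably below $\delta/2$. The step I expect to demand the most care is statement~3 — precisely articulating why the Line~6 samples are independent of the random set $I$, so that the union bound runs over a set of size $2k$ rather than over all $n$ arms. This is exactly the point that produces a $\ln k$ rather than a $\ln n$ factor, and so is the heart of why the overall sample complexity carries $\ln k$. A secondary point requiring attention is the negative-binomial-to-binomial tail conversion in statement~2, together with the (routine) handling of the integrality of $p|S|$ in the Chernoff estimates.
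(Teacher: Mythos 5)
Your proposal is correct and follows essentially the same route as the paper's proof: the same four-way union bound with each event given failure probability roughly $\delta/8$, the same Chernoff bound for $|F|\le 2p|S|$, the same negative-binomial-to-binomial conversion via Lemma~\ref{lm:rand-samp} for the count of $F$-good elements, the induction hypothesis (conditioned on $|F|<|S|$) for $\alpha$-optimality of $I$, and a union bound over the at most $2k$ arms of $O\cup I$ for the estimation event. Your explicit justification of why that last union bound may run over only $2k$ arms --- conditioning on the realized $I$, which is independent of the fresh Line~6 samples --- is exactly the point the paper compresses into ``a trivial union bound over all arms in $O \cup I$,'' so the two arguments coincide in substance.
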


\begin{proof}
	Let $n=|S|$. By Corollary~\ref{cor:B-upperbound}, we have that	
	\[
	\Pr[|F| > 2pn] =\Pr[\Bin(n,p) > 2pn]  \le e^{-pn/3} \le \delta/8,
	\]
	for $n \ge 8p^{-2}(\ln 8\delta^{-1})$. Moreover, let $X$ be the
        r.v.\ denoting the number of $F$-good elements in $S$. By
        Lemma~\ref{lm:rand-samp}, $X$ is dominated by $\NegBer(k;p)$,
        and hence
	\[
	\Pr[X>pn] \le \Pr[\NegBer(k;p) >pn] = \Pr[\Bin(pn,p) < k] \le \Pr[\Bin(pn,p) < \frac{1}{2}p^2n] \le e^{-\frac{1}{8}p^2n} \le \delta/8.
	\]
	The second inequality holds since $p^2n \ge 2k$, while the last
        inequality is due to $\frac{1}{8}p^2n \ge \ln\delta^{-1} +
        \ln8$.  In addition, by Lemma~\ref{lm:UNIFORM-SAMPLE-PROCEDURE}
        and a trivial union bound over all arms in $O \cup I$, the third
        statement holds with probability at least
        $1-(p\cdot\delta/8k)\cdot(2k) \ge 1-\delta/8$.  

        Finally, conditioning on the first statement, we have $|F| <
        |S|$, and hence by the induction hypothesis, with probability at
        least $1-\delta/8$, $I$ is an $\alpha$-optimal solution for $F$.
        Putting them together, all four statements hold with probability
        at least $1-\delta/8\cdot 4=1-\delta/2$.
\end{proof}

Now let $\event$ denote the event that all statements in Lemma~\ref{lm:good-F} hold. 
We show each $F$-bad element in $S \setminus I$ has a constant probability to be eliminated.

\begin{lemma}\label{lm:big-eliminate}
	Conditioning on $\event$, for an $F$-bad element $e \in S \setminus I$, $\Pr[e \in S'] \le \delta \cdot p/8k$.
\end{lemma}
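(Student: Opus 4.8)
The plan is to show that an $F$-bad element $e$ survives into $S'$ only when its own empirical mean is inaccurate, which is a low-probability event under the chosen confidence level. Recall that $e \in S'$ exactly when the pruning test fails, i.e., when $I_{\hat\mu}^{\ge \hamean{e}-\alpha-2\lambda}$ does \emph{not} block $e$. I will establish the contrapositive: conditioned on $\event$, as soon as $\hamean{e}$ is accurate to within $\lambda$, the element $e$ gets pruned.

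First I would invoke the true-mean elimination guarantee. Since $\event$ holds, statement~4 of Lemma~\ref{lm:good-F} tells us $I$ is $\alpha$-optimal for $\matroid_F$, and $e$ is $F$-bad by hypothesis, so Lemma~\ref{lm:F-eliminate} yields that $I^{\ge \mu_e-\alpha}$ blocks $e$ with respect to the true means. The key step is then to transfer this statement from true to empirical means. I would show that if $|\mu_e - \hamean{e}| \le \lambda$, then $I_\mu^{\ge \mu_e-\alpha} \subseteq I_{\hat\mu}^{\ge \hamean{e}-\alpha-2\lambda}$: for any $a \in I$ with $\mu_a \ge \mu_e - \alpha$, statement~3 of Lemma~\ref{lm:good-F} gives $\hamean{a} \ge \mu_a - \lambda \ge \mu_e - \alpha - \lambda \ge \hamean{e} - \alpha - 2\lambda$, where the final inequality uses $\mu_e \ge \hamean{e}-\lambda$. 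Since the smaller set $I_\mu^{\ge \mu_e-\alpha}$ already blocks $e$, the monotonicity of blocking noted after the definition of \emph{Block} shows that the larger empirical set $I_{\hat\mu}^{\ge \hamean{e}-\alpha-2\lambda}$ also blocks $e$; hence $e \notin S'$.

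Consequently, conditioned on $\event$, the event $\{e \in S'\}$ is contained in $\{|\mu_e - \hamean{e}| > \lambda\}$, and it remains to bound the probability of the latter by $\delta p/8k$. This is where the main obstacle lies: the accuracy guarantee baked into $\event$ (statement~3) only covers arms in $O \cup I$, not an arbitrary $e \in S\setminus I$, so I cannot directly assert that $\hamean{e}$ is accurate. I would resolve this by a conditioning argument keyed on whether $e \in O$. Because $I$ is fixed in Line~5 before the sampling in Line~6, and distinct arms are sampled independently by \UNIFORMSAMPLING, for $e \notin O \cup I$ the empirical mean $\hamean{e}$ is independent of all the randomness defining $\event$; thus $\Pr[|\mu_e - \hamean{e}| > \lambda \mid \event] = \Pr[|\mu_e - \hamean{e}| > \lambda] \le \delta p/8k$ by Lemma~\ref{lm:UNIFORM-SAMPLE-PROCEDURE}, since \UNIFORMSAMPLING\ was called in Line~6 with confidence parameter $\delta p/8k$.

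If instead $e \in O$ (so $e \in O\setminus I$), then statement~3 of Lemma~\ref{lm:good-F} already forces $|\mu_e - \hamean{e}| \le \lambda$, so by the transfer argument above $e$ is pruned deterministically and the conditional probability is $0$. In either case $\Pr[e \in S' \mid \event] \le \delta p/8k$, which is the claimed bound. I expect the only delicate point to be making the independence/conditioning in the third paragraph precise, since the conditioning set $\event$ mixes the random choice of $F$, the recursively produced $I$, and the empirical means of $O\cup I$; the crucial observation that carries it through is that the test deciding $e\in S'$ depends on these quantities only through $I$ and $\hamean{e}$, and that $\hamean{e}$ for $e\notin O\cup I$ is drawn from fresh, independent samples.
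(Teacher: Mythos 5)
Your proposal is correct and follows essentially the same route as the paper's proof: conditioned on $\event$, apply Lemma~\ref{lm:F-eliminate} (using that $I$ is $\alpha$-optimal for $\matroid_F$) to get that $I_{\mu}^{\ge \mu_e-\alpha}$ blocks $e$, then use the $\lambda$-accuracy of $\hamean{}$ on $I$ and on $e$ to deduce $I_{\mu}^{\ge \mu_e-\alpha} \subseteq I_{\hat\mu}^{\ge \hamean{e}-\alpha-2\lambda}$, so $e$ is pruned whenever $\hamean{e}$ is $\lambda$-accurate, which fails with probability at most $\delta \cdot p/8k$ by the guarantee of \UNIFORMSAMPLING\ at Line~\ref{line:uni-sampl}. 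Your additional case split on $e \in O$ versus $e \notin O \cup I$, with the independence argument for fresh samples, is a sound formalization of a conditioning step the paper's proof leaves implicit, not a different approach.
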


\begin{proof}	
  Conditioning on $\event$, $I$ is $\alpha$-optimal for $F$. Hence, by
  Lemma~\ref{lm:F-eliminate}, for an $F$-bad element $e \in S \setminus
  I$, $I_{\mu}^{\ge \amean{e}-\alpha}$ blocks $e$.  By
    Lemma~\ref{lm:NAIVE}, $|\hamean{e} - \amean{e}| \le \lambda$ with
    probability $1-p\cdot\delta/8k$. Moreover, conditioning on $\event$,
    we have $|\hamean{a} - \amean{a}| \le \lambda$ for every element $
    a\in I$ (by Lemma~\ref{lm:good-F}.3).  Consequently, $I_{\mu}^{\ge
      \amean{e}-\alpha} \subseteq I^{\ge \hamean{e} -\alpha
      -2\lambda}_{\hat\mu}$, which implies that $I^{\ge \hamean{e}
      -\alpha -2\lambda}_{\hat\mu}$ blocks $e$.  By the definition
    of $S'$, this means $e \notin S'$.  Hence, we have $\Pr[e \in S'
    \mid \event] \le \delta \cdot p/8k$.
\end{proof}

Now, we show that with high probability, $S'$ is a
$2\varepsilon/3$-approximate subset of $S$, and the size of $S'$ is much
smaller than $|S|$.

\begin{lemma}\label{lm:good-S-1}
  Conditioning on $\event$, $|S'| \le 2p|S|$, and $S'$ is an
  $(\alpha+4\lambda)$-approximate subset of $S$, with probability $1-\delta/4$.
\end{lemma}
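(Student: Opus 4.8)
The plan is to handle the two assertions separately. The bound $|S'|\le 2p|S|$ is genuinely probabilistic, while I expect the approximation guarantee to hold \emph{deterministically} once we condition on $\event$, so the entire failure probability $\delta/4$ will be spent on the size bound. For the size bound I would classify the elements of $S'$: each is either an element of $I$ (at most $k$ of them, since $|I|\le\rank(\matroid)=k$), an $F$-good element of $S\setminus I$ (at most $p|S|$ of them by Lemma~\ref{lm:good-F}.2), or an $F$-bad element of $S\setminus I$ that survived pruning. Writing $Y$ for the last count, Lemma~\ref{lm:big-eliminate} and linearity give $\Ex[Y\mid\event]\le |S|\cdot\delta p/8k$. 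Since $|S|>2p^{-2}k$ forces $k<\tfrac12 p|S|$, we have $p|S|-k\ge\tfrac12 p|S|$, and Markov's inequality yields $\Pr[\,|S'|>2p|S|\mid\event\,]\le\Pr[Y>p|S|-k\mid\event]\le\delta/4$.

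For the approximation guarantee I would exhibit an explicit witness. Set $\beta:=\alpha+4\lambda=2\varepsilon/3$ and let $J:=\OPT(\matroid_{S'})$ be the optimal basis of $\matroid$ restricted to $S'$ with respect to the true means $\mu$. By Definition~\ref{defi:eps-approx-subset} it suffices to show $J$ is a basis of $\matroid_S$ and is $\beta$-optimal for $\matroid_S$, the latter meaning, via Lemma~\ref{lm:eps-opt-char}(2), that $J^{\ge\mu(e)-\beta}$ blocks every $e\in S\setminus J$. The heart of the matter is a Core Claim, proved using only the concentration from Lemma~\ref{lm:good-F}.3 on $O\cup I$ (where $O=\OPT(\matband)$): for every eliminated $o\in O$ (i.e.\ $o\in O\setminus S'$), $J^{\ge\mu_o-\beta}$ blocks $o$. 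Indeed $o\notin I$ and $I_{\hat{\mu}}^{\ge\hat{\mu}_o-\alpha-2\lambda}$ blocks $o$ by the definition of $S'$; the bounds $|\mu_a-\hat{\mu}_a|\le\lambda$ for $a\in I$ and $|\mu_o-\hat{\mu}_o|\le\lambda$ give the inclusion $I_{\hat{\mu}}^{\ge\hat{\mu}_o-\alpha-2\lambda}\subseteq I_{\mu}^{\ge\mu_o-\beta}$, so $I_{\mu}^{\ge\mu_o-\beta}$ blocks $o$; since $I\subseteq S'$ this implies $(S')^{\ge\mu_o-\beta}$ blocks $o$, and as $J^{\ge\mu_o-\beta}$ is a basis of $\matroid_{(S')^{\ge\mu_o-\beta}}$ by Lemma~\ref{lm:char-opt}(iv), Lemma~\ref{lm:blocks-1} gives that $J^{\ge\mu_o-\beta}$ blocks $o$.

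With the Core Claim I would finish by a case analysis over $e\in S\setminus J$. If $e\in S'\setminus J$, optimality of $J$ in $\matroid_{S'}$ (Lemma~\ref{lm:char-opt}(iii)) already gives that $J^{\ge\mu_e}\subseteq J^{\ge\mu_e-\beta}$ blocks $e$. If $e$ is eliminated and $e\in O$, the Core Claim applies directly. In the remaining case, $e$ eliminated with $e\notin O$, I would use optimality of $O$ (Lemma~\ref{lm:char-opt}(iii)) to get that $O^{\ge\mu_e}$ blocks $e$, and then argue that each $o\in O^{\ge\mu_e}$ is blocked by $J^{\ge\mu_e-\beta}$: if $o\in S'$ this is Lemma~\ref{lm:char-opt}(iv), and if $o$ is eliminated it follows from the Core Claim together with $\mu_o\ge\mu_e$. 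Consequently $O^{\ge\mu_e}$ is blocked by $J^{\ge\mu_e-\beta}$, hence so is $e$. The very same dichotomy applied to all of $O$ shows $\rank(S'\cup O)=\rank(S')$, so $\rank(S')\ge|O|=k$ and $J$ is indeed a basis of $\matroid_S$.

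The step I expect to be the main obstacle, and the reason a naive choice of witness fails, is exactly this last case. An eliminated element can have a large true mean but an atypically small empirical mean, and there may be several such elements, so one cannot afford a union bound over all of $S$ to control their concentration. The device that rescues the argument is routing every blocking requirement through $O$: any element whose true mean is large enough to threaten $\beta$-optimality is either a member of $O$ (for which Lemma~\ref{lm:good-F}.3 \emph{does} supply concentration) or is blocked by the heavy part $O^{\ge\mu_e}$ of $O$, so that concentration on the $O(k)$-sized set $O\cup I$ suffices and no element-by-element control of the pruned arms is ever needed.
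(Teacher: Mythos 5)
Your proof is correct and follows essentially the same route as the paper's: the size bound is argued identically (decompose $S'$ into $I$, the at most $p|S|$ $F$-good elements, and the surviving $F$-bad elements, then apply Lemma~\ref{lm:big-eliminate} and Markov), and the approximation guarantee rests on the paper's same key step of using the concentration from Lemma~\ref{lm:good-F}.3 on $O\cup I$ to show that every eliminated $o\in O$ is still blocked by $I_{\mu}^{\ge \mu(o)-\alpha-4\lambda}\subseteq (S')_{\mu}^{\ge \mu(o)-\alpha-4\lambda}$. The only difference is presentational: the paper finishes by invoking the transitivity Lemma~\ref{lm:chain-approx} with the intermediate set $O\cup S'$ (a $0$-approximate subset of $S$), whereas you inline that step by exhibiting the explicit witness $J=\OPT(\matroid_{S'})$ and verifying its $(\alpha+4\lambda)$-optimality over all of $S$ directly---your third case, routing elements of $S\setminus(S'\cup O)$ through $O^{\ge\mu(e)}$, is precisely what the proof of Lemma~\ref{lm:chain-approx} does for elements of $C\setminus B$, so your rendering makes explicit the witness basis that the paper's terse appeal to Definition~\ref{defi:eps-approx-subset} leaves implicit, at the cost of some extra case analysis.
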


\begin{proof}
  Conditioned on event $\event$, there are at most $p\cdot|S|$ $F$-good
  elements in $S$ (by Lemma~\ref{lm:good-F}.2). If $X$ denotes the
  number of $F$-bad elements in $S \setminus I$ which remain in $S'$,
  Lemma~\ref{lm:big-eliminate} implies $\Ex[X] \le \delta\cdot (p/8k) \cdot
  |S \setminus I| \le \delta\cdot p/8 \cdot |S|$. By Markov's
  inequality, we have $\Pr[X \ge 0.5p|S|] \le \Pr[X \ge 4\cdot
  \delta^{-1}\Ex[X]] \le \delta/4$.  So there are at most $|I| +
  1.5p\cdot|S|\le k + 1.5p|S| \le 2p |S|$ elements in $S'$ with
  probability at least $1-\delta/4$.
	
  For the second part, observe that $O \cup S' \subseteq S$ is a
  $0$-approximate subset of $S$, so by Lemma~\ref{lm:chain-approx}, it
  suffices to show $S'$ is an $(\alpha+4\lambda)$-approximate subset for $O
  \cup S'$. Still conditioned on event $\event$, for all arms $e \in I \cup
  O$, we have $|\amean{e} - \hamean{e}| \le \lambda$.  So for an arm $e
  \in O \setminus S'$, we have $I^{\ge
    \hamean{e}-\alpha-2\lambda}_{\hat\mu}$ blocks $e$ (otherwise,
  $e$ should be included in $S'$), which implies $I_{\mu}^{\ge
    \amean{e}-\alpha-4\lambda}$ blocks $e$. Since $I \subseteq S'$, we
  can see $S'$ is an $(\alpha+4\lambda)$-approximate subset of $S'\cup O$
  by Definition~\ref{defi:eps-approx-subset}.
\end{proof}

Finally, we are ready to prove
Theorem~\ref{theo:eps-opt-algo}.


\begin{proofof}{Theorem~\ref{theo:eps-opt-algo}}
  Let $\event_G$ be the intersection of the event $\event$, the event
  that Lemma~\ref{lm:good-S-1} holds and the event that \RECURPRUN\
  (line 8) outputs correctly. Conditioning on event $\event$, $|S'| <
  |S|$, so by the induction hypothesis, the last event happens with
  probability at least $1-\delta/4$. Hence, $\Pr[\event_G] \ge
  1-\delta/2-\delta/4-\delta/4 = 1-\delta$.  We condition our following
  argument on $\event_G$.
	
  First we show the algorithm is correct.  By Lemma~\ref{lm:good-S-1},
  $S'$ is an $(\alpha+4\lambda)$-approximate subset of $S$, and the
  returned basis $J$ is an $\alpha$-optimal solution of $S'$, hence also
  an $\alpha$-approximate subset of $S'$. By the ``transitivity''
  property of Lemma~\ref{lm:chain-approx}, $J$ is an
  $(\alpha+\alpha+4\lambda)$-approximate subset of $S$. This is an
  $\varepsilon$-optimal solution of $S$ since
  $\alpha+\alpha+4\lambda=\varepsilon$.
	
  By Lemma~\ref{lm:good-F} and Lemma~\ref{lm:good-S}, we have $|F|\le
  2p\cdot|S|$ and $|S'| \le 2p\cdot|S|$.  By the induction hypothesis,
  the total number of samples in both recursive calls (line
  \ref{line:rec-call-1} and line 8) can be bounded by
  $$
  c_1\cdot4p |S|\varepsilon^{-2}(\ln \delta^{-1} + \ln k + c_2) \cdot 9 \le 36c_1p\cdot |S|\varepsilon^{-2}(\ln\delta^{-1} + \ln k + c_2).
  $$ 
  The number of samples incurred by \UNIFORMSAMPLING\ (line
  \ref{line:uni-sampl}) can be bounded by
  $$
  |S|\lambda^{-2} \cdot(\ln\delta^{-1} + \ln 16 + \ln p^{-1} + \ln k)/2 \le 72|S|\varepsilon^{-2} \cdot (\ln\delta^{-1} + \ln 16 + \ln p^{-1} + \ln k).
  $$ 
  Now, let $c_2 = \ln 16 +\ln p^{-1}$, which is a 
  constant.
  Then the total number of samples is bounded by
  \[
  (36p\cdot c_1 + 72)|S|\varepsilon^{-2} \cdot (\ln\delta^{-1} + \ln k + c_2).
  \]
  
  Setting $c_1=\max(120,c_0)$, and plugging in $p=0.01$, we can see the
  above quantity is bounded by $c_1\cdot|S|\varepsilon^{-2} \cdot (\ln
  \delta^{-1} + \ln k + c_2)$, which completes the proof.
\end{proofof}

\eat{
\begin{rem}
	Note that we don't have any guarantee with probability at most $\delta$. However, since it is a worst case algorithm, we can simply halt it if it does not terminate after taking a prescribed number of samples.
\end{rem}
}



\newcommand{\substyle}{\mathrm}
\newcommand{\ro}{r_{\substyle{\substyle{elim}}}}
\newcommand{\rb}{r_{\substyle{\substyle{sele}}}}
\newcommand{\no}{n_{\substyle{\substyle{opt}}}}
\newcommand{\nb}{n_{\substyle{\substyle{bad}}}}
\newcommand{\mc}{\matroid_{\substyle{cur}}}
\newcommand{\scur}{S_{\substyle{cur}}}
\newcommand{\mbcur}{\matband_{\substyle{cur}}}
\newcommand{\snew}{S_{\substyle{new}}}

\newcommand{\Ans}{\mathsf{Ans}}
\newcommand{\OPTSOL}{\textsf{OPT}}
\newcommand{\BADARM}{\textsf{BAD}}
\newcommand{\optround}{elimination-round\xspace}
\newcommand{\badround}{selection-round\xspace}

\begin{algorithm}[H]
	\LinesNumbered
	\setcounter{AlgoLine}{0}
	\caption{\EXPGAPMATROID($\matband,\varepsilon,\delta$)}
	\label{algo:PURE-EXP-ALGO}
	\KwData{An \matroidbanditexact instance $\matband=(S,\matroid)$, with
		$\rank(\matroid) = k$, approx.\ error $\varepsilon$, confidence level $\delta$.}
	\KwResult{A basis $I$ in $\matroid$.}
	\smallskip	
	$\ro\leftarrow 1$, $\rb \leftarrow 1$
	
	\While{True}{
		$\scur \leftarrow \text{the arm set of }\mc$
		
		$\no \leftarrow \rank(\mc)$, $\nb \leftarrow |\scur| - \no$
		
		\uIf{$\no \le \nb$}{
			
			{\bf if }$\no = 0$ {\bf then break}
			
			$r \leftarrow \ro$
			
			$\varepsilon_r \leftarrow 2^{-r}/4$, $\delta_r \leftarrow \delta/100r^3$, $\ro \leftarrow \ro + 1$
			
			$I \leftarrow \RECURPRUN(\mbcur=(\scur,\mc),\varepsilon_r,\delta_r)$
			
			$\hamean{} \leftarrow \UNIFORMSAMPLING(I,\varepsilon_r/2,\delta_r/\no)$
			
			$\hamean{} \leftarrow \UNIFORMSAMPLING(\scur \setminus I, \varepsilon_r, \delta_r/\no)$
			
			$\snew \leftarrow I \cup \{e \in \scur \setminus I \mid I_{\hat\mu}^{\ge \hamean{e}+1.5\varepsilon_r} \text{ does not block $e$ in $\mc$}\}$
			
			$\mc \leftarrow \mc|\snew$
		}
		\uElse{
			\uIf{$\nb = 0$}{
				$\Ans \leftarrow \Ans \cup \scur$
				
				{\bf break}
			}
			$r \leftarrow \rb$
			
			$\varepsilon_r \leftarrow 2^{-r}/4$, $\delta_r \leftarrow \delta/100r^3$, $\rb \leftarrow \rb + 1$
			
			$\hamean{} \leftarrow \UNIFORMSAMPLING(\scur,\varepsilon_r,\delta_r/|\scur|)$
			
			$U \leftarrow \{e \in \scur\mid (\scur  \setminus \{e\} )_{\hat\mu}^{\ge \hamean{e} - 2\varepsilon_r} \text{ does not block $e$ in $\mc$} \}$
			
			$\Ans \leftarrow \Ans \cup U$
			
			$\mc \leftarrow \mc/U$
		}
	}
	{\bf Return} $\Ans$
\end{algorithm}

\section{An Algorithm for the \matroidbanditexact Problem}
\label{sec:matroid-pure-exploration}

We now turn to the \matroidbanditexact problem, and prove
Theorem~\ref{theo:pure-exploration}.  If we denote the unique optimal
basis by $\OPTSOL$, and let $\BADARM$ be the set of all other arms in $S
\setminus \OPTSOL$, our goal for the \matroidbanditexact problem is to
find this set $\OPTSOL$ with confidence $1 - \delta$ using as few
samples as possible.


Our algorithm \EXPGAPMATROID\
is based on our previous PAC result for \matroidbanditstrong,
and also borrow some idea from
the Exponential-Gap-Eliminating algorithm by \cite{karnin2013almost}.
It will run in rounds.  In each round, it
either tries to eliminate some arms in $\BADARM$ (we call such a round
an \emph{\optround}), or adds some arms from $\OPT$ into our solution
and removes them from further consideration (we call such a round a
\emph{\badround}). Let us give some details about these two kinds of
rounds.  Let $\mc$ be the current matroid defined over the remaining
arms, $\no$ be the number of remaining arms in $\OPT$, and $\nb$ be the
number of remaining arms in $\BADARM$. 
\begin{enumerate}
\item (\optround) When $\no \le \nb$, we are in an \optround.  In the
  $r^{th}$ \optround, first we find an $\varepsilon_r$-optimal
  solution $I$ for the current matroid $\mc$ by calling
  \RECURPRUN($\mc,\varepsilon_r,\delta_r$) (i.e., the PAC algorithm from
  Section~\ref{sec:matroid-PAC}) with $\varepsilon_r=2^{-r}/4$ and
  $\delta_r=\delta/100r^3$.  We sample each arm in $I$ by calling
  $\UNIFORMSAMPLING(I,\varepsilon_r/2,\delta_r/\no)$ to estimate their
  means.
  We do the same for arms $\scur \setminus I$ by calling
  $\UNIFORMSAMPLING(\scur \setminus I, \varepsilon_r, \delta_r/\no)$.
  Note the confidence parameter is not low enough to give accurate
  estimations for all arms in $\scur \setminus I$ with high probability:
  that would require us reducing the parameter to $\delta_r/|\scur
  \setminus I|$.  However, we will be satisfied with being accurate only
  for arms in $\OPT\setminus I$ with probability $\delta_r$. 

  Finally, we use $I$ to eliminate some sub-optimal arms.  In
  particular, an arm $e$ should be eliminated if $I_{\hat\mu}^{\ge
    \hamean{e}+1.5\varepsilon_r}$ blocks $e$, where $\hat{\mu}$ is the
  cost function defined by the empirical means obtained from the above
  \UNIFORMSAMPLING\ procedures.
\item (\badround) When $\nb < \no$, we are in a \badround.  In the
  $r^{th}$ \badround, we sample all the arms in $\mc$ by calling
  $\UNIFORMSAMPLING(\scur,\varepsilon_r,\delta_r/|\scur|)$.  We then
  select into our solution $\Ans$ those elements $e$ which are not
  blocked by all other elements in $\mc$ with larger empirical means,
  even if we slightly decrease $e$'s empirical mean by $2\varepsilon_r$.
  Having contracted these selected arms, we proceed to the next round.
\end{enumerate}
Finally, the algorithm terminates when either $\no = 0$ or $\nb =
0$. The pseudo-code is given as Algorithm~\ref{algo:PURE-EXP-ALGO}.



\subsection{Analysis of the algorithm}

Now, we prove the main theorem of this section 
by analyzing the correctness and sample complexity of \EXPGAPMATROID.

\noindent
{\bf Theorem~\ref{theo:pure-exploration}}
(rephrased)	
Given an \matroidbanditexact\ instance
$\matband(S,\matroid)$,
$\EXPGAPMATROID(\matband,\varepsilon,\delta)$ returns the optimal basis of $\matroid$,
with probability at least $1-\delta$, and uses
at most 
$$
O\left(\sum_{e \in S} \Delta_{e}^{-2}(\ln \delta^{-1}+\ln k 
+ \ln\ln \Delta_e^{-1}) \right)
$$
samples. Here, $k = \rank(\matroid)$ is the size of a basis of $\matroid$.

We first recall that the gap of an element $e$ (throughout this section, we only consider the cost function $\mu$ for gap) 
is defined to be 
$$
\Delta_{e}^{\matroid} := \begin{cases}
\OPT(\matroid) - \OPT(\matroid_{S \setminus \{e\}}),
\quad & e \in \OPT \text{ and } e \text{ is not isolated}; \\
+\infty, \quad & e \text{ is isolated;}\\
\OPT(\matroid) - \OPT(\matroid_{/\{e\}}) - \amean{e},
\quad &  e \not\in \OPT.
\end{cases}
$$

Note that we extend the definition to the isolated elements,
since the restrictions and contractions may result in 
such element (note that no loop is introduced during the process). 
We also need the following equivalent definition
(the equivalence follows from Lemma~\ref{lm:char-opt}), which may be convenient in some case:
\begin{align}
\label{eq:gap}
\Delta_{e}^{\matroid} := \begin{cases}
\max\{ w \in R \betw (S \setminus \{e\})^{> \amean{e}-w} \text{ does not block } e  \} & \text{ for } e \in \OPT;\\
\max\{  w \in R \betw S^{\ge \amean{e} + w} \text{ blocks } e  \} & \text{ for } e \not\in \OPT 
\end{cases}
\end{align}

First, we prove that our algorithm returns the optimal basis
with high probability.
In the following lemma,
We specify a few events on which 
we condition our later discussion.

\begin{lemma}\label{lm:good-S}
  With probability at least $1-\delta/5$, all of the following
  statements hold:
  \begin{OneLiners}
  \item[1.] In all \optround{s}, \RECURPRUN (line 9) returns correctly.
  \item[2.] In all \optround{s}, for all element $u \in I$, $|\amean{u} -
    \hamean{u}| < \varepsilon_r/2$.
  \item[3.] In all \optround{s}, for all element $u \in \OPT(\mc)$,
    $|\amean{u} - \hamean{u}| < \varepsilon_r$.
  \item[4.] In all \badround{s}, for all element $u \in \scur$, $|\amean{u} -
    \hamean{u}| < \varepsilon_r$.
  \end{OneLiners}
  We use $\event$ to denote the event that all above statements are
  true.
\end{lemma}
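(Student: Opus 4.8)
The plan is to control each of the four statements round by round and then combine everything with a single union bound over all rounds. The structure to exploit is that the \optround{s} and the \badround{s} each maintain their own counter (incremented through $\ro$ and $\rb$), and in the $r^{th}$ round of either type the algorithm uses confidence parameter $\delta_r = \delta/100r^3$. Because $\sum_{r \ge 1}\delta_r = \frac{\delta}{100}\sum_{r\ge 1} r^{-3} < \frac{\delta}{100}\cdot 2 = \frac{\delta}{50}$, the accumulated failure probability across all (at most countably many) rounds stays small; this convergent series is precisely why the cubic denominator $100r^3$ was chosen.

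First I would bound the three \optround\ statements in the $r^{th}$ \optround. Statement~1 is immediate from Theorem~\ref{theo:eps-opt-algo}: line~9 calls $\RECURPRUN$ with confidence $\delta_r$, so it returns a correct $\varepsilon_r$-optimal solution except with probability at most $\delta_r$. For Statement~2, since $I$ is returned as a basis of $\mc$ we have $|I| = \rank(\mc) = \no$; the call $\UNIFORMSAMPLING(I,\varepsilon_r/2,\delta_r/\no)$ together with Lemma~\ref{lm:UNIFORM-SAMPLE-PROCEDURE} makes each arm of $I$ accurate to within $\varepsilon_r/2$ except with probability $\delta_r/\no$, and a union bound over the $\no$ arms of $I$ gives failure at most $\delta_r$.

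Statement~3 is the delicate one, and I expect it to be the main obstacle. The point is that the call $\UNIFORMSAMPLING(\scur\setminus I,\varepsilon_r,\delta_r/\no)$ uses confidence only $\delta_r/\no$, which is \emph{not} small enough to guarantee accuracy simultaneously for all of $\scur \setminus I$ (which may be far larger than $\no$). The key observation is that we only need accuracy on $\OPT(\mc)$, and $|\OPT(\mc)| = \rank(\mc) = \no$. Arms in $\OPT(\mc)\cap I$ are already within $\varepsilon_r/2 < \varepsilon_r$ by the Statement~2 analysis, while arms in $\OPT(\mc)\setminus I$ are handled by the above call, each accurate to within $\varepsilon_r$ except with probability $\delta_r/\no$. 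A union bound over the at most $\no$ arms of $\OPT(\mc)\setminus I$ then bounds the additional failure by $\delta_r$. Statement~4 is routine: the \badround\ samples all of $\scur$ via $\UNIFORMSAMPLING(\scur,\varepsilon_r,\delta_r/|\scur|)$, so a union bound over the $|\scur|$ arms gives failure at most $\delta_r$.

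Finally I would assemble the bounds. In each \optround\ the failure of Statements~1--3 has probability at most $3\delta_r$, and in each \badround\ Statement~4 fails with probability at most $\delta_r$. Summing over all rounds of both types, the total failure probability is at most $4\sum_{r\ge 1}\delta_r < \frac{4\delta}{100}\cdot 2 = \frac{2\delta}{25} < \frac{\delta}{5}$, which is exactly the claimed bound. The only real care needed is in Statement~3, where the union bound must be taken over $\OPT(\mc)$ rather than over all of $\scur \setminus I$.
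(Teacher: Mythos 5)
Your proof is correct and follows essentially the same route as the paper's: bound each statement's failure probability by $\delta_r$ in its round, then union bound over all rounds using the convergence of $\sum_r \delta/100r^3 $. In fact you spell out the one subtle point---that Statement~3 only needs a union bound over the at most $\no$ arms of $\OPT(\mc)$, not over all of $\scur\setminus I$---which the paper's proof leaves implicit (it appears only in the algorithm description preceding the lemma).
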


\begin{proof}
  In the $r^{th}$ \optround, the specification of the failure
  probabilities of \RECURPRUN and \UNIFORMSAMPLING imply the first
  three statements hold with probability $1-3\delta_r$. In the $r^{th}$
  \badround, the last statement holds with probability $1-\delta_r$.  A
  trivial union bound over all rounds gives
  \[
  \Pr[\lnot \event] \leq \sum_{r=1}^{+\infty} (3\delta_r + \delta_r) =
  \sum_{r=1}^{+\infty} 4\delta/100r^3 \le \delta/5,
  \]
  and the lemma follows immediately.
\end{proof}

\begin{lemma}\label{lm:algo-correct}
  Conditioning on $\event$, the subset $\Ans$ returned by the algorithm
  is the optimal basis $\OPT$.
\end{lemma}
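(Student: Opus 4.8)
The plan is to argue, by induction on the rounds, that conditioning on $\event$ two invariants are maintained: (a) every arm selected into $\Ans$ during a \badround\ genuinely belongs to $\OPT$, and (b) every arm eliminated during an \optround\ genuinely belongs to $\BADARM$. Together with the termination condition (we stop only when $\no=0$, in which case all remaining arms are contracted into $\Ans$, or when $\nb=0$, in which case the current arm set is exactly $\OPT(\mc)$ and is added to $\Ans$), these invariants force $\Ans=\OPT$. The key structural fact I would use throughout is that contracting an optimal element and restricting away a non-optimal element both preserve the optimal basis: formally, if $e\in\OPT(\matroid)$ then $\OPT(\matroid)=\{e\}\cup\OPT(\matroid_{/\{e\}})$, and if $e\notin\OPT(\matroid)$ then $\OPT(\matroid)=\OPT(\matroid_{S\setminus\{e\}})$. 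Hence the unique optimal basis of the original matroid is consistently tracked by the pair $(\Ans,\mc)$ across rounds, and it suffices to show each individual elimination/selection decision is correct.

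First I would handle the \emph{\optround} (elimination). Here $I$ is an $\varepsilon_r$-optimal solution for $\mc$ returned by \RECURPRUN, which is correct by Lemma~\ref{lm:good-S}.1. The goal is to show that any arm $e$ eliminated (i.e.\ with $I_{\hat\mu}^{\ge\hamean{e}+1.5\varepsilon_r}$ blocking $e$) satisfies $e\notin\OPT(\mc)$. Using the empirical-accuracy guarantees of Lemma~\ref{lm:good-S}.2--3 ($|\amean{u}-\hamean{u}|<\varepsilon_r/2$ for $u\in I$ and $<\varepsilon_r$ for $u\in\OPT(\mc)$), I would translate the empirical blocking condition $I_{\hat\mu}^{\ge\hamean{e}+1.5\varepsilon_r}$ into a true-mean blocking condition of the form $I_\mu^{\ge\amean{e}+c\varepsilon_r}$ for an appropriate constant, and then invoke the characterization of optimality (Lemma~\ref{lm:char-opt}(iii), or its $\varepsilon$-analogue Lemma~\ref{lm:eps-opt-char}) together with the gap definition in \eqref{eq:gap} to conclude $e\notin\OPT(\mc)$. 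The point is that the slack $1.5\varepsilon_r$ is chosen to absorb the $\varepsilon_r$-optimality of $I$ plus the sampling errors, so only arms with positive gap get cut. Conversely I must check no $\OPT$-element is ever eliminated: if $e\in\OPT(\mc)$, then $S^{>\amean{e}}$ does not block $e$ (Lemma~\ref{lm:char-opt}(ii)), and the accuracy bounds ensure $I_{\hat\mu}^{\ge\hamean{e}+1.5\varepsilon_r}$ cannot block it either, so $e$ survives into $\snew$.

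Next I would handle the \emph{\badround} (selection). Here we sample all of $\scur$ to accuracy $\varepsilon_r$ (Lemma~\ref{lm:good-S}.4) and select $e$ into $U$ when $(\scur\setminus\{e\})_{\hat\mu}^{\ge\hamean{e}-2\varepsilon_r}$ does not block $e$. I would show, again by converting empirical to true means, that the non-blocking condition implies $(\scur\setminus\{e\})^{>\amean{e}-\text{(const)}\varepsilon_r}$ does not block $e$, so by the first case of \eqref{eq:gap} the gap of $e$ is positive in the ``selected'' direction, forcing $e\in\OPT(\mc)$; and symmetrically that every $e\in\OPT(\mc)$ whose decisions are accurate does get selected, which drives progress toward termination. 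The \textbf{main obstacle} I anticipate is the bookkeeping of the constant factors in the empirical-to-true-mean translation: the blocking sets are defined by empirical-mean thresholds, but correctness must be certified against true means via \eqref{eq:gap}, and because $I$-arms and non-$I$-arms are sampled to different accuracies ($\varepsilon_r/2$ versus $\varepsilon_r$), one must carefully verify that the offsets $1.5\varepsilon_r$ (elimination) and $2\varepsilon_r$ (selection) strictly separate optimal from non-optimal arms under all worst-case sampling deviations. Once this separation is established monotonically across rounds, the induction closes and $\Ans=\OPT$ follows.
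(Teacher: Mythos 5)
Your core plan coincides with the paper's own proof: induct over rounds, maintain the invariants that $\Ans\subseteq\OPTSOL$ and $\OPT(\mc)=\OPTSOL\setminus\Ans$ (via the contraction/restriction fact you cite), show the one-sided ``safety'' statements---no arm of $\OPT(\mc)$ is ever eliminated in an \optround, no arm of $\BADARM$ is ever selected in a \badround---and conclude $\Ans=\OPTSOL$ at termination. The empirical-to-true-mean translations you sketch, using Lemma~\ref{lm:good-S}.2--4 together with Lemma~\ref{lm:char-opt}(ii)/(iii), are exactly the paper's argument.

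Two points in your framing would fail if executed literally, and you should be aware of them. First, in an \optround\ your ``direct'' direction---translate the empirical blocking condition of an \emph{eliminated} arm $e$ into a true-mean condition and conclude $e\notin\OPT(\mc)$---cannot be carried out: conditioned on $\event$, accuracy is guaranteed only for arms in $I$ and in $\OPT(\mc)$, while a generic arm of $\scur\setminus I$ has \emph{no} accuracy guarantee (the \UNIFORMSAMPLING\ call uses confidence $\delta_r/\no$, deliberately too weak for all of $\scur\setminus I$; the paper notes this explicitly). Only the logically equivalent contrapositive---an arm of $\OPT(\mc)$, for which accuracy does hold, is never eliminated---is provable, and it suffices; you state this as well (it is the contrapositive, not a ``converse''), so the gap is in emphasis rather than substance, but a write-up that starts from the eliminated arm's empirical mean will get stuck. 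Second, your symmetric claim that every $e\in\OPT(\mc)$ with accurate estimates gets selected in a \badround\ is false in general: an optimal arm whose gap is smaller than about $4\varepsilon_r$ can remain blocked by empirically inflated competitors and stay unselected. Fortunately this progress statement is not needed for Lemma~\ref{lm:algo-correct} at all; correctness needs only the safety properties, and progress is established probabilistically (per gap scale, not in every round) in Lemma~\ref{lm:DEC-QUICK}. For the same reason, the ``main obstacle'' you anticipate---strictly separating optimal from non-optimal arms under worst-case deviations---is neither achievable at scale $\varepsilon_r$ for small-gap arms nor required by this lemma.
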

\begin{proof}
  We condition on $\event$ in the following discussion.  We show that
  the algorithm only deletes arms from $\BADARM$ in every \optround, and
  it only selects arms from $\OPTSOL$ in every \badround. We say a round
  is correct if it satisfies the above requirements. We prove all rounds
  are correct by induction. Consider a round, and suppose all previous
  rounds are correct. Hence, at the beginning of the current round,
  $\Ans$ clearly is a subset of $\OPTSOL$, and $\OPT(\mc) = \OPTSOL
  \setminus \Ans$. There are two cases:

  If the current round is an \optround, consider an arm $u \in \OPT(\mc)
  = \OPTSOL \setminus \Ans$.  We can see that $I_{\mu}^{>\amean{u}}$
  does not block $u$ in $\mc$, by the characterization of the matroid
  optimal solutions in Lemma~\ref{lm:char-opt}.2.  Since $|\hamean{u} -
  \amean{u}| < \varepsilon_r$ for all $u\in \OPT(\mc)$, and $|\hamean{e}
  - \amean{e}| < \varepsilon_r/2$ for all $e \in I$, we also have
  $I_{\hat\mu}^{\ge \hamean{u}+1.5\varepsilon_r}$ does not block $u$ in
  $\mc$. Hence $u \in S_{new}$, and it is not eliminated.
	
  Next, consider a \badround and an arm $u \in \BADARM \cap \scur$.  By
  the induction hypothesis, in the beginning of the round, $u \not\in
  \OPT(\mc)$.  Then, we can see $u$ is blocked by $(\scur \setminus
  \{u\} )_{\mu}^{\ge \amean{u}}$ in $\mc$ by Lemma~\ref{lm:char-opt}.3.
  Again, for all arms $e$ in $\scur$, $|\amean{e}-\hamean{e}| <
  \varepsilon_r$, so $u$ is also blocked by $(\scur \setminus
  \{u\})_{\hat\mu}^{\ge \hamean{u}-2\varepsilon_r}$ in $\mc$.  Hence $u
  \not\in U$, and is not selected into $\Ans$.
	
  Finally, if the algorithms returns, we have $|\Ans| =|\OPTSOL|=
  \rank(\matroid)$. Since $\Ans \subseteq \OPTSOL$, it must be the
  case that $\Ans = \OPTSOL$.
\end{proof}

\subsubsection{Analysis of Sample Complexity}

To analyze the sample complexity, we need some additional notation.  Let
$\no^r$ (resp.\ $\nb^r$) denote $\no$ (resp.\ $\nb$) at the beginning of
$r^{th}$ \optround (resp.\ \badround).
Also, let $S_{\substyle{elim}}^{r}$
denote the arm set of $\mc$ at the end of the $r^{th}$ \optround, and
$S_{\substyle{sele}}^{r}$ denote the arm set of $\mc$ at the end of the $r^{th}$
\badround.  We partition the arms in $\OPT$ and $\BADARM$ based on their
gaps, as follows:
\[
\OPTSOL_{s} = \{ u \in \OPTSOL\mid 2^{-s} \le \Delta_u < 2^{-s+1}\},
\]
\[
\BADARM_{s} = \{ u \in \BADARM\mid 2^{-s} \le \Delta_u < 2^{-s+1}\}.
\]
Moreover, we define $\OPTSOL_{r,s} := S_{\substyle{sele}}^{r} \cap
\OPTSOL_{s}$, i.e., the set of arms in $\OPTSOL_s$ not selected in the
$r^{th}$ \badround---recall that in a \badround we aim to select those
arms into $\OPTSOL$. Similarly, define $\BADARM_{r,s} :=
S_{\substyle{elim}}^{r} \cap \BADARM_{s} $ as the set of arms in
$\BADARM_s$ not eliminated the $r^{th}$ \optround---again, in an
\optround we aim to delete those arms in $\BADARM$.

Very roughly speaking, the $s^{th}$ round 
is dedicated to deal with those arms with gap roughly
$O(2^{-s})$
(namely, an arm in $\OPT_s$ is likely to be 
selected in the $s^{th}$ \badround and 
an arm in $\BADARM_s$ is likely to be eliminated in the $s^{th}$
\optround
).
Now, we prove a crucial lemma, which states
that all elements in $\OPT_s$ should be selected  
in or before the $s^{th}$ \badround,
and the number of remaining elements in $\BADARM_s$
should drop exponentially after the $s^{th}$ \optround.

\begin{lemma}\label{lm:DEC-QUICK}
	Conditioning on event $\event$, with probability at least $1-4\delta/5$, we have 
	$$|\OPTSOL_{r,s}| = 0 
	\quad\text{and}\quad
	|\BADARM_{r,s}| \le \frac{1}{8} |\BADARM_{r-1,s}|
	\quad \text{for all } 
	1 \le s \le r.
	$$
\end{lemma}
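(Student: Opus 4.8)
The plan is to prove the two claims by different means: the selection claim $|\OPTSOL_{r,s}|=0$ is a deterministic consequence of $\event$, whereas the elimination claim $|\BADARM_{r,s}|\le\frac18|\BADARM_{r-1,s}|$ needs an averaging argument whose failure probability supplies the $4\delta/5$ slack. I condition on $\event$ throughout, so by Lemma~\ref{lm:algo-correct} every \optround deletes only arms of $\BADARM$ and every \badround contracts only arms of $\OPTSOL$. The common enabling ingredient is \emph{gap monotonicity}: as we contract optimal arms and delete bad arms, the gap $\Delta_e$ of any surviving arm, measured in the current matroid $\mc$, never drops below its original value. This follows from the $\OPT(\cdot)$-difference formulas for $\Delta_e$ together with $\OPT(\matroid_{/\{a\}})=\OPT(\matroid)-\mu_a$ for $a\in\OPT$ and $\OPT(\matroid_{S'\setminus\{b\}})\le\OPT(\matroid_{S'})$ under deletion: one checks case-by-case (over $e\in\OPT$ or $e\notin\OPT$, and the operation being contract-OPT or delete-BAD) that each single operation leaves $\Delta_e$ unchanged or larger, and correctness (Lemma~\ref{lm:algo-correct}) guarantees every operation is of this type. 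Hence, for $s\le r$ and $\varepsilon_r=2^{-r}/4$, the original bound $\Delta_e\ge 2^{-s}\ge 2^{-r}=4\varepsilon_r$ remains a valid gap in $\mc$, so original-gap thresholds continue to certify (non-)blocking in $\mc$ via the characterization~\eqref{eq:gap}.

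For the selection claim, fix $e\in\OPTSOL_s$ present at the start of the $r$-th \badround with $r\ge s$. By statement~4 of Lemma~\ref{lm:good-S}, all empirical means in $\scur$ are within $\varepsilon_r$, so any $u\neq e$ with $\hamean{u}\ge\hamean{e}-2\varepsilon_r$ has $\mu_u>\mu_e-4\varepsilon_r$; thus $(\scur\setminus\{e\})_{\hat\mu}^{\ge\hamean{e}-2\varepsilon_r}\subseteq(\scur\setminus\{e\})_{\mu}^{>\mu_e-4\varepsilon_r}$. Since $4\varepsilon_r\le\Delta_e$ in $\mc$, the gap characterization gives that $(\scur\setminus\{e\})^{>\mu_e-\Delta_e}$, hence its subset $(\scur\setminus\{e\})^{>\mu_e-4\varepsilon_r}$, does not block $e$ in $\mc$; therefore the smaller empirical set does not block $e$ either, so $e$ is selected into $U$. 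Because the selection-round counter passes through every $s\le r$ before the $r$-th \badround completes, and optimal arms are never deleted, all of $\OPTSOL_s$ has been selected by then, giving $\OPTSOL_{r,s}=\emptyset$.

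The elimination claim has a deterministic core and a probabilistic shell. Deterministically, I would show that in the $r$-th \optround a surviving $e\in\BADARM_s$ with $s\le r$ is eliminated \emph{provided} the event $C_e:=\{|\hamean{e}-\mu_e|<\varepsilon_r\}$ holds. Gap monotonicity yields that $\scur^{\ge\mu_e+4\varepsilon_r}$ blocks $e$ in $\mc$; applying Lemma~\ref{lm:eps-opt-char}(3) to the $\varepsilon_r$-optimal solution $I$ with $\rho=\mu_e+3\varepsilon_r$ shows $I^{\ge\mu_e+3\varepsilon_r}$ is a basis of $\matroid_{D_\rho}$ where $D_\rho\supseteq\scur^{\ge\mu_e+4\varepsilon_r}$, so $D_\rho$ blocks $e$ and, by Lemma~\ref{lm:blocks-1}, $I_{\mu}^{\ge\mu_e+3\varepsilon_r}$ blocks $e$. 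Using statement~2 of Lemma~\ref{lm:good-S} ($|\hamean{u}-\mu_u|<\varepsilon_r/2$ on $I$) and $C_e$, every such $u$ has $\hamean{u}>\mu_e+2.5\varepsilon_r>\hamean{e}+1.5\varepsilon_r$, so $I_{\hat\mu}^{\ge\hamean{e}+1.5\varepsilon_r}\supseteq I_{\mu}^{\ge\mu_e+3\varepsilon_r}$ still blocks $e$, whence $e\notin\snew$. Thus an arm of $\BADARM_{r-1,s}$ can survive the $r$-th \optround only when $C_e$ fails, and by Lemma~\ref{lm:UNIFORM-SAMPLE-PROCEDURE} the call $\UNIFORMSAMPLING(\scur\setminus I,\varepsilon_r,\delta_r/\no)$ gives $\Pr[\lnot C_e]\le\delta_r/\no^r$ per fixed $e$. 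The reason I cannot simply union-bound here is that $|\BADARM_{r-1,s}|$ may be as large as $n$; instead I bound $\Ex[\,|\BADARM_{r,s}|\mid\text{state before round }r\,]\le(\delta_r/\no^r)\,|\BADARM_{r-1,s}|$ by linearity and apply Markov to get $\Pr[\,|\BADARM_{r,s}|>\tfrac18|\BADARM_{r-1,s}|\,]\le 8\delta_r/\no^r\le 8\delta_r$ (since $\no^r\ge1$ in any executed \optround). Union-bounding over all $(r,s)$ with $s\le r$ and plugging in $\delta_r=\delta/100r^3$ gives $\sum_{r\ge1}\sum_{s=1}^{r}8\delta_r=\frac{8\delta}{100}\sum_{r\ge1}r^{-2}\le\frac{4\delta}{5}$, which completes the claim.

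I expect the main obstacle to be the gap-monotonicity ingredient: one must verify that original-gap thresholds remain valid in $\mc$ after an arbitrary interleaving of contractions and deletions, and that this dovetails exactly with the empirical slacks ($\pm\varepsilon_r/2$ on $I$, $\pm\varepsilon_r$ elsewhere) so that the algorithm's fixed offsets ($1.5\varepsilon_r$ for elimination, $2\varepsilon_r$ for selection) neither over- nor undershoot the true thresholds. Once that alignment is pinned down, the expectation-plus-Markov shell and the $\sum r^{-2}$ union bound are routine.
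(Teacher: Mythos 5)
Your proposal is correct and follows essentially the same route as the paper's proof: the same gap-monotonicity ingredient (the paper's Lemma~\ref{lm:GAP-INC}, which it proves once as a standalone lemma via a basis-exchange contradiction rather than your per-operation case analysis), the same deterministic selection argument comparing empirical thresholds against the $4\varepsilon_r$ gap bound, and the same deterministic-core-plus-expectation-plus-Markov shell for the elimination claim, with identical constants ($8\delta_r$ per pair, $\sum_{r}\sum_{s\le r}8\delta_r\le 4\delta/5$). The only point worth adding is the paper's explicit remark that blocking by $I_{\mu}^{\ge \mu_e+3\varepsilon_r}$ forces $e\notin I$ (a subset of the independent set $I$ cannot block one of $I$'s own elements), which is needed because the elimination rule in line 12 only ever removes arms from $\scur\setminus I$; you leave this one-line observation implicit.
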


Proving Lemma~\ref{lm:DEC-QUICK} requires some 
preparations.
All the following arguments are conditioned on event $\event$.
We first prove a useful lemma which roughly states that
if we select some elements in $\OPT$
and remove some elements in $\BADARM$,
the gap of the remaining instance
does not decrease.

\begin{lemma}\label{lm:GAP-INC}
	For two subsets $A,B$ of $S$ such that $A \subseteq \OPT \subseteq B$, consider the matroid $\matroid' = (\matroid|B)/A$. 
	Let $S'$ be its ground set.
	For all element $u \in S'$, 
	we have that 
	$\Delta_{u}^{\matroid'} \ge \Delta_{u}^{\matroid}$. 
\end{lemma}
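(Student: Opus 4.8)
The plan is to prove the inequality by evaluating both gaps through the $\OPT$-difference definition of Definition~\ref{def:gap}, and showing that the optimization defining $\Delta_u^{\matroid'}$ ranges over a strictly more constrained family of bases than the one defining $\Delta_u^{\matroid}$, hence produces a smaller subtracted value and a larger gap. The first step is a structural observation about $\matroid' = (\matroid|B)/A$. Since $\OPT \subseteq B$ we have $\rank_\matroid(B) = k$, so every basis of $\matroid|B$ is exactly a basis of $\matroid$ contained in $B$. Since $A \subseteq \OPT$ is independent, a set $I$ disjoint from $A$ is a basis of $\matroid'$ iff $I \cup A$ is a basis of $\matroid|B$, with $\mu(I) = \mu(I\cup A) - \mu(A)$. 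Because $\OPT$ is the unique maximum-weight basis of $\matroid$ and already satisfies $A \subseteq \OPT \subseteq B$, it is also the maximum-weight basis of $\matroid|B$ containing $A$; hence $\OPT(\matroid') = \OPT\setminus A$ as a set, with value $\mu(\OPT) - \mu(A)$. Note finally that $S' \cap A = \emptyset$, so every $u \in S'$ has $u \notin A$.

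Next I handle $u \in S'$ with $u \notin \OPT(\matroid')$; since $u \notin A$ this forces $u \notin \OPT$, so both gaps use the ``$\notin\OPT$'' branch. Writing $\matroid'_{/\{u\}} = (\matroid|B)/(A\cup\{u\})$ (legitimate because $u \in S'$ makes $A\cup\{u\}$ independent), a short computation using Step 1 gives
\[
\Delta_u^{\matroid'} = \OPT(\matroid') - \OPT(\matroid'_{/\{u\}}) - \mu(u) = \mu(\OPT) - g,
\]
where $g$ is the maximum weight of a basis of $\matroid$ contained in $B$ and containing $A\cup\{u\}$. On the other hand $\Delta_u^{\matroid} = \mu(\OPT) - h$, where $h$ is the maximum weight of a basis of $\matroid$ containing $u$ (using that $\OPT(\matroid_{/\{u\}}) + \mu(u) = h$). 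Every basis in the optimization defining $g$ is a basis of $\matroid$ containing $u$, so $g \le h$, giving $\Delta_u^{\matroid'} \ge \Delta_u^{\matroid}$.

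Now take $u \in \OPT(\matroid') = \OPT\setminus A$, so $u \in \OPT$ and both gaps use the ``$\in\OPT$'' branch. If $u$ is isolated in $\matroid'$ then $\Delta_u^{\matroid'} = +\infty$ and the claim is immediate; conversely, if $u$ is isolated in $\matroid$ one checks it stays isolated after restricting to $B \supseteq \OPT$ and contracting $A \subseteq \OPT$, so both sides are $+\infty$. Thus assume $u$ is non-isolated in both matroids, so $\rank(\matroid'_{S'\setminus\{u\}}) = \rank(\matroid')$ and its optimal basis corresponds to a maximum-weight basis of $\matroid$ contained in $B$, containing $A$, and avoiding $u$. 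This yields
\[
\Delta_u^{\matroid'} = \OPT(\matroid') - \OPT(\matroid'_{S'\setminus\{u\}}) = \mu(\OPT) - M,
\]
with $M$ the maximum weight of such a basis, while $\Delta_u^{\matroid} = \mu(\OPT) - N$ where $N$ is the maximum weight of a basis of $\matroid$ avoiding $u$. Every basis in the optimization defining $M$ is a basis of $\matroid$ avoiding $u$, so $M \le N$ and again $\Delta_u^{\matroid'} \ge \Delta_u^{\matroid}$.

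The routine parts are the rank and weight bookkeeping for iterated restrictions and contractions, in particular justifying that $\OPT(\matroid'_{S'\setminus\{u\}}) = M - \mu(A)$ and that each derived matroid has the expected rank so the ``basis avoiding $u$'' / ``basis containing $u$'' correspondences are exact. The main obstacle I anticipate is the clean treatment of isolated elements: one must verify that isolation is preserved under first restricting to $B$ and then contracting $A$, so that the $+\infty$ convention is applied consistently on both sides and never produces a spurious finite-versus-infinite mismatch. An alternative to the $\OPT$-difference route would be to argue directly from the blocking characterization \eqref{eq:gap}, translating blocking in $\matroid'$ into blocking of $T \cup A$ in $\matroid$; I expect the $\OPT$-difference argument above to be the more transparent of the two.
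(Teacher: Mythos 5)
Your proof is correct and follows essentially the same route as the paper's: both rest on the correspondence $I \mapsto I \cup A$ between bases of $\matroid' = (\matroid|B)/A$ and bases of $\matroid$ contained in $B$ and containing $A$, so that each gap in $\matroid'$ equals $\mu(\OPT)$ minus a maximum taken over a subfamily of the bases appearing in the corresponding gap for $\matroid$. The paper packages this as a proof by contradiction (lifting a hypothetical too-heavy basis of $\matroid'$ back to a basis of $\matroid$ that would violate $\Delta_u^{\matroid}$), whereas you compare the maximization domains directly and spell out the isolated-element cases slightly more explicitly; these differences are cosmetic.
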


\begin{proof}
	By the definition of matroid contraction and the fact that $A$ is independent, we can see for any subset
	$U \subseteq S'$, $U$ is independent in $\matroid'$ iff $U \cup A$ is independent in $\matroid$. 
	We also have $\rank_{\matroid'}(S') = \rank_{\matroid}(S) - |A|$ and $\OPT \setminus A$ is the unique optimal solution for $\matroid'$.
	
	Now, let $u \in S'$. 
	Suppose $u \in \OPT(\matroid') = \OPT \setminus A$. 
	\eat{
	We distinguish two cases: 
	\begin{enumerate}
	\item
	Suppose $\Delta_u^{\matroid} = +\infty$, i.e., $u$
	is isolated in $\matroid$.
	We need to show $u$ is also isolated in $\matroid'$.
	In fact, this can be seen from the submodularity of the
	rank function:
	\begin{align*}
	\rank_{\matroid'}(S')-\rank_{\matroid'}(S'\setminus\{e\})
	&=
	(\rank_{\matroid|B}(S')-|A|)-
	(\rank_{\matroid|B}(S'\setminus\{e\})-|A|) \\
	&
	=\rank_{\matroid}(S')-
	\rank_{\matroid}(S'\setminus\{e\})
	\geq \rank_{\matroid}(S)-
	\rank_{\matroid}(S\setminus\{e\})=1.
	\end{align*}
	}	
	Suppose for contradiction that 
	$\Delta_{u}^{\matroid'} < \Delta_{u}^{\matroid}$.
	Then we have a basis $I$ 
	contained in $S' \setminus \{u\}$ in $\matroid'$ such that 
	$$
	\mu(I) = \mu(\OPT \setminus A) - \Delta_{u}^{\matroid'}
	> \mu(\OPT \setminus A) - \Delta_{u}^{\matroid}.
	$$ 
	But this means $I \cup A$ is a basis 
	contained in $S \setminus \{u\}$ in $\matroid$ 
	such that 
	$\mu(I \cup A) > \OPT(\matroid) -\Delta_{u}^{\matroid}$, contradicting to the definition of $\Delta_{u}^{\matroid}$.
	Note that a non-isolated element in $\matroid$
	may become isolated in $\matroid'$ (for which $\Delta_{u}^{\matroid'}=+\infty$).

	Then, we consider the case 
	$u \not\in \OPT(\matroid') = \OPT \setminus A$.
	The argument is quite similar.
	Suppose for contradiction that
	$\Delta_{u}^{\matroid'} < \Delta_{u}^{\matroid}$.
	This means that 
	there exists a basis $I$ in $\matroid'$ such that 
	$u \in I$ and $\mu(I) > \OPT(\matroid') - \Delta_{u}^{\matroid} = \mu(\OPT \setminus A) - \Delta_u^{\matroid}$. 
	But this means $A \cup I$ is a basis in $\matroid$ such that 
	$\mu(A \cup I) > \OPT - \Delta_u^{\matroid}$. Since $u \in (A \cup I)$, 
	this contradicts the definition of $\Delta_u^{\matroid}$.
\end{proof}


\begin{proofof}{Lemma~\ref{lm:DEC-QUICK}}
	We first prove $|\OPTSOL_{r,s}| = 0$ for $r\geq s$.
	Suppose we are at the beginning 
	of the $r^{th}$ \badround.
	Let $A = \Ans$ and $B = \scur \cup \Ans$. 
	We can see $A \subseteq OPT \subseteq B$ and $\mc = (\matroid|B)_{/A}$.
	
	For any arm $u \in \OPTSOL_{r-1,s}$ such that $s\le r$, we have $\Delta_u \ge 2^{-s} \ge 2^{-r} \ge 4\varepsilon_r$. By Lemma~\ref{lm:GAP-INC}, we have $\Delta_{u}^{\mc} \ge \Delta_{u}^{\matroid} \ge 4\varepsilon_r$, which means $(\scur \setminus \{u\})_{\mu}^{> \amean{u} - 4\varepsilon_r}$ does
	not block $u$. Note that conditioning on $\event$, $|\amean{e} - \hamean{e}| < \varepsilon_r$ for all $e \in \scur$.
	This implies that 
	$(\scur \setminus \{u\})_{\hat\mu}^{\ge \hamean{u} - 2\varepsilon_r}$ does not block $u$ as well.
	So $u \in U$ ($U$ is defined in line 21) and consequently $|\OPTSOL_{r,s}| = 0$.
	
	Now, we prove the second part of the lemma.
	We claim that for $1 \le s \le r$, we have that
	\begin{align}
	\label{eq:probbadarm}
	\Pr[|\BADARM_{r,s}| \le \frac{1}{8} |\BADARM_{r-1,s}|] \ge 1-8\delta_r
	\end{align}
	The lemma follows directly from the claim
	by taking a union bound over all $s,r$ 
	such that $1\le s \le r$:
	$$
	\sum_{r=1}^{+\infty}\sum_{s=1}^{r} 8\delta_r = 
	\sum_{r=1}^{+\infty}8\delta/100r^2 \le 4\delta/5.
	$$
	What remains to prove is the claim (Inequality~\eqref{eq:probbadarm}).	
	Suppose we are at the beginning 
	of the $r^{th}$ \optround.
	Let $A = \Ans$ and $B = \scur \cup \Ans$. 
	Conditioning on event $\event$, we can see that
	$A \subseteq \OPT \subseteq B$ 
	and $\mc = (\matroid|B)/A$.
	
	For any arm $u \in \BADARM_{r-1,s}$ such that $s\le r$, we have $\Delta_u \ge 2^{-s} \ge 2^{-r} \ge 4\varepsilon_r$. By Lemma~\ref{lm:GAP-INC}, we have $\Delta_{u}^{\mc} \ge \Delta_{u}^{\matroid} \ge 4\varepsilon_r$. So $(\scur)_{\mu}^{\ge \amean{u} + 4\varepsilon_r}$ blocks $u$ in $\mc$ by the definition of $\Delta_u^{\mc}$. 
	As $I$ is $\varepsilon_r$-optimal for $\mc$, 
	we also have $u$ is blocked by 
	$I_{\mu}^{\ge \amean{u} + 3\varepsilon_r}$ in $\mc$. 
	This implies that $u \notin I$.
	
	Since we have $|\amean{u} - \hamean{u}| < \varepsilon_r$
	with probability $1-\delta_r$, 
	combining with the fact that
	$|\amean{e}-\hamean{e}| < \varepsilon_r/2$ for all $e\in I$
	(guaranteed by $\event$),
	$u$ is blocked by $I_{\hat\mu}^{> \hamean{u} + 1.5\varepsilon_r}$ with probability $1-\delta_r$.
	This implies that 
	$u \not\in \BADARM_{r,s}$ ($u$ should be eliminated in line 12). 
	From the above, we can see that
	\[
	\Ex[|\BADARM_{r,s}|] \le \delta_r |\BADARM_{r-1,s}|.
	\]
	By Markov inequality, 
	we have $\Pr[|\BADARM_{r,s}| \ge \frac{1}{8} |\BADARM_{r-1,s}|] \le 8\delta_r$, 
	which concludes the proof.
\end{proofof}

Finally, everything is in place to prove Theorem~\ref{theo:pure-exploration}.

\begin{proofof}{Theorem~\ref{theo:pure-exploration}}
	Let $\event_G$ be the intersection of event $\event$ and the event that Lemma~\ref{lm:DEC-QUICK} holds. By Lemma~\ref{lm:good-S} and Lemma~\ref{lm:DEC-QUICK}, $\Pr[\event_G] \ge 1-\delta$. 
	Now we condition on this event.
	
	The correctness has been proved in Lemma~\ref{lm:algo-correct}.
	We only need to bound the sample complexity of \EXPGAPMATROID.
	
	We first consider the number samples taken by 
	the \UNIFORMSAMPLING procedure.
	We handle the samples taken by $\RECURPRUN$ later.  
	Now, we bound the total number of samples taken from arms in $\OPTSOL_s$ in all \badround s.
	Notice that we can safely ignore all samples on arms in $\BADARM$
	since $\no \ge \nb$.
	By Lemma~\ref{lm:DEC-QUICK},  $|\OPTSOL_{r,s}|=0$ for $r\ge s$.
    So it can be bounded as:
	\begin{align*}
	O\left(\sum_{r=1}^{s} |\OPTSOL_{r-1,s}|\cdot(\ln \no + \ln\delta_r^{-1} )\varepsilon_r^{-2} \right) 
	\,\,\le&\,\, O\left(\sum_{r=1}^{s} |\OPTSOL_s|\cdot(\ln k + \ln\delta^{-1} + \ln r)\varepsilon_r^{-2} \right) \\
	\,\,\le&\,\, O\left(|\OPTSOL_s|\cdot(\ln k + \ln\delta^{-1} + \ln s)\cdot 4^{s} \right).
	\end{align*}
	
	Next, we consider the number of samples from \optround s.
	In an \optround, since $\no \le \nb$, 
	we only need to bound the number of samples 
	from $\BADARM$. 
	The total number of samples taken from arms 
	in $\BADARM_s$ in the first $s$ \optround{s} can be bounded as:
	\begin{align*}
	 O\left(\sum_{r=1}^{s} |\BADARM_{r-1,s}|\cdot(\ln |\scur| + \ln\delta_r^{-1} )\varepsilon_r^{-2} \right) 
	\,\,\le\,\, O\left(|\BADARM_s|\cdot(\ln k + \ln\delta^{-1} + \ln s)\cdot 4^{s} \right) 
	\end{align*}
	The inequality holds since $|\scur| \le \no + \nb \le 2\no \le 2k$
    in a \badround.
	Now, we bound the number of samples from the remaining rounds. 	
	Since $|\BADARM_{r,s}| \le \frac{1}{8} |\BADARM_{r-1,s}|$ when $r\ge s$, we have:
	\begin{align*}
	 O\left(\sum_{r=s+1}^{+\infty} |\BADARM_{r-1,s}|\cdot(\ln |\scur| + \ln\delta_r^{-1} )\varepsilon_r^{-2} \right) 
	&\,\,=\,\, O\left(\sum_{r=s+1}^{+\infty} \frac{1}{8^{r-s}} \cdot |\BADARM_{s}|\cdot(\ln k + \ln \delta^{-1} + \ln r )\varepsilon_r^{-2} \right) \\
	&\,\,=\,\, O\left(|\BADARM_{s}|\cdot(\ln k + \ln \delta^{-1} + \ln s )\cdot 4^{s}\right)
	\end{align*}
	Putting them together, we can see the number of 
	samples incurred by $\UNIFORMSAMPLING$ is bounded by:
	$$
	O\left(\sum_{s=1}^{+\infty} (|\BADARM_{s}| + |\OPTSOL_{s}|)\cdot(\ln k + \ln \delta^{-1} + \ln s )\cdot 4^{s}\right),
	$$	
	which simplifies to 	
	$
	O\left(\sum_{e \in S} \Delta_{e}^{-2}(\ln k + \ln \delta^{-1} + \ln\ln \Delta_e^{-1}) \right).
	$
	Finally, we consider the number of samples taken by
	$\RECURPRUN$.
	Noticing $\no = \rank(\mc)$, 
	the number of samples is
	$O(|\scur|(\ln \no + \ln \delta_r^{-1})\varepsilon_r^{-2})$. 
	So \RECURPRUN does not affect the sample complexity, and 
	we finish our proof.
\end{proofof}


\section{An Algorithm for \matroidbanditAvg}
\label{sec:matroid-eps-mean-PAC}

In this section we prove Theorem~\ref{theo:eps-mean-algo} by providing an algorithm with the desired sample complexity.

\subsection{Building Blocks}

\topic{1. Na\"ive Uniform Sampling Algorithm}

We first investigate how many samples the uniform sampling
algorithm needs in order to find an \EPSMEANOPT\ solution.
The uniform sampling procedure in \NAIVETWO\ has a 
different parameter from that in \NAIVE.

\begin{algorithm}[H]
\LinesNumbered
\setcounter{AlgoLine}{0}
	\caption{\NAIVETWO($\matband,\varepsilon,\delta$)}
	\label{algo:NAIVE-PROCEDURE2}
	\KwData{A \matroidbanditAvg\ instance $\matband=(S,\matroid)$, with
	$\rank(\matroid) = k$, approximation error $\varepsilon$, confidence level $\delta$.}
	\KwResult{A basis in $\matroid$.}	
	Sample each arm $e \in S$ for $Q_0 = 2\varepsilon^{-2}\cdot\ln\frac{2\binom{|S|}{k}}{\delta}/k$ times.
	 Let $\hamean{e}$ be its empirical mean.
	
	{\bf Return} The optimal solution $I$ with respect to the empirical means.
\end{algorithm}

The following lemma shows the performance of the above algorithm.
The proof is fairly standard and can be found in the appendix.

\begin{lemma}\label{lm:NAIVE2}
	Let $I$ be the output of $\NAIVETWO(\matband,\varepsilon,\delta)$. 
	With probability $1-\delta$, $I$ is \EPSMEANOPT\ for $\matband$.
	The total number of samples is at most
	$$O\left(|S| \cdot \Big(\ln\frac{|S|}{k}+\frac{\ln\delta^{-1}}{k}\Big)\varepsilon^{-2}\right).$$ 
\end{lemma}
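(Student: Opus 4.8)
The plan is to separate the two claims. The sample-complexity bound is immediate from the algorithm's description: every arm is sampled exactly $Q_0 = 2\varepsilon^{-2}\ln\frac{2\binom{|S|}{k}}{\delta}/k$ times, so the total is $|S|\cdot Q_0$. Using the standard estimate $\ln\binom{|S|}{k} \le k\ln\frac{e|S|}{k}$, I get $\ln\frac{2\binom{|S|}{k}}{\delta} = O\big(k\ln\frac{|S|}{k} + \ln\delta^{-1}\big)$, and hence $|S|\cdot Q_0 = O\big(|S|(\ln\frac{|S|}{k} + \frac{\ln\delta^{-1}}{k})\varepsilon^{-2}\big)$, which is the claimed bound. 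All the real work is therefore in the correctness claim.

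For correctness, the key observation is that one should \emph{not} try to estimate each $\amean{e}$ accurately: with only $Q_0 \approx \varepsilon^{-2}\ln(\cdot)/k$ samples per arm the individual estimates are off by roughly $\sqrt{k}\,\varepsilon$, which is useless arm-by-arm. Instead I would control the estimate of the \emph{total} weight of any fixed candidate output, relying on the error averaging out over the $k$ arms of a basis. Concretely, for a fixed basis $T$ the quantity $\sum_{e\in T}(\hamean{e}-\amean{e})$ is an average over $kQ_0$ independent samples drawn from distributions on $[0,1]$, so a Hoeffding bound gives
\[
\Pr\Big[\,\big|\textstyle\sum_{e\in T}(\hamean{e}-\amean{e})\big| \ge \tfrac{k\varepsilon}{2}\,\Big] \le 2\exp\!\Big(-\tfrac{k\varepsilon^2 Q_0}{2}\Big) = \frac{\delta}{\binom{|S|}{k}},
\]
where the final equality is exactly the chosen value of $Q_0$. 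Taking a union bound over the at most $\binom{|S|}{k}$ bases $T$ of $\matroid$ shows that, with probability at least $1-\delta$, every basis $T$ satisfies $\big|\sum_{e\in T}\hamean{e} - \sum_{e\in T}\amean{e}\big| < \tfrac{k\varepsilon}{2}$. Call this event $\event$; crucially, the union is over output \emph{sets}, not over individual arms, which is what lets $Q_0$ carry the factor $1/k$.

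Finally I would condition on $\event$ and compare $I$ with the unique optimal basis $O$. Since $I$ is optimal for the empirical means, $\sum_{e\in I}\hamean{e} \ge \sum_{e\in O}\hamean{e}$. Applying $\event$ to both $I$ and $O$ then yields $\sum_{e\in I}\amean{e} \ge \sum_{e\in I}\hamean{e} - \tfrac{k\varepsilon}{2} \ge \sum_{e\in O}\hamean{e} - \tfrac{k\varepsilon}{2} \ge \sum_{e\in O}\amean{e} - k\varepsilon = \OPT(\matroid) - k\varepsilon$, i.e.\ $\frac1k\sum_{e\in I}\amean{e} \ge \frac1k\OPT(\matroid) - \varepsilon$, so $I$ is \EPSMEANOPT. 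The only delicate point is calibrating the Hoeffding deviation (namely $k\varepsilon/2$ per basis, so that two applications sum to $k\varepsilon$) against the union-bound size $\binom{|S|}{k}$; getting this balance right is precisely what forces the stated value of $Q_0$ and produces the $1/k$ saving over the naive per-arm analysis.
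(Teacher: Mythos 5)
Your proposal is correct and follows essentially the same route as the paper's own proof: a Hoeffding bound on the aggregate of the $kQ_0$ samples of a fixed basis (deviation $k\varepsilon/2$), a union bound over the at most $\binom{|S|}{k}$ bases, and then the standard three-step comparison of $I$ against the optimal basis $O$, with the sample complexity following from $\binom{|S|}{k} \le \left(\frac{e|S|}{k}\right)^{k}$. The only difference is cosmetic (you phrase the concentration in terms of sums rather than averages), so there is nothing to add.
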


\eat{
\begin{proofof}{Lemma~\ref{lm:NAIVE2}}
	First consider a basis $U$ in $\matroid$ (hence $|U|=k$). 
	We apply Proposition~\ref{prop:chernoff}.1 to all samples 
	taken from the arms in $U$:
	\begin{align*}
	\Pr\left[\Big|\frac{1}{k} \sum_{u \in U} \amean{u} - \frac{1}{k} \sum_{u \in U} \hamean{u} \Big|> \varepsilon/2\right]
	\le 2 \exp( - \varepsilon^2/2 \cdot Q_0 \cdot k) \le \delta/\binom{|S|}{k}.
	\end{align*}	
	Note that there are at most $\binom{|S|}{k}$ distinct bases. Hence, by a union bound over all bases, with probability $1-\delta$, 
	we have $\left|\frac{1}{k} \sum\nolimits_{u \in U} \amean{u} - \frac{1}{k} \sum\nolimits_{u \in U} \hamean{u}\right| \le \varepsilon/2$,
	for all basis $U$.
	
	Let $O = \OPT(\matroid) $. Then we have: $\frac{1}{k} \sum_{u \in I} \amean{u} \ge \frac{1}{k} \sum_{u \in I} \hamean{u} -\varepsilon/2 \ge \frac{1}{k} \sum_{u \in O} \hamean{u} -\varepsilon/2 \ge \frac{1}{k} \sum_{u \in O} \amean{u} -\varepsilon$,
	which means $I$ is \EPSMEANOPT\ for $\matband$.
	
	Finally, using the fact that $\binom{|S|}{k} \le \left(\frac{e|S|}{k}\right)^{k}$, the sample complexity can be 
	easily verified.
\end{proofof}
}

\topic{2. Elimination Procedure}

The following procedure \ELIMINATION\ is our main ingredient. 
Roughly speaking, it can help us to eliminate a constant fraction of 
the remaining arms while preserving the value of the optimal solution.

The idea of the procedure 
is  similar to \RECURPRUN\ in Section~\ref{sec:matroid-PAC}.
It runs as follows. 
It first samples a random subset $F$ by choosing each element with probability $p$ independently. Then it
finds a $\epsilon/5$-optimal solution $I$ for $F$
using \RECURPRUN. 
Afterwards,
we estimate the means of all arms in $I$ 
by calling $\UNIFORMSAMPLING(I,\alpha,\delta/6k)$.
This guarantees the additive errors for all arms in $I$ are at most $\alpha=\varepsilon/5$, with probability at least $1-\delta/6$.
Then we take a uniform number
$Q_0 = \beta^{-2}\max(\ln\frac{6}{\delta}/k,\ln200/2)$
of samples from the other arms. 
Next, we use $I$ to eliminate those sub-optimal arms in $S\setminus I$
based on their empirical means. 
More precisely, an arm $e$ should be eliminated
if it is blocked by the arms of $I$ with empirical means larger than 
the empirical mean of $e$ minus $3\varepsilon/5$.
The pseudo-code can be found in Algorithm~\ref{algo:ELIMINATION}.

\begin{algorithm}[H]
\LinesNumbered
\setcounter{AlgoLine}{0}
	\caption{\ELIMINATION($\matband,\varepsilon,\delta$)}
	\label{algo:ELIMINATION}
	\KwData{A \matroidbanditAvg\ instance $\matband=(S,\matroid)$, with
	$\rank(\matroid) = k$, approximation error $\varepsilon$, confidence level $\delta$.}
	\KwResult{A remained subset $S' \subseteq S$.}
	$\lambda\leftarrow \varepsilon/5 ,\alpha\leftarrow \varepsilon/5 ,\beta \leftarrow \varepsilon/5$
	
	Let $p \leftarrow \frac{100 \cdot (k + \ln\delta^{-1} + \ln 6)}{N}$
	
	Sample a random subset $F$ by choosing each element in $S$ with probability $p$ independently.
	
	$I \leftarrow \RECURPRUN(\matband_F=(F,\matroid_F),\lambda,\delta/6)$
	
	$\hamean{} \leftarrow \UNIFORMSAMPLING(I,\alpha,\delta/6k)$
	
	Sample each arm $e \in (S \setminus I)$ for $Q_0 = \beta^{-2}\max(\ln\frac{6}{\delta}/k,\ln200/2)$ times.
	Let $\hat{\mu}_e$ be its empirical mean.
	
	${\bf Return}\ S' = I \cup \{ e \in S \setminus I \mid I_{\hat\mu}^{\ge \hat{\mu}_e -\lambda-\alpha-\beta} \text{ does not block $e$} \}$
\end{algorithm}

We first specify the event we condition our analysis on.
The proof of the following lemma
is almost identical to that for Lemma~\ref{lm:good-F}
and can be found in the appendix.

\newcommand{\wO}{\widetilde{O}}

\begin{lemma}\label{lm:good-event1}
	With probability at least $1-2\delta/3$, the following statements hold:
	\begin{OneLiners}
		\item[1.] $|F| \le 2pN = O(k + \ln\delta^{-1})$.
		\item[2.] The number of $F$-good elements in $S$ is at most $0.04N$.
		\item[3.] $I$ is $\lambda$-optimal for $F$.
		\item[4.] $|\mu_e - \hat{\mu}_{e}| \le \alpha$ for all elements $e \in I$.
	\end{OneLiners}
\end{lemma}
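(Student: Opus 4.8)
The plan is to follow the proof of Lemma~\ref{lm:good-F} essentially verbatim, establishing each of the four statements as an event that fails with probability at most $\delta/6$, so that a union bound yields the claimed $1-2\delta/3$. Writing $N=|S|$ and recalling that $p = 100(k+\ln\delta^{-1}+\ln 6)/N$, the only genuine differences from Lemma~\ref{lm:good-F} are that $p$ now scales like $(k+\ln\delta^{-1})/N$ rather than being a fixed constant, and that the target for statement~2 is the constant fraction $0.04N$ rather than $pN$.

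For statement~1, I would bound $|F|=\Bin(N,p)$ by its Chernoff upper tail (Corollary~\ref{cor:B-upperbound}): $\Pr[|F|>2pN]\le e^{-pN/3}$. Since $pN = 100(k+\ln\delta^{-1}+\ln 6)\ge 100\ln(6\delta^{-1})$, we get $e^{-pN/3}\le\delta/6$, and $2pN = 200(k+\ln\delta^{-1}+\ln 6) = O(k+\ln\delta^{-1})$ gives the stated bound. Statement~4 is immediate from Lemma~\ref{lm:UNIFORM-SAMPLE-PROCEDURE} applied to the call $\UNIFORMSAMPLING(I,\alpha,\delta/6k)$, together with a union bound over the at most $k$ arms of $I$: each such arm deviates by more than $\alpha$ with probability at most $\delta/6k$, so all are accurate with probability at least $1-\delta/6$. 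Statement~3 is simply the correctness guarantee of the fully-analyzed subroutine $\RECURPRUN(\matband_F,\lambda,\delta/6)$ from Theorem~\ref{theo:eps-opt-algo}, which returns a $\lambda$-optimal basis of $\matroid_F$ with probability at least $1-\delta/6$; unlike in Lemma~\ref{lm:good-F}, no induction is needed here, since \RECURPRUN\ is already established.

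Statement~2 is the main point. I would invoke Lemma~\ref{lm:rand-samp}, so that the number $X$ of $F$-good elements is stochastically dominated by $\NegBer(k;p)$, and then use the identity $\Pr[\NegBer(k;p)>m]=\Pr[\Bin(m,p)<k]$ with $m=0.04N$ to write $\Pr[X>0.04N]\le\Pr[\Bin(0.04N,p)<k]$. The key calculation is that $\Ex[\Bin(0.04N,p)] = 0.04Np = 4(k+\ln\delta^{-1}+\ln 6)\ge 4k$, so the event $\Bin(0.04N,p)<k$ is a lower-tail deviation below one quarter of the mean. A Chernoff lower-tail bound then gives $\Pr[\Bin(0.04N,p)<k]\le e^{-(9/8)(k+\ln\delta^{-1}+\ln 6)}\le\delta/6$. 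This is exactly where the constants $100$ (in $p$) and $0.04$ are tuned to cooperate, and checking that they indeed produce a $\delta/6$ failure is the only slightly delicate step.

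Combining the four statements by a union bound, all hold simultaneously with probability at least $1-4\cdot\delta/6 = 1-2\delta/3$, completing the proof. The main obstacle, such as it is, lies entirely in statement~2: correctly applying the stochastic-domination lemma and verifying that the chosen constants make the negative-binomial tail small enough. The remaining three statements are routine Chernoff and union-bound arguments or a direct appeal to the already-proven \RECURPRUN\ guarantee.
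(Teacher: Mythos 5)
Your proposal is correct and matches the paper's own proof essentially step for step: the same decomposition into four events each failing with probability at most $\delta/6$, the same Chernoff bound for $|F|$, the same stochastic-domination-plus-$\NegBer$-identity argument for the count of $F$-good elements (your constant $e^{-(9/8)(k+\ln\delta^{-1}+\ln 6)}$ is just the unrounded version of the paper's $\exp(-pA/4)$ bound), and the same direct appeals to Theorem~\ref{theo:eps-opt-algo} and Lemma~\ref{lm:UNIFORM-SAMPLE-PROCEDURE} for statements~3 and~4. Your observation that no induction is needed (unlike Lemma~\ref{lm:good-F}) because \RECURPRUN\ is already established is also consistent with the paper.
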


\eat{
\begin{proofof}{Lemma~\ref{lm:good-event1}}
	We show that each claim happens with probability $\ge 1-\delta/6$.
	
	First, by Corollary~\ref{cor:B-upperbound}, we have $\Pr[|F| > 2pN] \le \exp(-pN/3) \le \delta/6$. 	
	Next, let the number of $F$-good elements in $S$ be $X$ and $A=0.04N$. 
	By Lemma~\ref{lm:rand-samp}, $X$ is stochastically dominated by $\NegBer(k;p)$.
	Then, by Lemma~\ref{lm:NB-bound} and $pA = \frac{1}{25} pN \ge 4k$, 
	we know $\Pr[X > A] \le \Pr[\NegBer(k;p) > A]=\Pr[\Bin(A,p) < k] \le \Pr[\Bin(A,p) < pA/4]$ . By Corollary~\ref{cor:B-upperbound},
	\[
	\Pr[\Bin(A,p) < pA/4] \le \exp(-9/16\cdot pA/2) \le \exp(-pA/4) \le \delta/6.
	\]
	So we have $\Pr[X > 0.04N] \le \delta/6$.
	
	Also, by Theorem~\ref{theo:eps-opt-algo}, $I$ is a
	$\lambda$-optimal solution for $F$ with probability $1-\delta/6$. Finally, by Lemma~\ref{lm:UNIFORM-SAMPLE-PROCEDURE} and a simple union bound, we have $|\amean{e} - \hamean{e}| \le \alpha$ for all elements $e \in I$ with probability $1-\delta/6$.
\end{proofof}
}

Denote the event described in the previous Lemma by $\event$. 
In the following we condition on event $\event$.
For notational convenience, we define the average value of a subset.

\begin{defi}
	Given a matroid $\matroid=(S,\indepfam)$ and $\mu : S \to \PR$ be a cost function, for any subset $A \subseteq S$, define 
	the average value of $A$ to be
	\[
	\OPTVAL_{\matroid,\mu}(A) :=
	\begin{cases}
	\OPT_{\mu}(\matroid_A)/\rank(A)  \quad &\quad \rank(A) = \rank(S),\\
	-\infty \quad &\quad \rank(A) < \rank(S).
	\end{cases}
	\]
	When $\matroid$ or $\mu$ is clear from the context, we omit it for convenience.
\end{defi}

The following lemma summarizes the properties of \ELIMINATION,
which roughly says that we can eliminate a significant portion of arms
while the value does not drop by much.

\begin{lemma}\label{lm:ELIMINATION-PROCEDURE}
	Let $N=|S|$. Suppose $N \ge 100 \cdot (k + \ln\delta^{-1} + \ln 6)$ and $\ELIMINATION(\matband,\varepsilon,\delta)$ returns $S'$. With probability at least $1-\delta$, the following statements hold:
	
	\begin{OneLiners}
		\item[1.] $\OPTVAL(S') \ge \OPTVAL(S) - \varepsilon$.
		
		\item[2.] $|S'| \le 0.1 |S|$.
		
		\item[3.] It takes $O\Big(\big((1+\ln\delta^{-1}/k)|S| + (k + \ln\delta^{-1})(\ln k + \ln \delta^{-1})\big)\varepsilon^{-2}\Big)$ samples.
	\end{OneLiners}
\end{lemma}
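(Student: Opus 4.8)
The plan is to condition throughout on the event $\event$ of Lemma~\ref{lm:good-event1}, which holds with probability $1-2\delta/3$ and pins down the random set $F$, the recursively computed basis $I$ (which is $\lambda$-optimal for $F$ and satisfies $|\amean{a}-\hamean{a}|\le\alpha$ for all $a\in I$), the bound $|F|\le 2pN$, and the bound that at most $0.04N$ elements of $S$ are $F$-good. Given $\event$ the only remaining randomness is the fresh block of $Q_0$ samples drawn for each arm of $S\setminus I$ in line~6; conditioned on $\event$ the empirical means $\{\hamean{e}\}_{e\in S\setminus I}$ are mutually independent across arms. I would prove each of the three statements on a sub-event of these $Q_0$-samples of probability $1-\delta/6$, so that everything holds together with probability at least $1-2\delta/3-\delta/6-\delta/6=1-\delta$.

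For the size bound (statement~2) I would write $S'$ as the union of $I$, the surviving $F$-good arms, and the surviving $F$-bad arms. The first piece has $|I|\le\rank(\matroid_F)\le k\le N/100$, and the $F$-good arms number at most $0.04N$ by $\event$, so it remains to bound the $F$-bad survivors. For an $F$-bad arm $e\in S\setminus I$, Lemma~\ref{lm:F-eliminate} gives that $I_{\mu}^{\ge\amean{e}-\lambda}$ blocks $e$; combining the $I$-accuracy from $\event$ with the one-sided event $\hamean{e}\le\amean{e}+\beta$ shows $I_{\hat\mu}^{\ge\hamean{e}-\lambda-\alpha-\beta}$ blocks $e$, i.e.\ $e$ is eliminated, unless $\hamean{e}>\amean{e}+\beta$. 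Since $\beta^{2}Q_0\ge\tfrac12\ln200$, the latter has probability at most $e^{-2\beta^2Q_0}\le 1/200$, so each $F$-bad arm survives with probability $\le 1/200$, independently across arms. Thus the number of survivors is stochastically dominated by $\Bin(N,1/200)$, and by Corollary~\ref{cor:B-upperbound} together with $N\ge100\ln\delta^{-1}$ it stays below $0.05N$ except with probability $\delta/6$; then $|S'|\le 0.01N+0.04N+0.05N\le 0.1N$.

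The value bound (statement~1) is the crux. First note that $\rank(S')=k$ holds deterministically: every eliminated arm is, by the very definition of $S'$, blocked by a subset of $I\subseteq S'$, hence blocked by $S'$, so adding the eliminated arms back cannot raise the rank of $S'$ above $\rank(S)=k$; in particular $\OPTVAL(S')\ne-\infty$. To bound the loss I would use the layer-cake identity $\OPT_{\mu}(\matroid_S)-\OPT_{\mu}(\matroid_{S'})=\int_{0}^{\infty}\big(\rank(S^{\ge r})-\rank((S')^{\ge r})\big)\,\d r$, valid since the weights are nonnegative and $\rank(S^{\ge r})=|O^{\ge r}|$ by Lemma~\ref{lm:char-opt}(4), where $O=\OPT(\matroid)$. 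Because $O^{\ge r}$ spans $S^{\ge r}$, the integrand is at most the number of $o\in O^{\ge r}$ that $(S')^{\ge r}$ fails to block. An arm $o\in O\cap S'$ never contributes; for an eliminated $o$, its blocking set $I_{\hat\mu}^{\ge\hamean{o}-3\varepsilon/5}$ consists of arms of true mean $\ge\hamean{o}-4\varepsilon/5$ (again by $\event$), so $(S')^{\ge r}$ blocks $o$ whenever $r\le\hamean{o}-4\varepsilon/5$, confining $o$'s contribution to the interval $(\hamean{o}-4\varepsilon/5,\amean{o}]$. Hence the total loss is at most $\sum_{o\in O\setminus I}(\amean{o}-\hamean{o})^{+}+\tfrac{4}{5}k\varepsilon$, and it suffices to show $\sum_{o\in O\setminus I}(\amean{o}-\hamean{o})^{+}\le k\varepsilon/5$.

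This last estimate is where the real difficulty lies, and it is the step I expect to be the main obstacle. Since each arm of $S\setminus I$ is sampled only $Q_0=O((1+\ln\delta^{-1}/k)\varepsilon^{-2})$ times, per-arm estimates are accurate only to constant confidence, so I cannot union-bound accuracy over the $k$ arms of $O$ (let alone over all bases, as in the na\"ive Lemma~\ref{lm:NAIVE2}). Instead I would exploit that only an \emph{average} guarantee is needed and concentrate the total underestimation $\sum_{o\in O\setminus I}(\amean{o}-\hamean{o})^{+}$ directly: each summand is nonnegative with a sub-Gaussian upper tail $\Pr[(\amean{o}-\hamean{o})^{+}>s]\le e^{-2Q_0 s^2}$, and a Chernoff/MGF bound over these $\le k$ independent terms yields $\Pr[\sum_{o\in O\setminus I}(\amean{o}-\hamean{o})^{+}>k\varepsilon/5]\le\delta/6$. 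The two branches of $Q_0=\beta^{-2}\max(\ln(6\delta^{-1})/k,\tfrac12\ln200)$ are calibrated precisely so this holds both when $\ln\delta^{-1}$ dominates (failure $\le(\delta/6)^{\Theta(1)}$) and when $k$ dominates (failure $\le e^{-\Theta(k)}\le\delta/6$). Granting this, the loss is at most $k\varepsilon$, giving $\OPTVAL(S')\ge\OPT_{\mu}(\matroid_S)/k-\varepsilon=\OPTVAL(S)-\varepsilon$. The sample bound (statement~3) is then routine accounting on $\event$: with $|F|=O(k+\ln\delta^{-1})$ and $\rank(\matroid_F)\le k$, the recursive call $\RECURPRUN(\matband_F,\lambda,\delta/6)$ costs $O((k+\ln\delta^{-1})(\ln k+\ln\delta^{-1})\varepsilon^{-2})$ by Theorem~\ref{theo:eps-opt-algo}; the call $\UNIFORMSAMPLING(I,\alpha,\delta/6k)$ on $\le k$ arms is absorbed; and the $Q_0$ samples on the $\le N$ arms of $S\setminus I$ cost $O((1+\ln\delta^{-1}/k)N\varepsilon^{-2})$, which sum to the claimed complexity.
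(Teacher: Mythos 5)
Your proposal is correct in outline, and for the size bound (part 2) and the sample-complexity accounting (part 3) it is essentially the paper's own proof: the same conditioning on Lemma~\ref{lm:good-event1}, the same per-arm constant-probability elimination of $F$-bad arms followed by stochastic domination by a binomial and Corollary~\ref{cor:B-upperbound}, and the same bookkeeping of the three sampling stages. Where you genuinely depart from the paper is the value bound (part 1). The paper fixes $I$, defines for each $u\in O\setminus I$ a \emph{deterministic} blocking gap $\Delta_u=\min\{w \betw I_{\mu}^{\ge \amean{u}-w}\text{ blocks }u\}$, introduces two-valued random variables $X_u\in\{0,\max(0,\Delta_u-\gamma)\}$ whose only randomness is the elimination indicator, bounds $\Pr[\frac{1}{k}\sum_u X_u>\beta]\le\delta/6$ by quoting Proposition~\ref{prop:zhouA4}, and converts $\sum_u \Delta_u$ into a value loss via an auxiliary cost function that lowers each eliminated optimal arm by $\Delta_u$. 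You instead use the layer-cake identity $\OPT_{\mu}(\matroid_A)=\int_0^{\infty}\rank(A^{\ge r})\,\d r$ (valid by Lemma~\ref{lm:char-opt}(iv)), bound the integrand deficit by the number of arms of $O^{\ge r}$ not blocked by $(S')^{\ge r}$, confine each eliminated arm's contribution to an interval of length $(\amean{o}-\hamean{o})^{+}+4\varepsilon/5$, and then concentrate $\sum_{o}(\amean{o}-\hamean{o})^{+}$ directly; your preliminary observations (e.g.\ that $\rank(S')=\rank(S)$ holds deterministically) are also correct. The one step you must not leave as an assertion is exactly the one you flag: the bound $\Pr[\sum_{o\in O\setminus I}(\amean{o}-\hamean{o})^{+}>k\varepsilon/5]\le\delta/6$ does \emph{not} follow from Proposition~\ref{prop:zhouA4}, which applies only to variables supported on two points $\{0,a_i\}$, whereas your summands are continuous; this is precisely why the paper engineers the deterministic values $\Delta_u$ so that only an indicator is random. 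Your route does close, but it needs its own MGF computation: from $\Pr[(\amean{o}-\hamean{o})^{+}>s]\le e^{-2Q_0 s^2}$ one gets $\Ex[e^{\theta(\amean{o}-\hamean{o})^{+}}]\le 1+\theta\sqrt{\pi/(2Q_0)}\,e^{\theta^2/(8Q_0)}$, and taking $\theta=4Q_0\beta$ and writing $x=\beta^2Q_0\ge\max\bigl(\ln(6\delta^{-1})/k,\tfrac12\ln 200\bigr)$, the Chernoff bound is $\exp\bigl(-k\bigl(4x-\ln(1+4\sqrt{\pi x/2}\,e^{2x})\bigr)\bigr)$; one checks that $4x-\ln(1+4\sqrt{\pi x/2}\,e^{2x})\ge x$ for all $x\ge\tfrac12\ln 200$, so the failure probability is at most $e^{-kx}\le\delta/6$. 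With that computation supplied, your argument is a complete and somewhat more geometric alternative to the paper's proof; without it, the crux of part 1 is only asserted.
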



\begin{proof}
	Let $O$ be the optimal solution of $\matband$, and $\wO = O \setminus I$.
	We prove the first claim by showing 
	$\OPTVAL_{\mu}(I \cup (O \cap S')) \ge 
	\OPTVAL_{\mu}(O) -\varepsilon = \OPTVAL_{\mu}(S) - \varepsilon$. 
	Now, we fix the set $I$ found in step 4 (which satisfies $\event$).
	For each arm $u \in \wO$, set $\Delta_u = \min\{w \in \R \betw I_{\mu}^{\ge \amean{u}-w} \text{ blocks }u \}$.
	Note that for fixed $I$, $\Delta_u$ is a fixed number for each $u$. 
	As $u \in \wO \subseteq O$, we must have $\Delta_u \ge 0$. 
	Let $\gamma = \lambda+2\alpha+\beta$.
	
	For each $u \in O$, 
	let $
	\eta_u = \max(0,\Delta_u-\gamma)
	$ if $u \in \wO$, otherwise let $\eta_u = 0$.
	Note that $\eta_u$ is a fixed number, if we fix $I$.
	For each $u \in O$, 
	we also define the random variables $X_u$
	as follows: 
		$$
		X_u = \begin{cases}
			0   & \quad \text{ if } u\in S'; \\
			\eta_u   & \quad \text{ if } u\in O \setminus S'.
		\end{cases}
		$$
	Note that the randomness of $X_u$ is only due to step 6
	(which may cause $u\in S'$ or $u\notin S'$).
	We define $X = \frac{1}{k} \sum_{u \in O} X_u$. 
	
	Now, we show $\Pr[X > \beta] \le \delta/6$. 
	Note those random variables are independent.
	We first show that for each $u \in O$,
	$$
	\Pr[X_u = \eta_u]\leq \exp(-\eta_u^2\cdot 2 Q_0).
	$$ 
	It trivially holds for the case $\eta_u = 0$.
	So we only need to consider the case $\Delta_u > \gamma$.
	Suppose $u \not\in S'$.
	So, we have $I_{\hat\mu}^{\ge \hamean{u} - \lambda-\alpha-\beta}$ blocks $u$. Since $|\amean{e}-\hamean{e}| \le \alpha$ for all arms $e \in I$, we have $I_{\mu}^{\ge \hamean{u} - \lambda -2\alpha-\beta} = I_{\mu}^{\ge \hamean{u} - \gamma}$ blocks $u$ as well
	(recall $\gamma = \lambda+2\alpha+\beta$).
	By the definition of $\Delta_u$, we have $\hamean{u} - \gamma \le \amean{u}-\Delta_u$, which means $ \hamean{u} - \amean{u} < \gamma -\Delta_u = -\eta_u$. By Proposition~\ref{prop:chernoff}.1, we have $\Pr[\hamean{u} - \mu_{u} < -\eta_u] \le \exp(-\eta_u^2\cdot2Q_0)$.	
	So $\Pr[X_u = \eta_u] \le \Pr[\hamean{u} - \amean{u} < -\eta_u] \le \exp(-\eta_u^2\cdot2Q_0)$.
	
	Then we can apply Proposition~\ref{prop:zhouA4} with $t=2Q_0$ and obtain
	$$
	\Pr[X > \beta] < \exp(-\beta^2\cdot k Q_0) \le \delta/6.
	$$
	
	\newcommand{\wc}{\tilde{\mu}}
	
	Now, we show $X \le \beta$ implies $\OPTVAL_{\mu}(I\cup(O \cap S')) \ge \OPTVAL_{\mu}(O) -\varepsilon$. 
	We define a new cost function $\wc$ on $I \cup O$ as follows:
	$$
	\wc(e) = \begin{cases}
	\amean{e} &\quad e \in (I \cup O) \cap S'\\
	\amean{e}-\Delta_e &\quad e \in \wO \setminus S'. 
	\end{cases}
	$$
	Note that $I \subseteq S'$, 
	hence $\wc$ is well-defined for all elements in $I \cup O$. 
	For all element $u \in \wO \setminus S'$, by the definition of $\Delta_u$, $I_{\wc}^{\ge \wc(u)} = I_{\mu}^{\ge \amean{u}-\Delta_u}$ blocks $u$. 
	Hence, we have 
	$$
	\OPT_{\wc}(I \cup O) = \OPT_{\wc}((I \cup O)\cap S').
	$$
	By the definition of $\wc$, we can see that
	$$
	\OPT_{\mu}((I \cup O)\cap S') \ge 
	\OPT_{\wc}((I \cup O)\cap S') = \OPT_{\wc}(I \cup O) \ge \OPT_{\mu}(I \cup O) - \sum_{u \in \wO \setminus S'} \Delta_u.
	$$
	The first inequality is due to the fact that $\Delta_u > 0$ for all $u\in \wO$. 
	The last inequality is due to the fact that the total value we added is no more than 
	$\sum_{u \in \wO \setminus S'} \Delta_u$.
	Note that $\sum_{u \in \wO \setminus S'} \Delta_u \le \sum_{u \in O} (X_u + \gamma)$ by the definition of $X_u$.
	Hence $X \le \beta$ implies $\sum_{u \in \wO \setminus S'} \Delta_u \le (\beta + \gamma) \cdot k \le k\cdot \varepsilon$, which further implies $$
	\OPTVAL_{\mu}(S') \ge \OPTVAL_{\mu}(I \cup (O \cap S')) \ge \OPTVAL_{\mu}(I \cup O) - \varepsilon = \OPTVAL_{\mu}(S) -\varepsilon.
	$$
	This proves the first claim.
	
	Next, we prove the second claim.
	Let $N_B$ be the number of $F$-bad elements in $S\setminus I$. 
	Then by Lemma~\ref{lm:good-event1}
	and $N \ge 100 k$, 
	$ N_B\ge N-0.04N-k \ge 0.95N$. 
	Let $e$ be an $F$-bad element in $S \setminus I$. $F_{\mu}^{\ge \amean{e}}$ blocks $e$. Since $I$ is $\lambda$-optimal for $F$, we have $I_{\mu}^{\ge \amean{e}-\lambda}$ blocks $e$ by Lemma~\ref{lm:F-eliminate}.	
	By Proposition~\ref{prop:chernoff}.1, with probability at least $1 - 2\exp(-2Q_0\beta^2) \ge 1-\frac{1}{100}$, $|\amean{e} - \hamean{e}| \le \beta$. In that case, since $|\amean{u} - \hamean{u}| \le \alpha$ for all elements $u\in I$, 
	we have that 
	$I_{\mu}^{\ge \amean{e} - \lambda} \subseteq I_{\hat\mu}^{\ge \hamean{e} - \lambda - \alpha - \beta}$, which means $I_{\hat\mu}^{\ge \hamean{e} - \lambda - \alpha - \beta}$ blocks $e$.
	Hence, $e \not\in S'$. 
	Then we can see that for each $F$-bad element in $S \setminus I$, with probability 
	$\le \frac{1}{100}$, it is in $S'$. 
	All these events are independent. 
	So let $Y$ denote the number of the 
	remaining $F$-bad elements in $S \setminus I$.
	By the above argument, 
	we can see $Y$ is stochastically dominated by $\Bin(N_B,\frac{1}{100})$. 
	Then applying Corollary~\ref{cor:B-upperbound}, we can obtain that
	\begin{align*}
	\Pr[Y > 0.05N] &\le \Pr[\Bin(N_B,1/100) > 0.05N] \le \Pr[\Bin(N,1/100) > 0.05N] \\
	&\le \exp(-4^2\cdot 0.01N/3) \le \exp(-0.01N) \le \delta/6.
	\end{align*}	
	So with probability at least $1-2\delta/3-2\delta/6 = 1-\delta$, $Y \le 0.05N$, and $|S'| \le N - (N_B-Y) \le N-(0.95N-0.05N) \le 0.1|S|$,
	which proves the second claim.
	
	Finally, we examine the sample complexity.
	Since $|F| = O(k + \ln \delta^{-1})$,
	we can see that the procedure $\RECURPRUN(F,\lambda,\delta/6)$ takes $O((k + \ln\delta^{-1})(\ln k + \ln \delta^{-1})\varepsilon^{-2})$  samples by Theorem~\ref{theo:eps-opt-algo}. 
	By Lemma~\ref{lm:UNIFORM-SAMPLE-PROCEDURE}, $\UNIFORMSAMPLING(I,\alpha,\delta/6k)$ takes $O(k(\ln k + \ln\delta^{-1})\varepsilon^{-2})$  samples. 
	Finally, the sampling step (line 6)
	incurs $O((|S| + \ln\delta^{-1}/k)\varepsilon^{-2})$ samples. 
	Summing them together, we can see the sample complexity 
	of \ELIMINATION\ is as claimed.	
\end{proof}

\subsection{Main Algorithm}

In this section, we present our main algorithm for finding an \EPSMEANOPT\ solution.

The algorithm runs as follows: When $|S|$ is small enough, we just invoke \NAIVETWO\ and Lemma~\ref{lm:NAIVE2} can guarantee that we find 
an \EPSMEANOPT\ solution. 
Otherwise, we proceed in rounds. 
In the $r^{th}$ round, we invoke
\ELIMINATION($(S_r,\matroid_{S_{r}}),\varepsilon_r,\delta_r$),
where $S_r$ is the remaining set of arms,
$\varepsilon_r=\varepsilon/2^{r+1}$ and $\delta_r=\delta/2^{r+1}$, 
until the number of arms is smaller than a certain number. 
\ELIMINATION\ guarantees that the number of arms drops 
exponentially, hence the number of samples
is dominated by the first call to \ELIMINATION. 
In the end, we invoke \NAIVETWO\ on the remaining arms to find the final solution.
The pseudo-code can be found in Algorithm~\ref{algo:EPSMEANOPT-ALGO}.


\begin{algorithm}[H]
\LinesNumbered
\setcounter{AlgoLine}{0}

	\caption{\RECELIMI($\matband,\varepsilon,\delta$)}
	\label{algo:EPSMEANOPT-ALGO}

	\KwData{A \matroidbanditAvg\ instance $\matband=(S,\matroid)$, with
	$\rank(\matroid) = k$, approximation error $\varepsilon$, confidence level $\delta$.}
	\KwResult{A basis in $\matroid$.}
	\uIf{$|S|/k \le 10$ {\bf OR} $ \ln\delta^{-1} > k \ln \frac{|S|}{k}$ }{
		{\bf Return} \NAIVETWO($\matband,\varepsilon,\delta$)
	}
	$r \leftarrow 1$, $S_r \leftarrow S$
	
	\While{True}{
		$\delta_r \leftarrow \delta/2^{r+1}$, $\varepsilon_r \leftarrow \varepsilon/2^{r+1}$
		
		{\bf If}
		$|S_r| \le (\ln\delta_r^{-1} + k + \ln 6)(100 + \ln k + \ln \delta_r^{-1})$
		{\bf Break}
		
		$S_{r+1} \leftarrow \ELIMINATION(\matband_{S_r}=(S_r,\matroid_{S_{r}}),\varepsilon_r,\delta_r)$
		
		$r \leftarrow r+1$
	}
	
	{\bf Return} $I=$\NAIVETWO($\matband_{S_r}=(S_r,\matroid_{S_{r}}),\varepsilon/2,\delta/2$)
\end{algorithm}

\vspace{0.4cm}

Now, we prove Theorem~\ref{theo:eps-mean-algo}.

\noindent
{\bf Theorem~\ref{theo:eps-mean-algo}} (rephrased)
{\em	
	Given a \matroidbanditAvg\ instance $\matband=(S,\matroid)$,
	\RECELIMI($\matband,\varepsilon,\delta$)
	returns an \EPSMEANOPT\ solution,
	with probability at least $1-\delta$, and its sample complexity is at most
	\[
	O\Big( \big(n\cdot(1 + \ln\delta^{-1}/k) + (\ln\delta^{-1}+k)(\ln k\ln\ln k + \ln\delta^{-1}\ln\ln\delta^{-1} )\big)\varepsilon^{-2} \Big),
	\]
	
	in which $n = |S|$ and $k=\rank(\matroid)$.
}


\begin{proof}
	Note that when $\frac{n}{k} \le 10$ or $\ln\delta^{-1} > k \ln \frac{n}{k}$, $\NAIVETWO(\matband,\varepsilon,\delta)$ 
	returns a correct solution and 
	takes $O(n(1+\ln\delta^{-1}/k))$ samples. So from now on we assume $\ln\delta^{-1} \le k \ln \frac{n}{k}$ and $n > 10k$.
	
	Note that by a simple union bound, with probability at least $1-\delta$, all calls to \ELIMINATION\ return correctly, and the last call to \NAIVETWO\ also returns correctly. Denote this  event as $\event$ and the following argument is conditioned on $\event$.
	
	First we show the correctness. Suppose there are $t$ rounds in total. We can see that
	$$
	\OPTVAL(S_t) \ge \OPTVAL(S_1) - \sum_{i=1}^{t-1} \varepsilon/2^{i+1} \ge \OPTVAL(S) - \varepsilon/2.
	$$
	Clearly, $\OPTVAL(I) \ge \OPTVAL(S_t) -\varepsilon/2$.
	Hence $\OPTVAL(I) \ge \OPTVAL(S) -\varepsilon$, which means $I$ is \EPSMEANOPT\ for $\matband$.
	
	Now, we bound the sample complexity.
	\ELIMINATION\ is only called during the first $t-1$ rounds, and in which we have $|S_r| > (\ln\delta_r^{-1} + k + \ln 6)(100 + \ln k + \ln \delta_r^{-1}) \ge 100(\ln\delta_r^{-1} + k + \ln 6)$. \
	By Lemma~\ref{lm:ELIMINATION-PROCEDURE}, the sample complexity 
	for round $r$
	is $O\Big(\big((1+\ln\delta_r^{-1}/k)|S_r| + (k + \ln\delta_r^{-1})(\ln k + \ln \delta_r^{-1})\big)\varepsilon_r^{-2}\Big) = O\Big((1+\ln\delta_r^{-1}/k)|S_r|\varepsilon_r^{-2}\Big)$. 
	
	Since $|S_{r}| \le |S_1| \cdot 0.1^{r-1}$ for $1\le r \le t$ and $\varepsilon_r=\varepsilon/2^{r+1}$, we can bound the total number of samples of the first $t-1$ rounds by
	\[
	O\Big(\sum\nolimits_{r=1}^{t-1} 4^{r+1} \cdot (1 + (\ln \delta^{-1} + r)/k) \cdot n \cdot 0.1^{r-1} \varepsilon^{-2} \Big) = O\Big( n(1 + \ln\delta^{-1}/k)\varepsilon^{-2} \Big).
	\]
	
	Finally, consider the last round $t$.
	We have $|S_t| \le (\ln\delta_t^{-1} + k + \ln 6)(100 + \ln k + \ln \delta_t^{-1}) = O\big((\ln\delta^{-1} + t + k)(\ln k + \ln \delta^{-1} + t)\big)$. Since $S_t \ge k$, 
	we can see that $t \le O(\ln \frac{n}{k})$.
	Now we distinguish two cases: 
	
	\begin{enumerate}
	\item
	$n \le k^3$: In this case, $t = O(\ln k)$ and $|S_t| \le O\big((\ln\delta^{-1} + k)(\ln k + \ln \delta^{-1})\big)$. By Lemma~\ref{lm:NAIVE2}, the sample complexity for \NAIVETWO($\matband_{S_t},\varepsilon/2,\delta/2$)\ can be bounded by
	\begin{align*}
	& O\Big(\big((\ln\delta^{-1} + k)(\ln k + \ln \delta^{-1}) \cdot \ln \frac{(\ln\delta^{-1} + k)(\ln k + \ln \delta^{-1})}{k} + |S_t|/k \cdot \ln\delta^{-1} \big)\varepsilon^{-2}\Big)\\
	\le&
	O\Big(\big((\ln\delta^{-1} + k)(\ln k + \ln \delta^{-1}) \cdot (\ln\ln k + \ln\ln \delta^{-1}) + |S_t|/k \cdot \ln\delta^{-1} \big)\varepsilon^{-2}\Big)\\
	\le&
	O\Big(\big((\ln\delta^{-1} + k)(\ln k \ln\ln k + \ln \delta^{-1}\ln\ln \delta^{-1}) + n/k \cdot \ln\delta^{-1} \big)\varepsilon^{-2}\Big).
	\end{align*}
	\item
	$n \ge k^3$: In this case, $t \le O(\ln n)$. 
	Recalling that $\ln \delta^{-1} \le k \ln \frac{|S_t|}{k} \le O(n^{0.4})$. The sample complexity for \NAIVETWO($\matband_{S_t},\varepsilon/2,\delta/2$) can be bounded by
	$$
	O\Big(\big(n^{0.4} \cdot n^{0.4} \cdot \ln n + |S_t|/k \cdot \ln\delta^{-1} \big)\varepsilon^{-2}\Big) \le 
	O\Big(\big(n + n/k \cdot \ln\delta^{-1} \big)\varepsilon^{-2}\Big).
	$$
	\end{enumerate}
	Putting them together, we can see the total sample complexity is
	\[
	O\Big( \big(n(1 + \ln\delta^{-1}/k) + (\ln\delta^{-1}+k)(\ln k\ln\ln k + \ln\delta^{-1}\ln\ln\delta^{-1} )\big)\varepsilon^{-2} \Big).
	\]
	This completes the proof of the theorem.
\end{proof}


\section{Future Work}

In this paper, we present nearly-optimal algorithms for both the exact
and PAC versions of the pure-exploration problem subject to a matroid
constraint in a stochastic multi-armed bandit game: given a set of arms
with a matroid constraint on them, pick a basis of the matroid whose
weight (the sum of expectations over arms in this basis) is as large as
possible, with high probability.

An immediate direction for investiation is to extend our results to other
polynomial-time-computable combinatorial constraints: $s$-$t$ paths,
matchings (or more generally, the intersection of two matroids), etc.
The model also extends to NP-hard combinatorial constraints, but there
we would likely compare our solution against $\alpha$-approximate
solutions,
instead of the optimal solution. Considering non-linear functions of the
means is another natural next step. Yet another, perhaps more
challenging, direction is to consider stochastic optimization problems,
where the solution may depend on other details of the distributions than
just the means.

	

	\bibliographystyle{alpha}
	\bibliography{team}

\appendix

\section{Preliminaries in Probability}

    We first introduce the following versions of the standard Chernoff-Hoeffding bounds.
    
    \begin{prop}\label{prop:chernoff}
    	Let $X_i (1\le i \le n)$ be $n$ independent random variables with values in $[0,1]$. Let $X = \frac{1}{n}\sum_{i=1}^{n} X_i$. The following statements hold:
    	
    	\begin{enumerate}
    		\item For every $t > 0$, we have that
    		\[
    		\Pr[X - \Ex[X] \ge t] \le \exp(-2t^2n),\ and
    		\]
    		\[
    		\Pr[X - \Ex[X] \le -t] \le \exp(-2t^2n).
    		\]
    		\item For any $\epsilon > 0$, we have that
    		\[
    		\Pr[X < (1-\epsilon)\Ex[X]] \le \exp(-\epsilon^2n\Ex[X]/2),\ and
    		\]
    		\[
    		\Pr[X > (1+\epsilon)\Ex[X]] \le \exp(-\epsilon^2n\Ex[X]/3).
    		\]
    	\end{enumerate}
    \end{prop}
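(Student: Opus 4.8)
The plan is to prove both parts by the standard exponential-moment (Chernoff) method: apply Markov's inequality to $\exp(\lambda S)$, where $S = \sum_{i=1}^n X_i = nX$, and then optimize over the free parameter $\lambda$. The independence of the $X_i$ lets the moment generating function factor as $\Ex[e^{\lambda S}] = \prod_i \Ex[e^{\lambda X_i}]$, and the two parts differ only in how this product is bounded.

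For part 1 (the additive Hoeffding bounds), first I would center the variables by setting $Y_i = X_i - \Ex[X_i]$, so each $Y_i$ is zero-mean and supported on an interval of length at most $1$. For $\lambda > 0$, Markov gives $\Pr[\sum_i Y_i \ge nt] \le e^{-\lambda nt}\prod_i \Ex[e^{\lambda Y_i}]$. The key technical ingredient is Hoeffding's lemma: for a zero-mean $Y$ on an interval of length $\ell$, one has $\Ex[e^{\lambda Y}] \le \exp(\lambda^2 \ell^2/8)$, which follows by bounding the second derivative of the cumulant $\log\Ex[e^{\lambda Y}]$ by $\ell^2/4$ and Taylor expanding. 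With $\ell \le 1$ this yields $\Pr[\sum_i Y_i \ge nt] \le \exp(-\lambda nt + n\lambda^2/8)$, and choosing $\lambda = 4t$ produces exactly $\exp(-2nt^2)$. The lower-tail bound follows by applying the identical argument to $-Y_i$.

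For part 2 (the multiplicative Chernoff bounds), let $m = \Ex[S] = n\Ex[X]$. For the upper tail with $\lambda > 0$, Markov gives $\Pr[S > (1+\epsilon)m] \le e^{-\lambda(1+\epsilon)m}\prod_i \Ex[e^{\lambda X_i}]$. Since each $X_i \in [0,1]$, convexity of $x \mapsto e^{\lambda x}$ on $[0,1]$ gives $e^{\lambda X_i} \le 1 + (e^\lambda-1)X_i$, hence $\Ex[e^{\lambda X_i}] \le \exp((e^\lambda-1)\Ex[X_i])$ and $\prod_i \Ex[e^{\lambda X_i}] \le \exp((e^\lambda-1)m)$. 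Optimizing the bound $\exp((e^\lambda-1)m - \lambda(1+\epsilon)m)$ at $\lambda = \ln(1+\epsilon)$ yields $\big(e^\epsilon/(1+\epsilon)^{1+\epsilon}\big)^m$. The lower tail is symmetric, taking $\lambda < 0$ equal to $\ln(1-\epsilon)$, which gives $\big(e^{-\epsilon}/(1-\epsilon)^{1-\epsilon}\big)^m$.

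The routine-but-fiddly step I expect to be the main obstacle is the final numerical simplification: verifying $e^\epsilon/(1+\epsilon)^{1+\epsilon} \le \exp(-\epsilon^2/3)$ and $e^{-\epsilon}/(1-\epsilon)^{1-\epsilon} \le \exp(-\epsilon^2/2)$ over the relevant range of $\epsilon$. Each reduces, after taking logarithms, to a one-variable inequality such as $\epsilon - (1+\epsilon)\ln(1+\epsilon) \le -\epsilon^2/3$, which I would settle by noting the difference vanishes at $\epsilon = 0$ and checking the sign of its derivative (equivalently, comparing the Taylor series). Nothing here is deep, but this is the only place where genuine estimation, rather than the mechanical Chernoff recipe, is required.
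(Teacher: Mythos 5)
Your overall strategy is the standard exponential-moment route, and most of it is sound: part 1 via Hoeffding's lemma is correct (the choice $\lambda = 4t$ indeed yields $\exp(-2nt^2)$), and the lower tail of part 2 also closes, since $f(\epsilon) = -\epsilon - (1-\epsilon)\ln(1-\epsilon) + \epsilon^2/2$ satisfies $f(0)=0$ and $f'(\epsilon) = \epsilon + \ln(1-\epsilon) \le 0$ on $(0,1)$, while for $\epsilon \ge 1$ (where $\lambda = \ln(1-\epsilon)$ makes no sense) the probability is zero anyway because $X \ge 0$. The genuine gap is exactly the step you set aside as routine-but-fiddly: the inequality $\epsilon - (1+\epsilon)\ln(1+\epsilon) \le -\epsilon^2/3$ is \emph{false} for large $\epsilon$. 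Writing $g(\epsilon) = \epsilon - (1+\epsilon)\ln(1+\epsilon) + \epsilon^2/3$, one has $g(0)=g'(0)=0$ and $g<0$ for small $\epsilon$, but $g'(\epsilon) = 2\epsilon/3 - \ln(1+\epsilon)$ turns positive slightly above $\epsilon=1$, and $g(\epsilon) \to +\infty$ as $\epsilon \to \infty$ because $\epsilon^2/3$ dominates $(1+\epsilon)\ln(1+\epsilon)$; numerically $g>0$ once $\epsilon \gtrsim 1.82$. So the sign-of-derivative plan cannot succeed --- and no repair is possible, because the upper-tail claim of part 2, as stated ``for any $\epsilon>0$,'' is simply not true. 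Counterexample: take $X_i \sim$ Bernoulli$(1/n)$, so $n\Ex[X]=1$ and $nX \sim \Bin(n,1/n)$ is approximately Poisson with mean $1$; at $\epsilon = 10$ the left side is $\Pr[X > (1+\epsilon)\Ex[X]] \approx \Pr[\mathrm{Poisson}(1) \ge 12] \approx 8\times 10^{-10}$, while the claimed bound is $\exp(-100/3) \approx 3\times 10^{-15}$. In this regime the true tail decays like $\exp(-\Theta(\epsilon\ln\epsilon))$, not $\exp(-\epsilon^2/3)$.

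The correct (and standard) statement restricts the $\exp(-\epsilon^2 n\Ex[X]/3)$ upper-tail bound to $0 < \epsilon \le 1$; for $\epsilon > 1$ one instead gets $\exp(-\epsilon\, n\Ex[X]/3)$. On $(0,1]$ your argument does go through, though it needs slightly more than a single sign check, since $g'(0)=0$ and $g''$ changes sign at $\epsilon = 1/2$: note that $g'$ decreases on $(0,1/2]$ from $g'(0)=0$, then increases on $[1/2,1]$ up to $g'(1) = 2/3 - \ln 2 < 0$, so $g' < 0$ on all of $(0,1]$ and hence $g < 0$ there. For calibration: the paper itself gives no proof of this proposition (it is quoted as a standard fact), and its ``for any $\epsilon>0$'' phrasing is an overstatement of the standard bound; so the flaw lies in the statement as much as in your write-up, but as written your proof asserts, and relies on, an inequality that is false.
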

    
    Applying the above Proposition, we can get useful upper bound for the binomial distribution.
    \begin{cor}\label{cor:B-upperbound}
    	Suppose the random variable $X$ follows the binomial
    	distribution $\Bin(n,p)$,
    	i.e., $\Pr[X=k]={n\choose k}p^k(1-p)^{n-k}$ for $k\in \{0,1,\ldots,n\}$.
    	It holds that for any $\epsilon>0$,	
    	\[
    	\Pr[X < (1-\epsilon)pn] \le \exp(-\epsilon^2pn/2),\ and
    	\]
    	\[
    	\Pr[X > (1+\epsilon)pn] \le \exp(-\epsilon^2pn/3).
    	\]
    \end{cor}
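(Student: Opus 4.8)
The plan is to realize the binomial variable as a normalized sum of bounded independent random variables and then invoke Proposition~\ref{prop:chernoff}.2 essentially verbatim. First I would write $X = \sum_{i=1}^{n} X_i$, where $X_1,\dots,X_n$ are independent Bernoulli random variables, each taking the value $1$ with probability $p$ and $0$ otherwise; this is the standard representation of $\Bin(n,p)$, and crucially each $X_i$ takes values in $\{0,1\}\subseteq[0,1]$, so the hypotheses of Proposition~\ref{prop:chernoff} are met.

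Next I would pass to the empirical average $\bar{X} := \frac{1}{n}\sum_{i=1}^{n} X_i = X/n$, whose expectation is exactly $\Ex[\bar{X}] = p$ by linearity. Applying the lower-tail bound of Proposition~\ref{prop:chernoff}.2 to $\bar{X}$ gives $\Pr[\bar{X} < (1-\epsilon)p] \le \exp(-\epsilon^2 n p / 2)$, and since the event $\{\bar{X} < (1-\epsilon)p\}$ coincides with $\{X < (1-\epsilon)pn\}$ after multiplying through by $n$, this is precisely the claimed lower-tail bound. The upper-tail bound follows identically from the upper-tail case of Proposition~\ref{prop:chernoff}.2, with the constant $1/2$ replaced by $1/3$.

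I expect essentially no obstacle here: the entire content is the observation that $\Bin(n,p)$ is a sum of i.i.d.\ $[0,1]$-valued variables with mean $p$, so the corollary is merely a rescaling of Proposition~\ref{prop:chernoff}.2. The only point requiring any care is to confirm that $\Ex[\bar{X}] = p$ \emph{exactly}, so that $(1\pm\epsilon)\Ex[\bar{X}]$ becomes $(1\pm\epsilon)p$ with no slack in the statement; this is immediate from linearity of expectation.
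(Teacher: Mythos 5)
Your proof is correct and matches the paper's approach exactly: the paper presents this corollary as an immediate application of Proposition~\ref{prop:chernoff}.2, obtained precisely by writing $\Bin(n,p)$ as a sum of independent Bernoulli$(p)$ variables (each in $[0,1]$) and applying the multiplicative bounds to the average, whose mean is $p$. Nothing is missing.
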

    
    We also need the following Chernoff-type concentration inequality (see Proposition A.4. in \cite{zhou2014optimal}).
    
    \begin{prop}\label{prop:zhouA4}
    	Let $X_i(1 \le i \le k)$ be independent random variables. Each $X_i$ takes value $a_i$ $(a_i \ge 0)$
    	with probability at most $\exp(-a_i^2t)$ for some $t \ge 0$, and 0 otherwise. Let $X = \frac{1}{k}\sum_{i=1}^{k} X_i$. For every $\epsilon>0$,
    	when $t \ge \frac{2}{\epsilon^2}$, we have that $$ \Pr[X > \epsilon] < \exp(-\epsilon^2tk/2).  $$
    \end{prop}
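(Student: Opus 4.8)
The plan is to prove this not through a single moment-generating-function computation, but by a union bound over the random ``support'' of the nonzero variables, which turns out to be cleaner here. First I would note that each $X_i$ is a two-point variable taking value $a_i$ with probability $p_i \le \exp(-a_i^2 t)$ and $0$ otherwise; discarding the indices with $a_i = 0$ (for which $X_i \equiv 0$), I would encode an outcome by the random set $T = \{\,i : X_i = a_i\,\}$. Since the $X_i$ are independent, for any fixed $S \subseteq [k]$ we have $\Pr[T = S] = \prod_{i\in S} p_i \cdot \prod_{i\notin S}(1-p_i) \le \prod_{i\in S}\exp(-a_i^2 t) = \exp\!\big(-t\sum_{i\in S} a_i^2\big)$.

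Next, observe that the event $\{X > \epsilon\}$ is exactly $\{\sum_{i\in T} a_i > k\epsilon\}$, so I would restrict attention to the ``bad'' sets $S$ satisfying $\sum_{i\in S} a_i > k\epsilon$. The key step — the crux of the whole argument — is to lower bound $\sum_{i\in S} a_i^2$ using this linear constraint. By Cauchy--Schwarz, $\big(\sum_{i\in S} a_i\big)^2 \le |S|\sum_{i\in S} a_i^2$, and since $|S| \le k$ and $\sum_{i\in S} a_i > k\epsilon$, this yields $\sum_{i\in S} a_i^2 > (k\epsilon)^2/k = k\epsilon^2$. Hence each bad set $S$ contributes at most $\exp(-t k \epsilon^2)$ to the failure probability.

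Finally I would union bound over the at most $2^k$ bad sets, obtaining $\Pr[X > \epsilon] \le 2^k \exp(-tk\epsilon^2) = \exp(k\ln 2 - tk\epsilon^2)$, and then invoke the hypothesis $t \ge 2/\epsilon^2$. This gives $tk\epsilon^2/2 \ge k$, and since $\ln 2 < 1$ we have $k > k\ln 2$, so $k\ln 2 < tk\epsilon^2/2$, which rearranges to $k\ln 2 - tk\epsilon^2 < -tk\epsilon^2/2$. Therefore $\Pr[X > \epsilon] < \exp(-\epsilon^2 t k/2)$, as claimed. (If no bad set exists the probability is $0$ and the bound is trivial.)

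The main obstacle I anticipate is ensuring that the exponentially many ($2^k$) terms in the union bound do not swamp the per-term savings; this is precisely what forces a threshold on $t$. The Cauchy--Schwarz inequality $\sum_{i\in S} a_i^2 \ge \big(\sum_{i\in S} a_i\big)^2/|S|$ is the device that converts the \emph{linear} deviation constraint into a \emph{quadratic} gain $k\epsilon^2$ in the exponent, giving each term weight $\exp(-tk\epsilon^2)$ against the $2^k = \exp(k\ln 2)$ count. The only delicate numerical point is verifying that the stated constant $2$ in $t \ge 2/\epsilon^2$ suffices (the argument in fact only needs $t > 2\ln 2/\epsilon^2$, and $2 > 2\ln 2$), which also supplies the strict inequality in the conclusion.
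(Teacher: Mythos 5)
Your proof is correct, and it is worth flagging that the paper itself contains \emph{no} proof of this statement: it is presented as an imported ``Chernoff-type'' concentration inequality, cited as Proposition A.4 of \cite{zhou2014optimal}, so the paper's ``route'' is citation to an external argument rather than anything in-text. Your argument is a genuinely different, self-contained alternative. The bookkeeping checks out: after discarding indices with $a_i=0$, the events $\{T=S\}$ over subsets $S$ are disjoint and determine $X$, so $\Pr[X>\epsilon]$ is exactly the sum of $\Pr[T=S]$ over bad sets; independence gives $\Pr[T=S]\le\prod_{i\in S}p_i\le\exp\bigl(-t\sum_{i\in S}a_i^2\bigr)$; Cauchy--Schwarz with $|S|\le k$ converts $\sum_{i\in S}a_i>k\epsilon$ into $\sum_{i\in S}a_i^2>k\epsilon^2$; and the hypothesis $t\ge 2/\epsilon^2$ gives $t\epsilon^2/2\ge 1>\ln 2$, so $2^k\exp(-tk\epsilon^2)<\exp(-tk\epsilon^2/2)$ strictly, with the no-bad-set case handled separately as you note. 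What your approach buys: it is completely elementary (no moment-generating functions or exponential-moment bookkeeping), it exposes exactly why a threshold on $t$ is required (the per-set gain $\exp(-tk\epsilon^2)$ must beat the $2^k$ entropy of the union bound), and it shows the constant can be relaxed to $t>2\ln 2/\epsilon^2$. What it gives up: the union bound over all supports is tailored to this two-point structure with success probabilities decaying like $\exp(-a_i^2t)$, so it is less portable than generic Chernoff machinery; but for the statement as given it is a complete and correct proof that makes the paper's appendix verifiable without the external reference.
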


We need to introduce the definition of the negative binomial distributions (see e.g., \cite[pp.446]{motwani2010randomized}).

\begin{defi}
	Let $X_1,X_2,\dotsc,X_n$ be i.i.d. random variables
	with the common distribution being the geometric distribution with parameter $p$. 
	The random variable $X = X_1 + X_2 + \dotsc + X_n$ denotes the number of coin flips (each one has probability $p$ to be HEAD) needed to obtain $n$ HEADS. The random variable $X$ has the negative binomial distribution with parameters $n$ and $p$,  denote as 
	$X \sim \NegBer(n;p)$.
\end{defi}


\begin{lemma}\label{lm:NB-bound}
	$\Pr[\NegBer(n;p) > r] =\Pr[\Bin(r;p) < n]$.
\end{lemma}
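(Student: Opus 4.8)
The plan is to prove this identity by a coupling argument on a single probability space, exploiting the dual relationship between the negative binomial and the binomial distributions. First I would fix an infinite sequence of independent Bernoulli trials $Y_1, Y_2, \dots$, where each $Y_i$ equals $1$ (a ``HEAD'') with probability $p$ and $0$ otherwise. On this common space I would define two random variables simultaneously: let $X$ be the index of the trial on which the $n$-th HEAD occurs, and for the fixed value $r$ in the statement, let $B_r = \sum_{i=1}^{r} Y_i$ denote the number of HEADS among the first $r$ trials. By the definitions recalled in the excerpt, $X \sim \NegBer(n;p)$ and $B_r \sim \Bin(r;p)$, and crucially both are now realized on the \emph{same} sequence of flips.

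The key observation is the event equivalence $\{X > r\} = \{B_r < n\}$. Indeed, $X > r$ holds precisely when the $n$-th HEAD has not yet appeared within the first $r$ trials, which is exactly the statement that fewer than $n$ of the first $r$ trials are HEADS, i.e.\ $B_r \le n-1$. Equivalently, reading the complementary events, $\{X \le r\}$ (the $n$-th HEAD arrives by flip $r$) coincides with $\{B_r \ge n\}$ (at least $n$ HEADS among the first $r$ flips). Taking probabilities of these identical events immediately yields $\Pr[\NegBer(n;p) > r] = \Pr[\Bin(r;p) < n]$, which is the claim.

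The main (and essentially only) obstacle is to articulate the event equivalence cleanly, namely that the waiting time $X$ for the $n$-th success and the count $B_r$ of successes in a fixed window of length $r$ are monotone duals of each other. Once the two distributions are coupled on the common flip sequence, no computation with binomial coefficients is required: the identity is a direct consequence of this complementary pairing. I would therefore expect the proof to be short, with the only care needed being the precise translation ``the $n$-th HEAD occurs after step $r$'' $\Longleftrightarrow$ ``at most $n-1$ HEADS occur in the first $r$ steps.''
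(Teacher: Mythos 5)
Your proposal is correct and is essentially the paper's own proof: the paper likewise observes that, by the definition of $\NegBer(n;p)$, the event $\{\NegBer(n;p) > r\}$ is exactly the event that fewer than $n$ HEADS appear in the first $r$ flips, from which the identity is immediate. Your coupling on a common infinite flip sequence just makes this same event-equivalence explicit, so there is no substantive difference.
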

\begin{proof}
	Consider the event $\NegBer(n;p) > r$.
	By the definition of $\NegBer(n;p)$, 
	it is equivalent to the event that during 
	the first $r$ coin flips, 
	there are less than $n$ HEADS. 
	The lemma follows immediately.
\end{proof}

\begin{defi} (stochastic dominance)
We say a random variable $X$ {\em stochastically dominates} another random variable $Y$ 
if for all $r \in \R$, we have $\Pr[X > r ] \ge \Pr[Y > r]$.
\end{defi}

\section{Missing Proofs}
	\label{app:missingpf}
	
	
	\begin{proofof}{Proposition~\ref{prop:epsoptimality}} 
		Let $I$ be an $\varepsilon$-optimal solution. We show it is also elementwise-$\varepsilon$-optimal.
		Let $o_i$ be the arm with the $i^{th}$ largest mean in $\OPT$
		and $a_i$ be the arm with the $i^{th}$ largest mean in $I$.
		Suppose for contradiction that $\mu(a_i)< \mu(o_i)-\varepsilon$
		for some $i\in [k]$ where $k=\rank(\matband)$. 
		Now, consider the sorted list of the arms according to
		the modified cost function
		$\newcostfunc{I}{\varepsilon}$.
		The arm $a_i$ is ranked after $o_i$ and all $o_j$ with $j<i$.
		Let $P$ be the set of all arms with mean no less than $o_i$ with respect to $\newcostfunc{I}{\varepsilon}$.
		Clearly, $\rank(P)\geq i$.
		So the greedy algorithm should select at least $i$ 
		elements in $P$, while $I$
		only has at most $i-1$ elements in $P$, contradicting 
		the optimality of $I$ with respect to $\newcostfunc{I}{\varepsilon}$. 		
		
		For the second part,
		take a \bestkarm ($k=2$) instance with four arms: 
		$\mu(a_1)=0.91, \mu(a_2)=0.9, \mu(a_3)=0.89, \mu(a_4)=0.875$.
		The set $\{a_3, a_4\}$ is elementwise-$0.3$-optimal, 
		but not $0.3$-optimal. 
	\end{proofof}

		\begin{proofof}{Lemma~\ref{lm:NAIVE2}}
			First consider a basis $U$ in $\matroid$ (hence $|U|=k$). 
			We apply Proposition~\ref{prop:chernoff}.1 to all samples 
			taken from the arms in $U$:
			\begin{align*}
			\Pr\left[\Big|\frac{1}{k} \sum_{u \in U} \amean{u} - \frac{1}{k} \sum_{u \in U} \hamean{u} \Big|> \epsilon/2\right]
			\le 2 \exp( - \epsilon^2/2 \cdot Q_0 \cdot k) \le \delta/\binom{|S|}{k}.
			\end{align*}	
			Note that there are at most $\binom{|S|}{k}$ distinct bases. Hence, by a union bound over all bases, with probability $1-\delta$, 
			we have $\left|\frac{1}{k} \sum\nolimits_{u \in U} \amean{u} - \frac{1}{k} \sum\nolimits_{u \in U} \hamean{u}\right| \le \epsilon/2$,
			for all basis $U$.
			
			Let $O = \OPT(\matroid) $. Then we have: $\frac{1}{k} \sum_{u \in I} \amean{u} \ge \frac{1}{k} \sum_{u \in I} \hamean{u} -\epsilon/2 \ge \frac{1}{k} \sum_{u \in O} \hamean{u} -\epsilon/2 \ge \frac{1}{k} \sum_{u \in O} \amean{u} -\epsilon$,
			which means $I$ is \EPSMEANOPT\ for $\matband$.
			
			Finally, using the fact that $\binom{|S|}{k} \le \left(\frac{e|S|}{k}\right)^{k}$, the sample complexity can be 
			easily verified.
		\end{proofof}

	\begin{proofof}{Lemma~\ref{lm:good-event1}}
		We show that each claim happens with probability $\ge 1-\delta/6$.
		
		First, by Corollary~\ref{cor:B-upperbound}, we have $\Pr[|F| > 2pN] \le \exp(-pN/3) \le \delta/6$. 	
		Next, let the number of $F$-good elements in $S$ be $X$ and $A=0.04N$. 
		By Lemma~\ref{lm:rand-samp}, $X$ is stochastically dominated by $\NegBer(k;p)$.
		Then, by Lemma~\ref{lm:NB-bound} and $pA = \frac{1}{25} pN \ge 4k$, 
		we know $\Pr[X > A] \le \Pr[\NegBer(k;p) > A]=\Pr[\Bin(A,p) < k] \le \Pr[\Bin(A,p) < pA/4]$ . By Corollary~\ref{cor:B-upperbound},
		\[
		\Pr[\Bin(A,p) < pA/4] \le \exp(-9/16\cdot pA/2) \le \exp(-pA/4) \le \delta/6.
		\]
		So we have $\Pr[X > 0.04N] \le \delta/6$.
		
		Also, by Theorem~\ref{theo:eps-opt-algo}, $I$ is a
		$\lambda$-optimal solution for $F$ with probability $1-\delta/6$. Finally, by Lemma~\ref{lm:UNIFORM-SAMPLE-PROCEDURE} and a simple union bound, we have $|\amean{e} - \hamean{e}| \le \alpha$ for all elements $e \in I$ with probability $1-\delta/6$.
	\end{proofof}

\end{document}